\providecommand{\Kth}[1]{\ensuremath{#1^{\text{th}}}\xspace}
\providecommand{\SET}[1]{\ensuremath{\left\{ #1 \right\}}\xspace}
\providecommand{\Set}[2]{\ensuremath{\SET{#1 \mid #2}}\xspace}
\providecommand{\PROB}{\ensuremath{{\mathbb{P}}}\xspace}
\providecommand{\Prob}[2][]{\ensuremath{%
\ifthenelse{\equal{#1}{}}{\PROB[#2]}{\PROB_{#1}[#2]}}\xspace}
\providecommand{\Expect}[2][]{\ensuremath{%
\ifthenelse{\equal{#1}{}}{\mathbb{E}}{\mathbb{E}_{#1}}%
\left[#2\right]}\xspace}
\providecommand{\Event}[1]{\ensuremath{\mathcal{E}_{#1}}\xspace}
\def\QED{{\phantom{x}} \hfill \ensuremath{\rule{1.3ex}{1.3ex}}}
\newcommand{\universe}{\mathcal{N}}
\newcommand{\ybf}{\mathbf{y}}
\newcommand{\cons}{\nu}
\newcommand{\raresum}{\rho}
\newcommand{\greedymincover}{\textsc{GreedyMinCover}}
\newcommand{\freqpi}{\Pi^F}
\newcommand{\rarepi}{\Pi^R}
\newcommand{\choicetype}[2]{\Theta_{#1}(#2)}
\newcommand{\edgetype}[2]{\Theta_{#1 \oplus #2}}
\newcommand{\AlgDAG}{\textsc{AlgDAG}\xspace}
\newcommand{\AlgIE}{\textsc{AlgIE}\xspace}
\newcommand{\hatG}{\widehat{G}}
\newcommand{\hatE}{\widehat{E}}
\newcommand{\hatV}{\widehat{V}}
\newcommand{\hatp}{\widehat{p}}
\newcommand{\hatpv}{\widehat{\bm{p}}}
\newcommand{\hatq}{\widehat{q}}
\newcommand{\hatev}{\widehat{\bm{e}}}
\newcommand{\numel}{n}
\newcommand{\numitems}{n_0}
\newcommand{\numsamples}{m}
\newcommand{\probboundc}{c}
\newcommand{\familysetnodepre}{\mathcal{{P}}}
\newcommand{\familysetnodeprealg}{\widehat{\mathcal{P}}}
\newcommand{\familysetnodepreF}{\mathcal{{P}}^{\textrm{F}}}
\newcommand{\coveralg}{\widehat{\mathcal{C}}}
\newcommand{\offerset}{S}
\newcommand{\familyminus}{\mathcal{S}^{-}}
\newcommand{\collectsets}{\mathcal{L}}
\newcommand{\edgeprob}[3][]{%
    e_{#2 \oplus #3}^{#1}
}
\newcommand{\hatedgeprob}[3][]{%
    \widehat{e}_{#2 \oplus #3}^{#1}
}
\newcommand{\toppi}{A}
\newcommand{\GENMODEL}{General Model\xspace}
\newcommand{\genmodel}[2]{$(#1,#2)$-\GENMODEL}
\newcommand{\RANDMODEL}{Random Model\xspace}
\newcommand{\randmodel}[2]{$(#1,#2)$-\RANDMODEL}
\begin{document}



\def\otherDAG{\textbf{Other uses of DAGs in non-parametric choice models.} In this work, one of our key contributions is the use of DAGs to represent non-parametric choice models. Our use of DAGs differs significantly from that in past work on choice models \citep{jagabathula2018partial, jagabathula2022personalized}. In the model presented by \citet{jagabathula2018partial}, each customer type is characterized by a partial order over items, representing a collection of pairwise preference relations. This partial order is conveniently visualized using a DAG, with each node corresponding to an item. A directed edge from item $a$ to item $b$ indicates that item $a$ is preferred over item $b$ in the partial order. The full ranking of a customer is drawn randomly from a distribution specific to the customer's segment, consistent with the partial order. The customer then selects the highest-ranked item from their consideration set. The primary goal of the research in \citet{jagabathula2018partial} is to estimate this model using offline data, in contrast to the active learning approach in our work. \citet{jagabathula2022personalized} extend the DAG concept in \cite{jagabathula2018partial} to include promotions. Here, each item is represented by two nodes: one for its full price and another for its discounted version, thus optimizing promotion planning through this enhanced DAG framework.

Our use of DAGs as a modeling tool is fundamentally different from their uses, in that in our setting, the DAG is used to represent the choice model of all customer types at once, rather than just one customer type. That is, in our definition of DAGs, by labeling nodes with sets (rather than items), we manage to represent the entire choice model, rather than partial orders. }

\def\willMa{Another  
class of models, which also overcome the IIA restriction, is based on Markov Chains {\citep{blanchet2016markov, feldman2017revenue, csimcsek2018expectation, ma2023assortment}}.
Under such models, the distribution over rankings $\pi$ is defined as follows. The states of the Markov Chain are the items (plus a no-purchase option). The top choice is the starting node, drawn from a distribution $(\lambda_j)$ over states $j$. Subsequently, the next state is always a node $i$ chosen with conditional probability $\rho_{j,i}$. The ranking $\pi$ is obtained as the order in which nodes were visited for the first time (since they will typically be visited multiple times).
This model could in principle be encoded in a DAG --- however, because all types have positive probability of occurring, the corresponding DAG would have $2^n$ nodes.} 
\RUNAUTHOR{}

\RUNTITLE{Active Learning for Non-Parametric Choice Models}

\TITLE{Active Learning for Non-Parametric Choice Models}

\ARTICLEAUTHORS{%
\AUTHOR{Fransisca Susan, Negin Golrezaei}
\AFF{MIT Sloan School of Management, Operations Management, \EMAIL{fsusan@mit.edu, golrezae@mit.edu}}
\AUTHOR{Ehsan Emamjomeh-Zadeh}
\AFF{Meta Platforms, Inc., \EMAIL{ehsan7069@gmail.com}}
\AUTHOR{David Kempe}
\AFF{University of Southern California, Los Angeles, \EMAIL{david.m.kempe@gmail.com}}
} 

\ABSTRACT{

{We study the problem of actively learning a non-parametric choice model based on consumers' decisions. We present a negative result showing that such choice models may not be identifiable. To overcome the identifiability problem, we introduce a directed acyclic graph (DAG) representation of the choice model. This representation provably encodes all the information about the choice model which can be inferred from the available data, in the sense that it permits computing all choice probabilities. 

 We establish that given exact choice probabilities for a collection of item sets, one can reconstruct the DAG.  However, attempting to extend this methodology to estimate the DAG from noisy choice frequency data obtained during an active learning process leads to inaccuracies. To address this challenge, we present an inclusion-exclusion approach that effectively manages error propagation across DAG levels, leading to a more accurate estimate of the DAG.
Utilizing this technique, our algorithm estimates the DAG representation of an underlying non-parametric choice model. The algorithm operates efficiently (in polynomial time)  when the set of frequent rankings is drawn uniformly at random.  It learns the distribution over the most popular items among frequent preference types by actively and repeatedly offering assortments of items and observing the chosen item. We demonstrate that our algorithm more effectively recovers a set of frequent preferences on both synthetic and publicly available datasets on consumers' preferences, compared to corresponding non-active learning estimation algorithms. These findings underscore the value of our algorithm and the broader applicability of active-learning approaches in modeling consumer behavior.}

}

\KEYWORDS{Non-parametric choice models, Active learning, DAG representation}


\maketitle

\section{Introduction}
\label{sec:intro}
Choice models are used by firms to capture consumers' preferences, and predict their decisions. They can help with important operational decisions, including demand forecasting (e.g., \cite{mcfadden1977demand}, \cite{mcgill1999revenue}), inventory planning (e.g., \cite{ryzin1999relationship}, \cite{gaur2006assortment, aouad2019approximation}), assortment optimization (e.g., \cite{talluri2004revenue}, \cite{rusmevichientong2010dynamic}, \cite{davis2014assortment, golrezaei2014real}), and product ranking optimization (e.g., \cite{derakhshan2020product, niazadeh2021online, golrezaei2021learning}). Among choice models, parametric choice models --- and in particular random utility maximization models --- have received the most attention by far. While parametric choice models
provide concise representations of consumer preferences and decisions, they impose certain structures on consumer preferences, which may not be valid (\cite{jagabathula2019limit}). If the structures are invalid, imposing them can  lead to model misspecification and inaccurate decision making. These drawbacks have motivated the  introduction and study 
 of non-parametric choice models \citep{farias2013nonparametric, jagabathula2017nonparametric, paul2018assortment}). 
Non-parametric choice models, however, are challenging to estimate using only offline collected transaction data since such data may not have enough variation in the offered sets of items.  

In many settings, particularly on online platforms, one can go beyond using offline data. 
On these platforms, one can influence the data collection process, obtaining data sets more suitable for estimating non-parametric choice models. The process of influencing data collection, which is known as \emph{active learning}, has been widely used in various contexts; see \cite{settles2009active} and \cite{7bb36d742bf34b9d854aa0778ec47903} for surveys, and Section~\ref{sec:related-work} for a more detailed discussion of related work. In this process, the platform can dynamically change the set of items offered to arriving consumers and observe their choice. However, it is still unclear whether active learning can fully overcome the challenge of estimating non-parametric choice models. This motivates our main research question: \emph{How can we estimate a non-parametric choice model by relying on active learning? Can we estimate non-parametric choice models with a polynomial-size dataset, which is obtained through an efficient active learning process?}   

To answer these questions, we consider an online platform with $\numel$ items. The platform faces distinct types of consumers: each consumer type is characterized by a  preference ranking over the $\numel$ items and the fraction of the population having this type. 
Consumers choose according to their types/rankings, and their types are not observable by the platform. The model is presented more formally and in detail in Section~\ref{sec:model}.

To learn the probability distribution over the rankings, the platform engages in an active learning process. In every round, a consumer (whose unobservable type is drawn based on the unknown population distribution) arrives. Upon the arrival of the consumer, based on past observations, the platform decides on the set (assortment) of items to offer. The consumer then chooses the item she ranks highest among the items offered. The platform observes the choice made by the consumer; then, the process repeats with another consumer. 

\subsection{Our Contributions}
\textbf{Indistinguishability.} One of the main challenges in learning non-parametric choice models is the identifiability problem: the data may not uniquely identify the choice model. This problem clearly exists when the choice model is estimated from \emph{offline} transaction data that contains the choices made by consumers only for some pre-specified offered sets; for example, if all consumers were offered the same set $S$, the algorithm can only learn the distribution of the most preferred item in $S$. 
 With active learning, however, the algorithm can introduce heterogeneity in the offered sets. This raises the following question: Is it possible to learn \emph{every} non-parametric choice model using active learning? In Section~\ref{sec:indistinguishable}, we show that even with active learning, regardless of how many consumers the algorithm interacts with, the choice model may not be uniquely identifiable. In particular, we determine simple conditions under which two different non-parametric choice models are information-theoretically \emph{indistinguishable} from each other, no matter how many and what assortments are offered to arriving consumers.

\textbf{Directed Acyclic Graph representations of non-parametric choice models.} 
In light of our indistinguishability result, in Section~\ref{sec:dag-rep}, we provide a novel representation of non-parametric choice models. This representation, which we refer to as Directed Acyclic Graph (DAG) representation, can always be uniquely identified, assuming enough samples with suitably chosen choice sets. The DAG contains a node for each set $A$ of items such that at least one consumer type ranks all items in $A$ (in any order) ahead of all items not in $A$; thus, the nodes correspond to all possible prefixes of consumer rankings. The node $A$ is labeled with the total probability of all types whose rankings have $A$ as a prefix. All edges are of the form $(A, A \cup \SET{z})$ for an item $z \notin A$, and exist whenever there exists a consumer type ranking all of $A$ in the first $|A|$ positions (in any order), followed by $z$ as the next item.
For any $j$, the distribution over sets $A$ of cardinality $|A| = j$ captures the distribution over top-$j$ most preferred item sets; hence, it can be used in various decision-making processes, including inventory planning and product design decisions.
{More detail on such uses can be found in Section~\ref{sec:approximation-implications}.}

{\textbf{Computing the choice probabilities using the DAG.}
One of the key reasons for using the DAG representation is that in addition to giving an intuitive visual representation with good downstream processing properties, it still fully encodes all available information about the choice model. Specifically, as we will demonstrate in  Section~\ref{sec:choice_DAG}, when provided with a DAG representing a non-parametric choice model, one can --- in polynomial time --- calculate the probability of selecting an item $z$ from a given set $S$ of items. Since the only source of information for constructing the DAG representation is these choice frequencies, the DAG representation encapsulates as much information as can be gleaned from the available data.}

{\textbf{Computing the DAG using  exact choice probabilities.} Given the desirability of having access to a DAG representation, we show --- in Section~\ref{sec:construct_DAG_exact} --- that given the exact choice probabilities, an efficient algorithm can construct the DAG. The algorithm constructs the DAG iteratively, by increasing size of the prefix $A$. When computing the relevant probabilities for larger prefixes $A$, it uses the probabilities of smaller prefixes $A'$ whose probabilities were computed in earlier iterations.}

{\textbf{Active learning of the DAG representation.} 
Section~\ref{sec:construct_DAG_exact} shows how to construct the DAG when exact choice probabilities are available. In practice, however, exact probabilities are inaccessible --- instead, choice \emph{frequencies} will be inferred from data. As we show, even small misestimates of choice frequencies have the potential to accumulate and lead to significant misestimates of the model. 
Concretely, this issue arises because probability estimates for larger prefixes use the estimates for smaller prefixes, which in turn use estimates for even smaller prefixes, etc., resulting in an inclusion-exclusion calculation. This type of dependency can lead to exponential amplification of estimation errors.

Our primary technical contribution lies in our  method of selecting a significantly smaller and less ``obvious'' collection of sets for inclusion-exclusion operations. We achieve this by employing a set cover approximation algorithm on carefully chosen prefixes of $A$. When the set covers remain sufficiently small (of logarithmic size), the algorithm can identify the correct DAG with high accuracy using only a polynomial number of active queries, while still leading to only limited error propagation across different levels. Among others, the set covers are logarithmically small in the number $\numel$ of items with high probability when the frequent items are uniformly random.
When the set covers become larger, our algorithm exhibits adaptability. It can respond by using additional queries or by diagnosing situations where outputs obtained with few queries may not be reliable. This distinguishing feature sets our approach apart from parametric models, under which inaccuracies may arise if modeling assumptions are violated. The algorithm and its detailed analysis  are comprehensively discussed in Section~\ref{sec:dag-algo}.}

\textbf{Value of active learning via a case study.} To evaluate the accuracy and simplicity of our algorithm, we test it empirically on synthetically generated  choice models as well as ranked preference parameters inferred from a Sushi preference data set (\cite{kamishima2003nantonac}, \cite{kamishima2005supervised}); the experiments are presented in Section~\ref{sec:experiment}. We show that empirically, the number of data points needed to estimate the DAG representation of a choice model accurately is smaller than the theoretical bound, which shows the robustness of our algorithm. We also observe that our algorithm significantly outperforms a non-active choice model estimation algorithm (\cite{farias2013nonparametric}), while also using a smaller number of queries. This shows the value of our algorithm and the active learning approach in estimation.

\section{Related Work}
\label{sec:related-work}
\textbf{Parametric choice models.} There are two general approaches to modeling consumer preferences: parametric and non-parametric models. Parametric choice models specify some functional form that connects related attributes and price to utility values and choice probabilities. One example is a choice model that assumes independent demand for each product; such a model does not capture substitution effects between similar products in the offered assortment \citep{mahajan1999retail}. Another example is the Multinomial Logit model (MNL), a parametric choice model that is commonly used in marketing, economics (see \cite{ben1985discrete} and \cite{mahajan1999retail}), and revenue management \citep{mcfadden1973conditional,ben1985discrete}. MNL has the IIA (independence of irrelevant alternatives) property: the odds of choosing one item over another do not depend on whether or not a third item is present in the assortment. This property is frequently unrealistic, especially when products exhibit complementarities. Other examples of parametric choice models, such as the generalized nested logit model \citep{wen2001generalized} and mixed logit model \citep{ben1985discrete}, overcome the IIA restriction.

{Another  
class of models, which also overcome the IIA restriction, is based on Markov Chains\footnote{Somewhat related are cascade choice models \citep{aggarwal:feldman:muthukrishnan:pal,kempe2008cascade, golrezaei2021learning}, originally defined to model click probabilities in sponsored search ads. Cascade models capture a consumer going through items sequentially until choosing an item to buy or exiting the process.} {\citep{blanchet2016markov, feldman2017revenue, csimcsek2018expectation, ma2023assortment}}.
Under such models, the distribution over rankings $\pi$ is defined as follows. The states of the Markov Chain are the items (plus a no-purchase option). The top choice is the starting node, drawn from a distribution $(\lambda_j)$ over states $j$. Subsequently, the next state is always a node $i$ chosen with conditional probability $\rho_{j,i}$. The ranking $\pi$ is obtained as the order in which nodes were visited for the first time (since they will typically be visited multiple times).
This model could in principle be encoded in a DAG --- however, because all types have positive probability of occurring, the corresponding DAG would have $2^n$ nodes. }

Historically, there is a vast literature on optimizing the assortment and price based on some parametric choice models. This includes the seminal work by \cite{talluri2004revenue} that infers a revenue-ordered set (a set containing a certain number of items with the highest revenue) as an optimal assortment for the deterministic MNL model, the work of \cite{rusmevichientong2010dynamic} determining the optimal assortment for the MNL model with a capacity constraint, the work of \cite{rusmevichientong2012robust} who develop an optimal assortment for a robust set of likely parameters of choice models, and the work of \cite{davis2014assortment} who optimize the assortment for the nested logit model. 
Implicit in this general approach is the caveat that one should fit the \emph{right} parametric choice model to data before making predictions or decisions; this is a difficult problem since the implicit pre-specified structures in the parametric choice models might not be true in practice. There is a huge risk of model mis-specification, which will lead to inaccurate decision making downstream. Thus, there is a trade-off between specification and estimation:  while a complex model can approximate a wider range of choice behaviors and hence may have smaller specification error, in return, it may have big estimation errors when estimated {without sufficient} data.

\textbf{Non-parametric choice models.} Non-parametric choice models consider distributions over all rankings. They have risen in popularity due to the increased availability of data; see \cite{farias2013nonparametric}, \cite{van2015market}, \cite{van2017expectation}, \cite{haensel2011estimating}, \cite{jagabathula2017nonparametric}, \cite{aouad2019approximation}, \cite{honhon2012optimal}, and \cite{rusmevichientong2010dynamic}. 
Our work is most similar to that of \cite{farias2013nonparametric}, \cite{van2015market} and \cite{van2017expectation}, although all of them only use \emph{offline} transaction data to estimate the best fitting choice model. Specifically, \cite{farias2013nonparametric} estimate the sparsest choice model consistent with the available transaction data; \cite{van2015market} start from a parsimonious set of preferences, then iteratively add new preferences that increase the likelihood value of the data using a column generation based procedure; \cite{van2015market} develop an efficient and easy-to-implement expectation maximization method that finds the non-parametric choice model with the highest likelihood. 

Our work is different in one main aspect: we assume that the algorithm gets to determine the assortments offered to consumers. This allows the algorithm to acquire the most useful data, by querying the right assortments. As a result, our algorithm overcomes the identifiability issues that arise when one estimates the choice model using offline data. We highlight that instead, \cite{van2015market} and \cite{farias2013nonparametric} aim to overcome this issue by imposing \emph{assumptions} on the observed data. In \cite{van2015market} and \cite{van2017expectation}, the authors  impose that for each offer set $\offerset$ and product $z$ in the set, there exists a preference ranking under which $z$ ranks highest in $S$; thus, they only consider items that are preferred the most by at least one type. In contrast, we consider all items and learn the set of the most popular items. \cite{farias2013nonparametric} impose the assumption that for every preference appearing in the population, there exists a query and an item in the data such that the item ranks the highest in the query only for that preference, and that the set of probabilities is linearly independent with respect to the integers. 
We do not need to make any such assumptions, {because} the DAG representation is unique given the transaction data.

There is a long line of work that aims to design algorithms identifying optimal or near-optimal revenue maximizing assortments, under the assumption that the model primitives are known. These studies are usually accompanied by a case study in which the model primitives are estimated from either real or synthetic data sets. The case studies are usually used to evaluate the proposed assortment planning algorithms; see, for example, \citet{haensel2011estimating, honhon2012optimal, jagabathula2017nonparametric, aouad2018approximability, aouad2019approximation, feldman2019assortment, derakhshan2020product} for some of the work that follows this approach for non-parametric choice models. To estimate the choice models, these approaches mostly use maximum likelihood estimators (MLE): the locally optimal solutions are obtained via expectation maximization (EM) algorithms. Notice that the estimation process in these works tends to be conducted under  restrictive assumptions. While these assumptions are mainly imposed to ensure the tractability of the assortment planning problem, the assumptions can simplify the estimation process as well. For example, \cite{feldman2019assortment} assume that consumer types are derived from paths in a tree, where the set of preference lists consists of ordered nodes visited along each path in the tree. This assumption allows them to use MLE on a domain of polynomial support size, to derive a fitted general tree representing the choice model while only considering linear paths on the tree. \cite{honhon2012optimal} assume that all the products can be mapped onto a hierarchical ordering system, such as branched, vertical or horizontal order; they then use a dynamic programming based algorithm to find the optimal assortment.

{\textbf{Other uses of DAGs in non-parametric choice models.} In this work, one of our key contributions is the use of DAGs to represent non-parametric choice models. Our use of DAGs differs significantly from that in past work on choice models \citep{jagabathula2018partial, jagabathula2022personalized}. In the model presented by \citet{jagabathula2018partial}, each customer type is characterized by a partial order over items, representing a collection of pairwise preference relations. This partial order is conveniently visualized using a DAG, with each node corresponding to an item. A directed edge from item $a$ to item $b$ indicates that item $a$ is preferred over item $b$ in the partial order. The full ranking of a customer is drawn randomly from a distribution specific to the customer's segment, consistent with the partial order. The customer then selects the highest-ranked item from their consideration set. The primary goal of the research in \citet{jagabathula2018partial} is to estimate this model using offline data, in contrast to the active learning approach in our work. \citet{jagabathula2022personalized} extend the DAG concept in \cite{jagabathula2018partial} to include promotions. Here, each item is represented by two nodes: one for its full price and another for its discounted version, thus optimizing promotion planning through this enhanced DAG framework.

Our use of DAGs as a modeling tool is fundamentally different from their uses, in that in our setting, the DAG is used to represent the choice model of all customer types at once, rather than just one customer type. That is, in our definition of DAGs, by labeling nodes with sets (rather than items), we manage to represent the entire choice model, rather than partial orders. 
}

\textbf{Online learning for assortment planning.} The literature on online learning for assortment planning is also related to our work. In this literature, the goal is to learn the consumers' preferences from their actions, in order to identify the revenue-optimal assortments. This is mainly done for parametric choice models such as the multinomial logit, nested logit,  Markov Chain, and cascade  choice models as seen, for example, in \citet{rusmevichientong2010dynamic,saure2013optimal, agrawal2019mnl,chen2021dynamic, niazadeh2020online,gallego2021optimal, golrezaei2022learning}.
While these works mostly care about identifying the optimal assortment and minimizing regret, we focus on estimating the choice model itself. Furthermore, online learning has, to the best of our knowledge, not yet been studied for non-parametric choice models.

\textbf{Active learning.} Active learning, sometimes also known as ``query learning'' or ``optimal experimental design'' in statistics, is a field of machine learning in which the learning algorithm gets to \emph{choose} the queries (unlabeled data), to be labeled by an oracle which knows the true labels. This control over the training data typically improves the performance of the algorithm with less training data. Within machine learning, the idea of active or online learning was introduced in the seminal papers of \cite{angluin:1988:queries-concept} and
\cite{littlestone:1988:online-learning}, in the context of learning a binary classifier. Naturally, combining offline or unlabeled data with data obtained online is frequently the best or most natural choice in practice; see, e.g., \cite{cohn1996active}. Although active learning has been well studied in the literature (see \citet{settles2009active} and \citet{7bb36d742bf34b9d854aa0778ec47903} for surveys), to the best of our knowledge, active learning has not been used for learning choice models. Previously, several works in the literature have explored and shown the value of active learning compared to its ``passive'' counterpart. For example, \citet{zheng2006selectively} developed a new active learning technique for an information acquisition problem in order to, for example, predict the probability of purchase and credit default rate. They demonstrated that the proposed method performs well empirically. Similarly, \citet{aviv2002pricing} explored the value of proactively setting prices to impact the revenue.

A line of work similar to active learning is the dynamic sampling literature where one dynamically selects samples to learn parameters while optimizing the objective functions; see \cite{shi2021dynamic}, \cite{zhang2020sequential}, \cite{shin2018tractable}.  \cite{shi2021dynamic} dynamically allocate samples in a finite sampling budget to learn the system's feasible alternatives efficiently in a feasibility determination problem appearing in many applications, such as call center design and hospital resource allocation. \cite{shin2018tractable} intelligently allocate samples in their simulation to minimize the probability of selecting a system that does not have the highest mean out of several competing alternatives (``systems''), when the probability distribution determining each system's performance is unknown but can be learned from a limited number of samples they can obtain. However, none of these dynamic sampling approaches was done in the assortment planning or choice modeling setting.

\section{Model}
\label{sec:model}
We use the following standard conventions. $[k] = \SET{1, \ldots, k}$ is the set of the first $k$ integers. 
Vectors are {bold}. 
When some quantity (such as a frequency/probability $p$ or a set $V$ etc.) has a ground truth value and an estimate by an algorithm, we use $\hatp$ (or $\hatV$ etc.) to distinguish the estimate from the ground truth. 

\subsection{Choice Model}
Let $\universe = \SET{1,2,\ldots,\numel}$ be the universe of items and $\Pi$ be the set of all possible rankings over $\universe$.  That is, $|\Pi| = \numel!$. We consider a non-parametric choice model, where each ranking $\pi\in \Pi$ over $\universe$ represents the preferences of a type. $\pi(i)$ is the \Kth{i} most preferred item by consumers of type $\pi$, and for any item $z \in \universe$, $\pi^{-1}(z)$ is the rank/position of item $z$ for consumers of type $\pi$. For each ranking $\pi\in\Pi$, let $p(\pi)$ denote the probability of a consumer having the ranking $\pi$. Note that for some $\pi\in \Pi$, we can have $p(\pi) =0$.  When a consumer of type $\pi$ is offered an assortment of items $\offerset \subseteq \universe$, she chooses her most preferred item among those in $\offerset$; that is, she chooses the item $z \in \offerset$ such that $\pi^{-1}(z) < \pi^{-1}(x)$ for all $x \in \offerset, x \neq z$. 
We write $\choicetype{z}{\offerset} = \Set{\pi \in \Pi}{\pi^{-1}(z) < \pi^{-1}(x) \text{ for all $x\in \offerset, x \neq z$}}$ for the set of types $\pi\in\Pi$ that choose item $z \in \offerset$ when the set $\offerset$ is offered. Then, for any set $\offerset$ and item $z \in \offerset$, we define $q_z(\offerset)$ as the probability that a consumer chooses item $z$ when the set $\offerset$ is offered. That is, 
\[
q_z(\offerset) 
\; = \; p(\choicetype{z}{\offerset}) 
\; = \; \sum_{\pi\in \choicetype{z}{\offerset}} p(\pi).
\]

\subsection{Frequent vs.~Rare Rankings, and a Generative Model}
Types that only occur very rarely would need {a very large} number of samples to  be estimated accurately. 
Thus, our primary goal is to estimate the DAG on types that occur sufficiently frequently. Specifically,
in our model, there is a set $\freqpi_0$ of (at most) $K$ \emph{candidate frequent types}. In our \emph{general type} model, $\freqpi_0$ can be chosen adversarially. Under the \emph{random type} model, $\freqpi_0$ is chosen as a set of $K$ i.i.d. uniformly drawn rankings from the set of all rankings.\footnote{If there are duplicates among the random $K$ rankings, then $|\freqpi_0| < K$.}
All of our correctness results for inferring types hold in the general type model, while the sample efficiency results hold under the random type model.

An adversary chooses the actual \emph{frequent types} $\freqpi \subseteq \freqpi_0$ from the set of candidate frequent types; all remaining types with positive probability, $\rarepi =\Set{\pi\in\Pi}{p(\pi)>0, \pi\notin\freqpi}$, are called \emph{rare} types.
The adversary then assigns probabilities $p(\pi)$ to each type/ranking $\pi\in \Pi$, subject to the following constraints:
\begin{enumerate}[leftmargin=*]
    \item The frequent types each appear at least a $\kappa$ fraction of the time, i.e., $p(\pi) \geq \kappa$ for all $\pi \in \freqpi$.
    \item The rare types appear less than $\kappa$ fraction of the time, and cumulatively, they appear at most a $\rho$ fraction of the time, i.e., $p(\pi) < \kappa$ for all rare types $\pi \in \rarepi$ and $\sum_{\pi \in \rarepi} p(\pi) \leq \raresum$.
    \item The probabilities define a distribution, i.e., $p(\pi) \geq 0$ for all $\pi$ and $\sum_{\pi \in \Pi} p(\pi) = 1$.
\end{enumerate}

When the candidate frequent rankings $\freqpi_0$ can be adversarial, we call the model with parameters $\kappa \in (0,1)$ and $\raresum \in [0,1)$ the \genmodel{\kappa}{\raresum}; when $\freqpi_0$ is random, we refer to the model as the \randmodel{\kappa}{\raresum}. We note that our  random model captures a scenario where consumer types are very heterogeneous.

Notice that due to the lower bound of $\kappa$ on probabilities in $\freqpi$, we have that $|\freqpi| \leq \frac{1}{\kappa}$. Intuitively, we can think of $\freqpi$ as a set of main types in the choice model and $\rarepi$ as the noise. An algorithm's goal will then be to accurately infer the rankings in $\freqpi$ and their probabilities; the larger the combined probability $\raresum$ of the rare types (i.e., the noise), the less accurate the estimates will become.

\subsection{Active Learning of the Choice Model}
We are interested in \emph{actively learning} the choice model in a setting where consumers' types are not observable. In an active learning framework, the algorithm gets to decide --- based on all past observed choices --- which subset/assortment $\offerset \subseteq \universe$ to offer to the next consumer. The algorithm does so without knowing the next consumer's ranking/type, which is drawn from the choice model, independently of all past consumers' types. That is, the consumer is of type $\pi\in\Pi$ with probability $p(\pi)$. Upon being presented with the assortment $\offerset$, the consumer chooses her most preferred item within the set $\offerset$ according to her ranking $\pi$. For simplicity, we assume that a consumer always chooses an item when the assortment offered is non-empty; that is, no type has a no-purchase option.\footnote{When one allows for a no-purchase option in rankings, learning rankings over items that appear after the no-purchase option becomes impossible. For example, assume that, for a consumer type, the no-purchase option is the second most popular item. Then, any active learning algorithm can only learn the most popular item of this consumer type.} We emphasize that due to practical consideration (consider an online retail site in which each consumer purchases items much less frequently than the overall rate of transactions), we assume that each consumer can only be queried once. This prevents an algorithm from immediately collecting detailed information about a type via multiple queries. 

The algorithm's goal is then to learn, with probability at least $1-\delta$, the top $\numitems$ positions of all frequent rankings (i.e., $(\pi(1), \pi(2), \ldots, \pi(\numitems))$, $\pi\in\Pi^F$) and their corresponding probabilities within accuracy $\epsilon$ (plus an error term depending on $\raresum$), using a number of queries which is polynomial in $\numel, 1/\kappa$ and $1/\epsilon${, and $1/\delta$}. (We use the terms ``samples'' and ``queries'' interchangeably throughout the paper.) Here, $\numitems = \alpha \numel$ for some constant $\alpha\in (0,1)$. Note that the algorithm need not learn the rare types; after all, it could take {a large number of} queries to even observe a single sample from the rare types.
We also are only interested in learning the $\numitems$ most popular items in the frequent rankings. Knowing the most popular items could help decision makers with inventory planning and product design decisions, to name a few. Furthermore, learning the top $\numitems$ items can be justified from both practical and technical perspectives. Practically speaking, consumers are often unsure about their preferences for the non-top items in their ranking; hence, they can be inconsistent in choosing among the set of items at the bottom of their ranking (\cite{chernev2006decision}, \cite{goldin2015preference}). In other words, even if the items appearing in low positions were learned, the results might not be very reliable. Moreover, the items at the end of a consumer's ranking are only purchased when none of her top items  are available in the offered sets. Thus, when we see an item in one of the bottom positions of a type being purchased in the transaction data, this data point is likely the result of a different consumer type which ranks the particular item higher. From a technical perspective, as shown in Theorem~\ref{thm:main}, as $\numitems$ gets larger, distinguishing different types of consumers in order to learn their rankings requires a large number of queries. Given the cost of querying, one would like to avoid having large values for $\numitems$, i.e., there is a tradeoff between using few queries vs.~learning the bottom fraction of items of each type.
{A more in-depth discussion of the implications of approximation and truncation can be found in Section~\ref{sec:approximation-implications}.}

\section{Indistinguishable Pairs of Rankings}
\label{sec:indistinguishable}
In this section, we show that when the set of frequent rankings $\freqpi$ contains an \emph{indistinguishable} pair of rankings (in the sense of the following definition), it is information-theoretically impossible to discover the set of types $\Pi^F$ uniquely. This motivates our choice to use DAGs as a representation of the types --- in a sense, they extract the most information that \emph{can} be learned. 

\begin{definition}[Indistinguishability]
\label{def:indistinguish}
Two rankings $\pi$ and $\pi'$ are $i$-indistinguishable for some $2 \leq i \leq \numel-2$ if and only if they satisfy the following three conditions:
\begin{enumerate}[leftmargin=*]
 \item the set of items in the first $i$ position of $\pi$ and $\pi'$ {is the same}, i.e., $\{\pi(j)|j\in [i]\} = \{\pi'(j)| j\in [i]\}$,
 \item at least one item among the top $i$ items has different positions in $\pi$ and $\pi'$, i.e., for at least one $j\in [i],$ we have $\pi(j) \neq \pi'(j)$, and
 \item at least one item among the bottom $\numel-i$ items has different positions in $\pi$ and $\pi'$, i.e., for at least one $j\in\{i+1,\ldots,\numel\},$ we have $\pi(j) \neq \pi'(j)$.
\end{enumerate}

Two rankings are \emph{indistinguishable} if they are $i$-indistinguishable for some $2 \leq i \leq \numel-2$. 
\end{definition}

The following is the main result of this section. 

\begin{restatable}[Impossibility Result]{theorem}{impossibility}
\label{thm:impossible}
Suppose that the set of frequent rankings $\Pi^F$ contains two rankings $\pi$ and $\pi'$ that are indistinguishable. Then, it is information-theoretically impossible to discover the set of types $\Pi^F$ uniquely.
\end{restatable}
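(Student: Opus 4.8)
The plan is to establish impossibility by exhibiting two distinct choice models that generate identical observable data yet whose frequent-type sets differ. The key preliminary observation is that, no matter how an active-learning algorithm adaptively chooses its assortments and no matter how many consumers it interacts with, the only information it can ever extract about the model is the collection of choice probabilities $q_z(S)$ over offered sets $S$ and items $z \in S$ (in the limit of infinitely many samples, the empirical choice frequencies converge to exactly these quantities). Consequently, it suffices to construct two models with the same $\{q_z(S)\}$ but different sets of frequent rankings; since both are then consistent with every possible sequence of observations, no algorithm can decide which one generated the data, and hence $\Pi^F$ cannot be recovered uniquely.

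Given the $i$-indistinguishable pair $\pi,\pi'$, let $A$ denote their common set of top-$i$ items (equal by condition (1)), so that every item of $A$ is ranked above every item of $\universe \setminus A$ in both rankings. Write $\sigma,\sigma'$ for the orderings that $\pi,\pi'$ induce on $A$, and $\tau,\tau'$ for the orderings they induce on $\universe \setminus A$; by conditions (2) and (3) we have $\sigma \neq \sigma'$ and $\tau \neq \tau'$. I then form the two crossover rankings $\tilde\pi$ (top ordering $\sigma$, bottom ordering $\tau'$) and $\tilde\pi'$ (top ordering $\sigma'$, bottom ordering $\tau$), and compare Model~A, which places probability $w \geq \kappa$ on each of $\pi$ and $\pi'$, against Model~B, which instead relocates the same mass $w$ onto each of $\tilde\pi$ and $\tilde\pi'$, leaving all remaining types (and their probabilities) identical across the two models.

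The heart of the argument is a two-case verification that the pair $\{\pi,\pi'\}$ and the pair $\{\tilde\pi,\tilde\pi'\}$ are choice-equivalent on every assortment $S$. If $S \cap A \neq \emptyset$, the chosen item lies in $A$ for all four rankings, so it depends only on the top ordering; here $\pi$ pairs with $\tilde\pi$ (both using $\sigma$) and $\pi'$ with $\tilde\pi'$ (both using $\sigma'$), matching the contributions mass-for-mass. If instead $S \cap A = \emptyset$, the choice depends only on the bottom ordering; here $\pi$ (using $\tau$) pairs with $\tilde\pi'$ (also using $\tau$) and $\pi'$ (using $\tau'$) pairs with $\tilde\pi$ (also using $\tau'$), and again the contributions match. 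Because the two masses are equal, in both cases the total probability that each item is chosen from $S$ is unchanged, so the full models satisfy $q_z(S) = q_z'(S)$ for every $S$ and every $z \in S$.

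It remains to confirm that the two models are genuinely different and both admissible. Conditions (2) and (3) guarantee that $\{\tilde\pi,\tilde\pi'\}$ is disjoint from $\{\pi,\pi'\}$: $\tilde\pi$ differs from $\pi$ in its bottom ordering and from $\pi'$ in its top ordering, and symmetrically for $\tilde\pi'$. Since Model~B merely relabels which two rankings carry the mass $w$, the total mass still sums to one and the rare mass $\rho$ is untouched, so both models satisfy the constraints of the \GENMODEL\ (taking $w \geq \kappa$ small enough to leave room for normalization). Thus the two models have distinct frequent sets but identical choice probabilities, which establishes the impossibility. I expect the main obstacle to be the careful bookkeeping in the choice-equivalence step --- specifically, arguing cleanly that the $S \cap A$ dichotomy determines the relevant (top vs.\ bottom) ordering simultaneously for all four rankings, which hinges on $A$ being a common prefix set, and that it is the crossover pairing (not the identity pairing) that preserves the choice distribution on sets disjoint from $A$.
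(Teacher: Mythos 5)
Your construction is, at its core, the same as the paper's: the paper also forms the two crossover rankings (there called $\overline{\pi}$ and $\overline{\pi}'$, identical to your $\tilde\pi$ and $\tilde\pi'$) and establishes observational equivalence by exactly your two-case split on whether the offered set $S$ intersects the common top-$i$ set $A$, with the identity pairing when $S \cap A \neq \emptyset$ and the crossover pairing when $S \cap A = \emptyset$. The only stylistic difference is that the paper packages this as an explicit coupling of type draws (so the algorithm sees literally the same response, query by query, under both models), whereas you argue equality of all choice probabilities $q_z(S)$ and then note that the observation distribution of any adaptive algorithm is determined by these quantities; both formalizations are valid and essentially interchangeable.

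However, there is one genuine gap. The theorem hypothesizes an \emph{arbitrary} model whose frequent set contains an indistinguishable pair, and concludes that the frequent set of \emph{that} model cannot be identified; the hypotheses allow $p(\pi) \neq p(\pi')$ (each only needs to be at least $\kappa$). Your Model~A places equal mass $w$ on $\pi$ and $\pi'$, and this equality is not incidental to your argument: in Case~2 ($S \cap A = \emptyset$) you match contributions via the crossover pairing $\pi \leftrightarrow \tilde\pi'$ and $\pi' \leftrightarrow \tilde\pi$, which preserves the choice distribution only because the two masses coincide. Thus, for a given model with, say, $p(\pi) = 0.3$ and $p(\pi') = 0.5$, you have produced no alternative model indistinguishable from it, and for all your argument shows, its $\Pi^F$ might still be identifiable. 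The paper closes this case with an asymmetric reassignment: writing $q = p(\pi) \leq q' = p(\pi')$, it sets $\overline{p}(\overline{\pi}) = \overline{p}(\overline{\pi}') = q$, $\overline{p}(\pi') = q' - q$, and $\overline{p}(\pi) = 0$. Your own pairing argument extends verbatim to this construction --- the residual mass $q' - q$ left on $\pi'$ contributes identically to both models in both cases --- so the repair is easy, but as written your proof covers only the equal-mass special case of the theorem.
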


\textbf{Proof Sketch.}
Given permutations $\pi$ and $\pi'$ that are $i$-indistinguishable for $2 \leq i \leq \numel-2$, where $\numel$ denotes the number of items, we define $\overline{\pi}$ and $\overline{\pi}'$ as follows:
$\overline{\pi}$ is identical to $\pi$ in the first $i$ positions and identical to $\pi'$ in the remaining positions.
$\overline{\pi}'$ is identical to $\pi'$ in the first $i$ positions and identical to $\pi$ in the remaining positions.
Given probabilities $p$, we then construct an alternate choice model $\overline{p}(\cdot)$ where  ${\overline{\pi}}, \overline{\pi}'$ are assigned probabilities similar to $\pi$ and $\pi'$, respectively, with slight adjustments.
Specifically, $\overline{p}(\overline{\pi}) = \overline{p}(\overline{\pi}') = {p(\pi)}$, $\overline{p}(\pi') = {p(\pi')-p(\pi)}$, and $\overline{p}(\pi) = 0$.
For all other permutations $\pi''\in \Pi$, $\overline{p}(\pi'') = p(\pi'')$.
{Using a coupling argument, detailed in Appendix~\ref{sec:proof:thm:impossible}, one can then show that no algorithm can distinguish between the choice models $p$ and ${\overline{p}}$.}
\QED

We remark that even if the frequent types are drawn i.i.d.~uniformly, indistinguishable pairs are likely to occur --- in particular, assuming the random model for frequent types does not obviate the need for DAG representations. To see this, consider $i=2$. Given a ranking $\pi$, a uniformly random ranking starts with $(\pi(2), \pi(1))$ with probability $\frac{1}{\numel (\numel-1)}$, so it is 2-indistinguishable from $\pi$ with probability $\frac{1}{\numel (\numel-1)} \cdot (1-1/(\numel-2)!) \geq \frac{1}{\numel^2}$ (for $\numel \geq 4$, which is necessary for indistinguishability). By a Birthday Paradox argument, a 2-indistinguishable pair of types will occur with constant probability when $K = \Omega(\numel)$, and with high probability when $K = \omega(\numel)$. (Recall that $K$ is an upper bound on the number of frequent types.) Thus, as the number of types will typically exceed the number of items, indistinguishable pairs must be accounted for, even in our random model.

\section{DAG Representation of a Choice Model}
\label{sec:dag-rep}
Theorem \ref{thm:impossible} shows that when there are indistinguishable pairs of (frequent) rankings, no algorithm can learn the choice model. However, even if the set of frequent rankings $\freqpi$ contains pairs of indistinguishable rankings, we would still like to recover as much information as possible. 
We therefore introduce the Directed Acyclic Graph (DAG) representation for a choice model{, which} effectively encodes the maximum information  that can be deduced from a sampling process, in a sense we will elaborate on in Section~\ref{sec:choice_DAG}. 

The motivation for the DAG model arises from the proof of Theorem~\ref{thm:impossible}. Consider two $i$-indistinguishable rankings $\pi, \pi'$. {We} cannot tell apart a world in which just these two types are present, or some/all of the probability lies on types combining the ranking of the first $i$ items according to $\pi$ with the ranking of the remaining $n-i$ items according to $\pi'$ (and vice versa). Let $\toppi = \{ \pi(1), \ldots, \pi(i) \} = \{ \pi'(1), \ldots, \pi'(i) \} $ denote the common set of $i$ items ranked first (in different orders) by $\pi, \pi'$. Then, a sampling-based algorithm \emph{can} infer the probabilities with which $A$ is ranked according to $\pi$ and according to $\pi'$, and similarly for $\universe \setminus A$; it just cannot infer how these rankings are ``combined.'' 
We can consider this as the rankings $\pi, \pi'$ ``merging'' after position $i$, because they both have the set $A$ in the first $i$ positions; subsequently, they split again, but the ``merge point'' corresponds to an indistinguishability.
This intuition generalizes to more complex similarities and differences between types, and leads to our definition of the DAG representation. We will later see in Figure~\ref{fig:onedag} that $\freqpi$ and $\bar{\Pi}^F$ in Theorem \ref{thm:impossible}, with their corresponding types' probability, have the same DAG representation.

\subsection{Edges and Prefixes, and their Probabilities in the DAG Representation}
\label{sec:edges-prefixes}

The DAG representation relies on the notion of \emph{prefixes} in the choice model.
We begin by defining the following nomenclature for prefixes and sets of prefixes.

{
\begin{definition}[{Prefixes, Edges, and Their Probabilities}]
\label{def:prefixes}
\label{def:prefix_edge_prob}
\begin{enumerate}
\item A \emph{prefix} of size $j$ of a ranking $\pi$ is the set of the top (most preferred) $j$ items in type $\pi$, i.e., $\Set{\pi(j')}{j'=1,\ldots,j}$. 

\item The probability of {a prefix $A$ of size $j$} is
the probability that the set of the top-$j$ items of a random consumer type equals $A$, i.e.,
\begin{equation} \label{eq:prob_prefix}
 p(A) = \sum_{\pi: \Set{\pi(j')}{j'=1,\ldots, j}=A} p(\pi).
\end{equation}

\item For $z \notin A$, we write 
 $\edgetype{A}{z} = \Set{\pi}{\SET{\pi(1), \pi(2), \ldots, \pi(j)} = A, \pi(j+1) = z, p(\pi) > 0}$ for the set of types/rankings of positive probability that have the set $A$ in the first $j$ positions (in any order) and the item $z$ in position $j+1$. We then write
\begin{equation} \label{eq:prob_edge}
 \edgeprob{A}{z} = \sum_{\pi \in \edgetype{A}{z}} p(\pi)
\end{equation}
for the combined probability of such types/rankings. We refer to $\edgeprob{A}{z}$ as the \emph{edge probability} between nodes $A$ and $A\cup\SET{z}$.
\end{enumerate}
\end{definition}}

{
The following observations follow immediately from the definitions:
\begin{proposition} \label{prop:basic-properties}
\begin{enumerate}
\item For fixed $z$, the sets $\edgetype{A}{z}$ of types are disjoint, i.e., $\edgetype{A}{z} \cap \edgetype{A'}{z} = \emptyset$, for any $A,A'$ with $A\neq A'$.
\item The prefix probabilities can be expressed as follows:
\begin{align} 
p(A) = \sum_{z \notin A} \edgeprob{A}{z} = \sum_{z \in A} \edgeprob{A \setminus \SET{z}}{z}, \label{eqn:flow-equation}
\end{align}
where the first representation applies
only when $A$ is not the set of all the items, and the
second representation applies only for non-empty $A$.
\end{enumerate}
\end{proposition}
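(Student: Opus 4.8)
The plan is to derive both parts directly from the fact that each ranking $\pi$ of positive probability is a bijection, so that the position assigned to any item, and dually the set occupying the first $j$ positions, is uniquely determined by $\pi$. Since the zero-probability rankings contribute nothing to the sum in \eqref{eq:prob_prefix}, throughout I would restrict attention to rankings $\pi$ with $p(\pi) > 0$; this is harmless and is exactly the restriction already baked into the definition of $\edgetype{A}{z}$.

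For the first claim (disjointness for fixed $z$), I would argue by contradiction. Suppose a ranking $\pi$ lies in both $\edgetype{A}{z}$ and $\edgetype{A'}{z}$ for some $A \neq A'$. By the definition of $\edgetype{A}{z}$, item $z$ occupies position $|A|+1$ in $\pi$, and by the definition of $\edgetype{A'}{z}$, it occupies position $|A'|+1$. Since $\pi$ assigns $z$ a single position, this forces $|A| = |A'|$. But then both $A$ and $A'$ equal the common set $\SET{\pi(1), \ldots, \pi(|A|)}$ occupying the first $|A|$ positions of $\pi$, so $A = A'$, contradicting $A \neq A'$. Hence the sets $\edgetype{A}{z}$ are pairwise disjoint.

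For the second claim, I would show that each of the two displayed sums simply re-groups the rankings already counted by $p(A)$, so it suffices to exhibit a partition of $\Set{\pi}{\SET{\pi(j')}{j' \le |A|} = A,\, p(\pi) > 0}$. For the first representation (valid when $A \neq \universe$, so that position $|A|+1$ exists), I would map each such ranking to $z = \pi(|A|+1)$; this $z$ lies outside $A$, because the first $|A|$ positions already exhaust $A$, and by construction $\pi \in \edgetype{A}{z}$. Conversely, every $\pi \in \edgetype{A}{z}$ with $z \notin A$ has top-$|A|$ set equal to $A$. This gives a partition indexed by $z \notin A$, and summing $p(\pi)$ over the blocks yields $p(A) = \sum_{z \notin A} \edgeprob{A}{z}$. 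For the second representation (valid when $A \neq \emptyset$, so that position $|A|$ exists), I would instead map each ranking to the last item of its prefix, $z = \pi(|A|) \in A$; then $\pi$ has the set $A \setminus \SET{z}$ in its first $|A|-1$ positions and $z$ in position $|A|$, so $\pi \in \edgetype{A \setminus \SET{z}}{z}$, giving a partition indexed by $z \in A$ and the corresponding sum.

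There is no substantive obstacle here beyond careful bookkeeping; the only subtlety worth flagging is the boundary conditions, which is where all the care goes. The first representation must exclude $A = \universe$, since there is no item available for position $n+1$, and the second must exclude $A = \emptyset$, since there is no position $0$ to speak of; these are precisely the qualifications stated in the proposition. I would make sure to note explicitly that each positive-probability ranking with top-$|A|$ set $A$ falls into exactly one block of each partition, which is what guarantees that the re-grouping neither double-counts nor omits any term.
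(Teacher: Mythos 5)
Your proof is correct, and it takes the same route the paper intends: the paper states Proposition~\ref{prop:basic-properties} as following immediately from the definitions, and your argument simply makes that explicit — disjointness via the uniqueness of the position of $z$ in a ranking, and the flow equation \eqref{eqn:flow-equation} by partitioning the rankings with prefix $A$ according to the item in position $|A|+1$ (respectively position $|A|$). Your handling of the boundary cases $A = \universe$ and $A = \emptyset$ and of the zero-probability rankings matches the qualifications in the statement, so nothing is missing.
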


The {second} part of the proposition {suggests interpreting} the $\edgeprob{A}{z}$ as transition probabilities between prefixes, and motivates the nomenclature of ``edges''.
Indeed, prefixes and edges between them exactly form the primitives of the DAG representation.
}

\subsection{The DAG Representation}
Utilizing the notions of prefixes and edges, we now define the DAG representation of a non-parametric choice model.
\begin{figure}[htb]
    \centering
    \begin{minipage}{.47\textwidth}
        \centering
        \includegraphics[width=\linewidth]{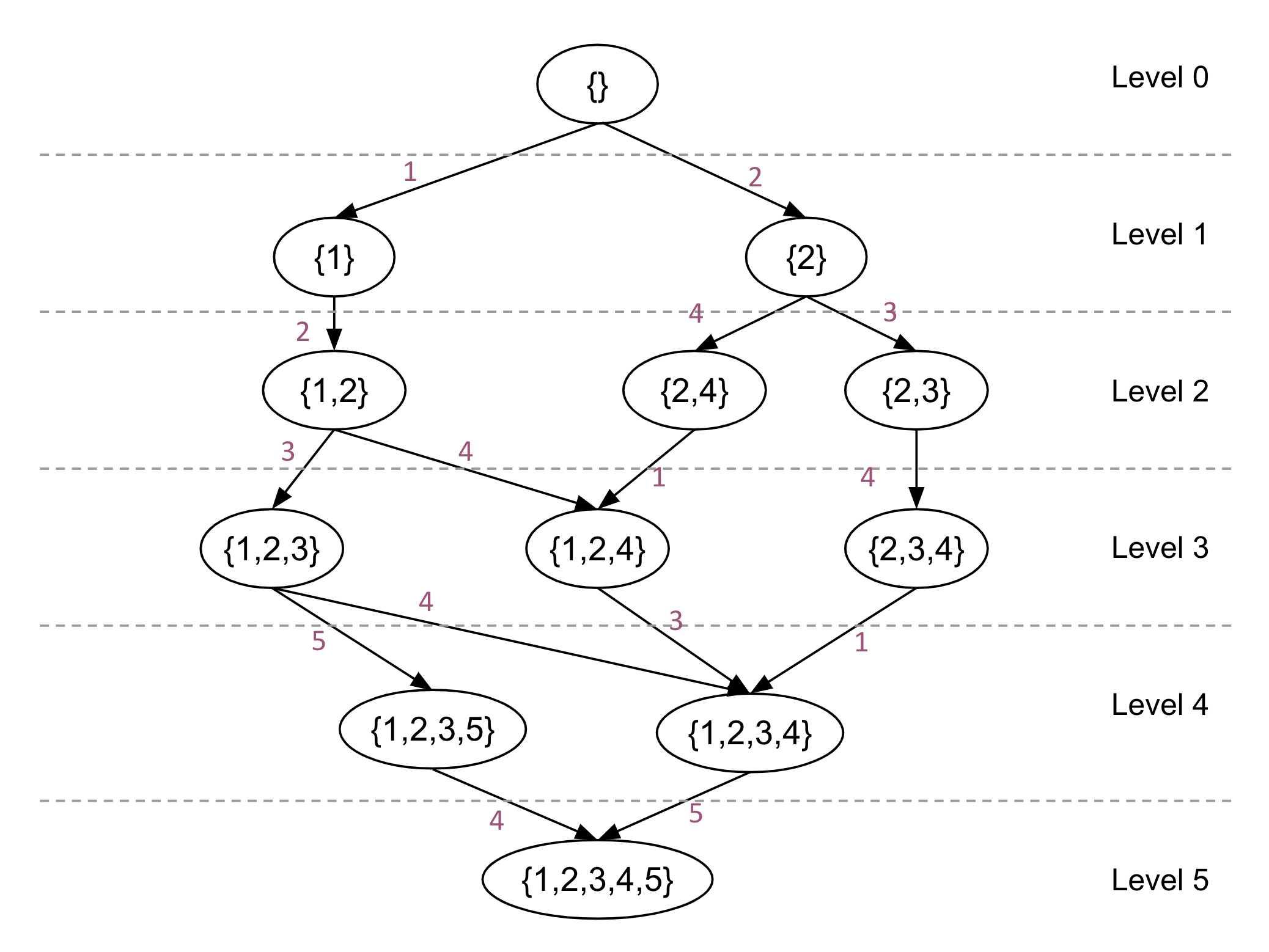}
        \caption{The DAG corresponding to \((\Pi,p)\) with \(p((1,2,3,4,5)) = p((1,2,3,5,4)) = p((1,2,4,3,5)) = p((2,3,4,1,5)) = p((2,4,1,3,5)) = 0.2\).}
        \label{fig:dag_example}
    \end{minipage}\hfill
    \begin{minipage}{.45\textwidth}
        \centering
        \includegraphics[width=\linewidth]{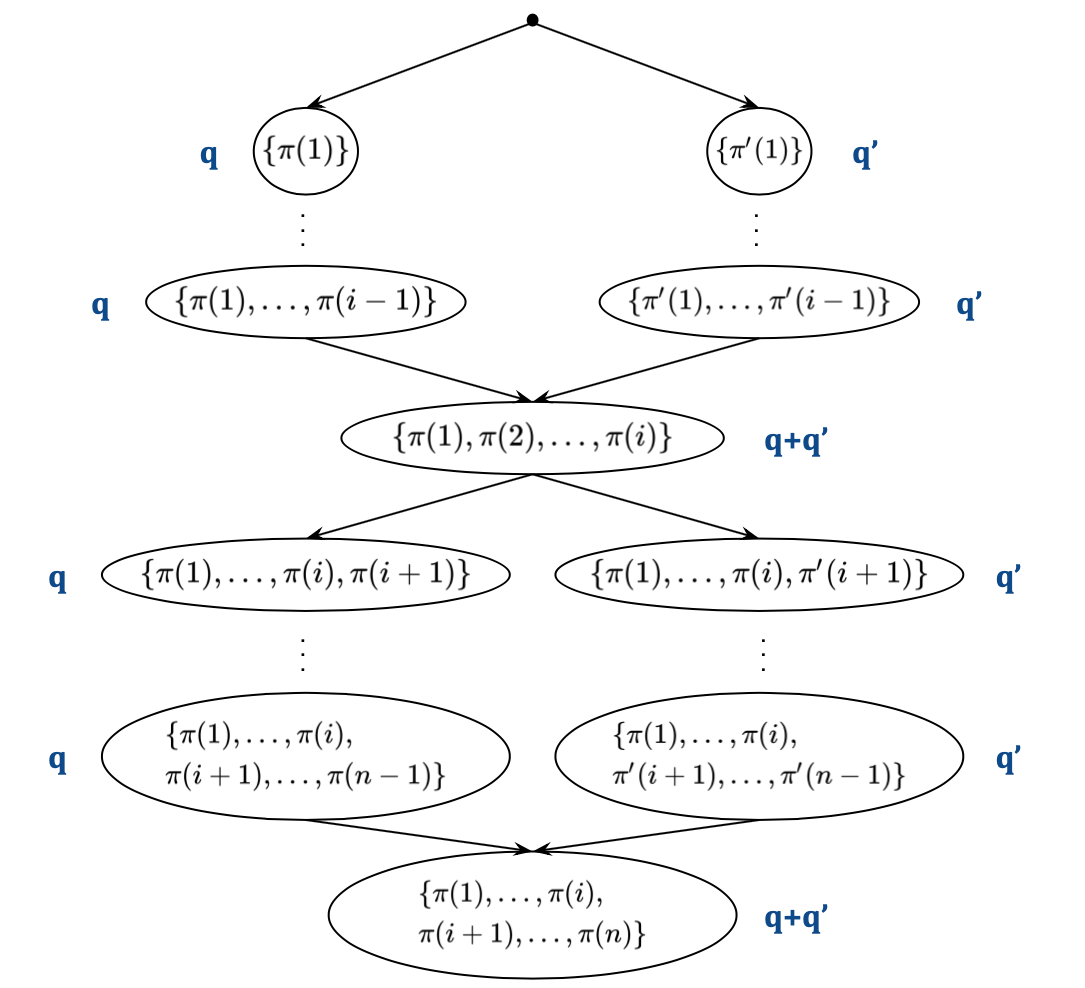}
        \caption{The DAG corresponding to the indistinguishable pairs of rankings from Theorem~\ref{thm:impossible}.}
        \label{fig:onedag}
    \end{minipage}
\end{figure}

\begin{definition}[DAG Representation of $(\Pi,\bf{p})$]
 {
Let $V = {\Set{A}{p(A) > 0}}$ be the set of all prefixes $A$ with positive probability, and $E = \Set{(A, A \cup \SET{z})}{\edgeprob{A}{z} > 0}$ the set of all edges with positive probability.

The Directed Acyclic Graph (DAG) representation of the choice model $(\Pi,p)$, denoted by $G = (V,E, {(p(A))_{A \in V}, (\edgeprob{A}{z})_{(A, A \cup \SET{z}) \in E}})$, has node set $V$ and edge set $E$.
The probabilities of nodes $A$ and edges $(A, A \cup \SET{z})$ are $p(A)$ and $\edgeprob{A}{z}$, respectively.
We refer to {the prefixes $A$ with $|A| = j$} as \emph{level $j$} of the graph\footnote{Note that edges exist solely between consecutive levels.}, and to $z$ as the \emph{label} of the edge $(A, A {\cup} \SET{z})$. 

Two DAG representations $G = (V, E, \bm{p}, \bm{e})$ and $G' = (V', E', \bm{p}', \bm{e}')$  are \emph{identical} if and only if $V = V'$, $E = E'$, $p(A) = p'(A)$ for all $A \in V$, and $\edgeprob{A}{z} = e'_{A \oplus z}$ for all $(A, A \cup \SET{z}) \in E$.}
\end{definition}

An example DAG illustrating this definition is shown in Figure~\ref{fig:dag_example}, for the choice model $(\Pi,\bm{p})$ whose types with positive probability are
$\SET{(1,2,3,4,5), (1,2,3,5,4), (1,2,4,3,5), (2,3,4,1,5), (2,4,1,3,5)}$, 
each with probability $0.2$. In this example, the nodes on level one are $\SET{1}$ and $\SET{2}$ (the top elements under $\Pi$), the nodes on level two are $\SET{1, 2}, \SET{2, 3}$, and $\SET{2, 4}$ (the top pairs of elements under $\Pi$), and the nodes on level three are $\SET{1,2,3}, \SET{1,2,4}$, and $\SET{2,3,4}$ (the top triples). 
This example also shows that multiple edges can have the same label. For example, both the edge $(\SET{1}, \SET{1,2})$ and $(\emptyset, \SET{2})$ are labeled $2$. 
{For a second example, see} Figure~\ref{fig:onedag}, which shows the DAG representation of $\freqpi$ and $\bar{\Pi}^F$ from Theorem \ref{thm:impossible}. There, it can be observed that $\freqpi$ and $\bar{\Pi}^F$ have the same DAG representation, and that the DAG merges at level $i$ and splits at level $i+1$.

We finish this section by defining the notion of truncated DAGs, which are useful for describing the intermediate stages of the DAG construction algorithm.

{\begin{definition}[Truncated DAG]
\label{def:truncated-DAG}{
For any DAG model $G = (V, E, \bm{p}, \bm{e})$, given an integer $j \in \SET{0, 1, \ldots, n}$, we let $G_j$ denote the DAG (with associated probabilities) restricted to levels $0, \ldots, j$.
Formally, we let $V_j = \Set{A \in V}{|A| \leq j}$ be the set of prefixes in $V$ of at most $j$ items, $E_j = E[V_j \times V_j]$ the set of edges induced by these prefixes, and $\bm{p}_j = (p(A))_{A \in V_j}$ and $\bm{e}_j = (\edgeprob{A}{z})_{A \in V_j, A \cup \SET{z} \in V_j}$.
Then, $G_j = (V_j, E_j, \bm{p}_j, \bm{e}_j)$.}
\end{definition}}

\section{DAGs and Choice Probabilities} 
\label{sec:dag-choice}
{
We first show that the DAG representation captures all the information inherent in the choice probabilities, in the sense that the choice probabilities can be fully (and efficiently) recovered from the corresponding DAG representation.
In turn, given oracle access to the (exact) choice probabilities, the DAG representation of a choice model can be constructed in time polynomial in the number of types.
The proof of the latter result also serves as an easy warm-up, illustrating the central underlying ideas that we significantly refine to adaptively deal with sampled choice frequencies, rather than exact choice probabilities.

{
\subsection{Computing the Choice Probabilities from a given DAG} \label{sec:choice_DAG}

Given an item set $S$ and item $z \in S$, we would like to compute the probability $q_z(S)$ of a random type choosing $z$ when offered the set $S$.
The calculation of choice probabilities is captured by the following proposition.

\begin{restatable}[Computing Choice Probabilities from a DAG]{proposition}{constructingchoice}
\label{prop:exact_choice_prob}
Let $(\Pi, \bf{p})$ be a choice model, and $(V,E,\bf{p},\bf{e})$ a DAG representation of $(\Pi,\bf{p})$.
Let $S \subseteq \universe$ be an item set and $z \in S$.
Then, the set of types choosing $z$ from $S$ and its associated probability are characterized by the following:
\begin{align*}
  \choicetype{z}{S} & = \bigcup_{A: A \cap S = \emptyset} \edgetype{A}{z} 
  & 
  q_z(S) & = \sum_{A: A \cap S = \emptyset} \edgeprob{A}{z}.
\end{align*}
\end{restatable}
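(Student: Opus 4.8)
The plan is to first establish the set identity $\choicetype{z}{S} = \bigcup_{A : A \cap S = \emptyset} \edgetype{A}{z}$ and then obtain the probability formula by summing $p(\pi)$ over the pieces on the right-hand side, which are disjoint. The conceptual heart of the argument is a single observation: a type $\pi$ selects $z$ from $S$ precisely when every item that $\pi$ ranks strictly ahead of $z$ lies outside $S$. Everything else is bookkeeping.

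To make this precise, I would fix a type $\pi$ of positive probability, let $j+1 = \pi^{-1}(z)$ be the position of $z$ in $\pi$, and set $A_\pi = \SET{\pi(1), \ldots, \pi(j)}$ to be the set of items $\pi$ ranks strictly ahead of $z$. By the definition of $\edgetype{A}{z}$, the type $\pi$ belongs to $\edgetype{A_\pi}{z}$ and to no other $\edgetype{A}{z}$; thus the positive-probability types are partitioned by their ``prefix-before-$z$''. I would then argue the equivalence directly: $\pi \in \choicetype{z}{S}$ means $z$ is preferred to every other item of $S$, and since the items preferred to $z$ are exactly those in $A_\pi$ (and $z \in S$, while $z \notin A_\pi$), this is equivalent to $A_\pi \cap S = \emptyset$. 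Hence $\pi$ lies in $\choicetype{z}{S}$ iff $\pi \in \edgetype{A_\pi}{z}$ for some $A_\pi$ disjoint from $S$, which is precisely membership in the right-hand union.

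For the probability statement, I would invoke Proposition~\ref{prop:basic-properties}(1), which guarantees that the sets $\edgetype{A}{z}$ are pairwise disjoint for fixed $z$. Consequently the union is a disjoint union, and
\[
 q_z(S) = \sum_{\pi \in \choicetype{z}{S}} p(\pi) = \sum_{A : A \cap S = \emptyset} \sum_{\pi \in \edgetype{A}{z}} p(\pi) = \sum_{A : A \cap S = \emptyset} \edgeprob{A}{z},
\]
using the definition $\edgeprob{A}{z} = \sum_{\pi \in \edgetype{A}{z}} p(\pi)$. Because $\edgeprob{A}{z}$ vanishes whenever $(A, A \cup \SET{z})$ is not an edge of the DAG, this sum effectively ranges only over the polynomially many edges labeled $z$ whose tail $A$ is disjoint from $S$, which is what makes the evaluation efficient.

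I expect no substantive obstacle here. The one point requiring care is the treatment of zero-probability types: $\edgetype{A}{z}$ contains only types with $p(\pi) > 0$, whereas $\choicetype{z}{S}$ as defined includes all types. The set identity therefore holds exactly only after restricting $\choicetype{z}{S}$ to its positive-probability members; since zero-probability types contribute nothing to $q_z(S)$, this restriction is immaterial for the probability formula, and I would flag it explicitly but treat it as a triviality rather than a genuine difficulty.
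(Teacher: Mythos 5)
Your proof is correct and follows essentially the same route as the paper's: first establish the set identity (your prefix-before-$z$ characterization via $A_\pi$ is the same content as the paper's two-inclusion argument), then obtain the probability formula by summing over the pairwise-disjoint sets $\edgetype{A}{z}$ using Proposition~\ref{prop:basic-properties}. Your explicit handling of zero-probability types --- noting that $\edgetype{A}{z}$ excludes them while $\choicetype{z}{S}$ does not, and that this is immaterial for $q_z(S)$ --- is a point of precision the paper's proof glosses over, but it does not change the substance of the argument.
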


Proposition~\ref{prop:exact_choice_prob}{, which is shown in {Appendix}~\ref{sec:proof:prop:exact_choice_prob},}  implies a straightforward linear-time algorithm for computing $q_z(S)$: iterate through all prefixes $A \in V$, and if $A \cap S = \emptyset$, then add $\edgeprob{A}{z}$ (which might be 0, if the edge does not exist) to the total.
}

{
\subsection{Constructing the DAG with an Exact Choice Probability Oracle}
\label{sec:construct_DAG_exact}

In this section, we show that when an algorithm has access to the exact choice probabilities --- as opposed to empirical and noisy choice frequencies --- for arbitrary sets and items, it can reconstruct the DAG efficiently. We model such access by an oracle which, given $S$ and $z$, will correctly return $q_z(S)$.

The algorithm (Algorithm~\ref{alg:all_exact}) constructs the DAG level by level. 
Recall that throughout the paper, we denote quantities which the algorithm computes by a hat, to distinguish them from the (possibly different) ground truth values.
The algorithm starts with level 0, composed of only the node $\emptyset$ with $\hatp(\emptyset) = 1$.
In each iteration $j = 0, 1, \ldots, n-1$, for each node $A$ on level $j$, it determines all outgoing edges $(A, A \cup \SET{z})$ {of positive probability} using Lemma~\ref{lem:inter_2} below.
Then, level $j+1$ is composed of all nodes $A \cup \SET{z}$ that were the head of at least one of these edges. 
The probability associated with a node $A$ on level $j+1$ can be obtained from Proposition~\ref{prop:basic-properties} as 
$p(A) = \sum_{z \in A} \edgeprob{A \setminus \SET{z}}{z}$.

\begin{lemma} \label{lem:inter_2} 
  For any prefix $A$ and $z \notin A$, we have 
$\edgeprob{A}{z} =  q_z(\universe\setminus A) -  \sum_{A' \subsetneq A} \edgeprob{A'}{z}$.
\end{lemma}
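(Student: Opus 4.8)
The plan is to derive the lemma directly from the choice-probability characterization in Proposition~\ref{prop:exact_choice_prob}, instantiated at the single, carefully chosen offer set $S = \universe \setminus A$. Since $z \notin A$ by hypothesis, we have $z \in \universe \setminus A$, so the proposition applies with this $S$ and yields
\[
q_z(\universe \setminus A) = \sum_{A' \,:\, A' \cap (\universe \setminus A) = \emptyset} \edgeprob{A'}{z}.
\]
The entire proof then consists of rewriting the index set of this sum and peeling off one term.

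Next I would simplify the index set. A set $A'$ satisfies $A' \cap (\universe \setminus A) = \emptyset$ exactly when $A'$ contains no item outside $A$, i.e., precisely when $A' \subseteq A$. Moreover, because $A' \subseteq A$ and $z \notin A$, every such $A'$ automatically satisfies $z \notin A'$, so each term $\edgeprob{A'}{z}$ is a well-defined edge probability (with nonexistent / zero-probability edges simply contributing $0$). Hence the sum ranges over exactly the subsets $A' \subseteq A$.

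Finally I would isolate the term $A' = A$, namely $\edgeprob{A}{z}$, from the remaining terms indexed by the proper subsets $A' \subsetneq A$, obtaining $q_z(\universe \setminus A) = \edgeprob{A}{z} + \sum_{A' \subsetneq A} \edgeprob{A'}{z}$, and solving for $\edgeprob{A}{z}$ gives the claimed identity.

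Because the argument is a one-line specialization of Proposition~\ref{prop:exact_choice_prob} followed by elementary set bookkeeping, there is no substantive obstacle. The only points requiring any care are verifying the equivalence $A' \cap (\universe \setminus A) = \emptyset \iff A' \subseteq A$, and confirming that $z \notin A'$ holds for every $A'$ in range, so that the split into the $A' = A$ term and the $A' \subsetneq A$ terms is both exhaustive and composed entirely of legitimate edge probabilities.
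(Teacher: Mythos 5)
Your proof is correct and follows exactly the same route as the paper's: instantiate Proposition~\ref{prop:exact_choice_prob} at $S = \universe \setminus A$, observe that $A' \cap (\universe \setminus A) = \emptyset$ is equivalent to $A' \subseteq A$, and solve for the $A' = A$ term. The extra bookkeeping you include (checking $z \in \universe \setminus A$ and $z \notin A'$) is a harmless elaboration of steps the paper leaves implicit.
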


\begin{proof}
{Proposition~\ref{prop:exact_choice_prob} with $S = \universe \setminus A$ gives
  $q_z(\universe \setminus A) = \sum_{A': A' \cap (\universe \setminus A) = \emptyset} \edgeprob{A'}{z} = \sum_{A' \subseteq A} \edgeprob{A'}{z}$.
  Solving for $\edgeprob{A}{z}$ proves the lemma.
  }  
\end{proof}

\begin{algorithm}
\footnotesize{
\caption{\footnotesize{Constructing the Exact DAG using {Oracle Access to Exact} Choice Probabilities}}\label{alg:all_exact}

\KwIn{An exact oracle for the choice probabilities of a non-parametric choice model {$(\Pi,\bf{p})$}.}

\KwOut{The (exact) DAG representation {$G$} of the choice model and its associated probabilities.}

  
  Initialize $G_{0} { = (\SET{\emptyset},\emptyset)}$ {with only one node for the empty set, and no edges}. Set $\hatp(\emptyset) = 1$. 


\For{$j=0,\ldots,n-1$}{
  Initialize $V_{j+1}$ and $E_{j+1}$ as empty sets.
  
  \For{each set $A$ on level $j$ of $G_j$}{
    \For{each item $z \notin A$}{
      Set $\hatedgeprob{A}{z} =  q_z(\universe\setminus A) -  \sum_{A' \subsetneq A} \hatedgeprob{A'}{z}$. \tcc*{Compute the edge probabilities}
       
     \If{{$\hatedgeprob{A}{z} > 0$}}{
       \If{node $A \cup \SET{z}$ is not in $V_{j+1}$}{
            Add node $A \cup \SET{z}$ to $V_{j+1}$. Set $\hatp(A \cup \SET{z}) = 0$.
          }
        
          Set $\hatp(A \cup \SET{z}) = \hatp(A \cup \SET{z}) + \hatedgeprob{A}{z}$. 
          
          Add a directed edge from $A$ to $A \cup \SET{z}$ in $E_{j+1}$, labeled with $z$, and with probability $\hatedgeprob{A}{z}$}.
       }          
    }
  }
}
\end{algorithm}

{
The performance of Algorithm~\ref{alg:all_exact} is captured by the following theorem, {whose proof is in Appendix}~\ref{sec:proof:thm:construct_exact_DAG}.

\begin{restatable}[Constructing the Exact DAG using Exact Choice Probabilities]{theorem}{constructingexact}
\label{thm:construct_exact_DAG}
Let $(\Pi,\bf{p})$ be the underlying choice model with $T$ types, 
and assume that Algorithm~\ref{alg:all_exact} has exact oracle access to the choice probabilities.
Then, Algorithm~\ref{alg:all_exact} runs in time $O(n^2 T)$ and outputs a correct DAG representation of $(\Pi,\bf{p})$.
\end{restatable}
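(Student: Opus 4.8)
The plan is to prove correctness by induction on the level $j$, and then bound the running time by counting nodes, edges, and oracle calls.

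For correctness, I would maintain the invariant that, just before outer-loop iteration $j$ begins (for $j = 0, \ldots, n-1$), the truncated DAG $\widehat{G}_j$ built so far is identical to the true truncated DAG $G_j$ of Definition~\ref{def:truncated-DAG}: its level-$j'$ node set equals $\Set{A}{|A| = j',\, p(A) > 0}$ for every $j' \le j$, every stored edge probability equals the true $\edgeprob{A}{z}$, and $\hatp(A) = p(A)$ for every stored node $A$. The base case $j = 0$ holds because $\emptyset$ is the unique size-$0$ prefix and $\hatp(\emptyset) = 1 = p(\emptyset)$, since every type has $\emptyset$ as its length-$0$ prefix and the $p(\pi)$ sum to one.

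For the inductive step, suppose the invariant holds through level $j$. In iteration $j$, for each stored node $A$ with $|A| = j$ and each $z \notin A$, the algorithm computes $\hatedgeprob{A}{z} = q_z(\universe \setminus A) - \sum_{A' \subsetneq A} \hatedgeprob{A'}{z}$, whereas Lemma~\ref{lem:inter_2} gives the identical identity for the true value $\edgeprob{A}{z}$. Every $A' \subsetneq A$ satisfies $|A'| < j$, so the needed quantities were produced in earlier iterations; the key point is that $\hatedgeprob{A'}{z} = \edgeprob{A'}{z}$ for all such $A'$. For $A'$ that is a stored node this is the inductive hypothesis, while for $A'$ with $p(A') = 0$ (which the algorithm never stores, and thus implicitly treats as $0$) we have $\edgeprob{A'}{z} \le p(A') = 0$, so the true term vanishes as well. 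Hence the two sums agree termwise and $\hatedgeprob{A}{z} = \edgeprob{A}{z}$. The algorithm therefore inserts the edge $(A, A \cup \SET{z})$ exactly when $\edgeprob{A}{z} > 0$, matching the edge set $E$; and since $\edgeprob{A}{z} > 0$ forces $p(A) > 0$, every true edge between levels $j$ and $j+1$ has its tail among the level-$j$ nodes already present, so no edge is missed. It remains to check level $j+1$: a size-$(j+1)$ set $B$ is inserted iff some incoming edge $(B \setminus \SET{z}, B)$ has positive probability, and by the second form of the flow equation~\eqref{eqn:flow-equation}, $p(B) = \sum_{z \in B} \edgeprob{B \setminus \SET{z}}{z}$, so $B$ is inserted iff $p(B) > 0$; the accumulated $\hatp(B)$ equals this same sum (terms with $B \setminus \SET{z}$ not a node contribute $0$ on both sides), giving $\hatp(B) = p(B)$. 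This re-establishes the invariant for level $j+1$, and taking $j = n$ yields $\widehat{G} = G$.

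For the running time, I would first bound the number of nodes by $O(nT)$ — each of the $T$ types contributes exactly one prefix per level, so each of the $n+1$ levels holds at most $T$ nodes — and likewise the number of edges by $O(nT)$. The algorithm issues one oracle query per (node, item) pair, for a total of $O(n \cdot |V|) = O(n^2 T)$ queries. The one delicate point, which I expect to be the main obstacle, is bounding the arithmetic for the inclusion--exclusion sums $\sum_{A' \subsetneq A} \hatedgeprob{A'}{z}$: summing naively over all subsets of $A$ is far too expensive, so one must restrict to the $O(nT)$ stored edges and argue --- via appropriately maintained running totals indexed by the label $z$ --- that this bookkeeping does not exceed the $O(n^2 T)$ cost already incurred by the oracle queries. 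Verifying this accounting, rather than the clean correctness induction, is where the real care is needed.
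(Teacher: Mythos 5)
Your proof is correct and takes essentially the same route as the paper's: correctness by induction on levels using Lemma~\ref{lem:inter_2} together with the flow equation~\eqref{eqn:flow-equation}, and the runtime bound by observing that each level has at most $T$ nodes and at most $T$ outgoing edges, hence at most $nT$ candidate (node, item) pairs per level and $O(n^2 T)$ oracle queries overall. In fact, your induction is spelled out more carefully than the paper's one-sentence version; in particular, your observation that prefixes $A'$ with $p(A') = 0$ satisfy $\edgeprob{A'}{z} \le p(A') = 0$, so that the algorithm's implicit treatment of unstored terms as zero is exact, is precisely what makes the termwise comparison of the two sums legitimate.

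One remark on the point you flag as the real obstacle: the paper does not resolve it. Its runtime argument counts the at most $nT$ candidates per level and charges each candidate constant work (one oracle call plus bookkeeping), saying nothing about how the sums $\sum_{A' \subsetneq A} \hatedgeprob{A'}{z}$ are accumulated. So you are not missing a trick that appears in the paper; if anything, you are more scrupulous than it is. For what it is worth, a natural implementation indexes stored edges by their label $z$ (there are at most $T$ edges per label, since the sets $\edgetype{A'}{z}$ are pairwise disjoint for fixed $z$ and each nonempty one contains a type) and tests subset-hood of each such tail against $A$; this yields on the order of $O(n^2 T^2)$ arithmetic operations rather than $O(n^2 T)$. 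The stated $O(n^2 T)$ bound is therefore most safely read as a bound on the number of oracle queries, with the total computation polynomial --- a reading under which both your argument and the paper's are complete.
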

}
}

}

\section{Active Learning of {a} DAG using {Estimated} Choice Frequencies}
\label{sec:dag-algo}
{

We presented Algorithm~\ref{alg:all_exact} and its analysis  
under the assumption of exact access to choice probabilities. In reality, information about consumers' choice behavior will virtually always be inferred from observations, and as such be based on \emph{empirical frequencies}, which will differ subtly from exact choice probabilities.

One can of course still use Algorithm \ref{alg:all_exact} to construct a DAG, using empirical estimates of the quantities $q_z(\universe \setminus A)$.
However, this will lead to corresponding misestimates of the quantities $\edgeprob{A}{z}$.
Importantly, the key equation from Lemma~\ref{lem:inter_2} for computation of edge probabilities, $\edgeprob{A}{z} = q_z(\universe \setminus A) - \sum_{A' \subsetneq A} \edgeprob{A'}{z}$, uses previously computed edge probability estimates in later iterations.
As such, it is susceptible to an accumulation of errors, which might amplify errors.
That this can indeed occur is illustrated in the following example, depicted in Figure~\ref{fig:ex_DAG_inter}:
}

{ 
\begin{example}
\label{example:naive_2} 
Consider a scenario with eight items and the following frequent rankings:\\
$\Pi = \{(1,2,3,4,5,6,7,8), (1,2,3,4,5,6,8,7), (1,2,3,4,7,5,6,8), (1,2,3,5,7,4,6,8), (1,2,3,6,7,4,5,8),$\\$ (1,7,2,3,4,5,6,8), (2,7,1,3,4,5,6,8),(3,7,1,2,4,5,6,8), (7,1,2,3,4,5,6,8)\}$.

The goal is to estimate the edge probability $\edgeprob{\SET{1,2,3,4,5,6}}{7}$, corresponding to the bolded node and edge on the \Kth{7} level of the DAG, as highlighted in purple in Figure~\ref{fig:ex_DAG_inter}. The proper subsets $A' \subsetneq A$ with an outgoing edge labeled $z = 7$ are $\SET{1,2,3,4}, \SET{1,2,3,5}, \SET{1,2,3,6}, \SET{1}, \SET{2},\SET{3}, \emptyset$, highlighted in blue.

To compute  the edge probability $\edgeprob{\{1,2,3,4,5,6\}}{7} = \edgeprob{A}{z}$, {one can repeatedly apply Lemma~\ref{lem:inter_2} to all terms of the form $\edgeprob{A'}{z}$ and rearrange, obtaining that} 
\begin{equation} \label{eq:big_coef_1}
\begin{aligned}
      \sum_{A' \subsetneq A} \edgeprob{A'}{z}
   & = 
     \edgeprob{\SET{1,2,3,4}}{7} +  \edgeprob{\SET{1,2,3,5}}{7} + \edgeprob{\SET{1,2,3,6}}{7}+\edgeprob{\SET{1}}{7} + \edgeprob{\SET{2}}{7} + \edgeprob{\SET{3}}{7} + \edgeprob{\emptyset}{7}
\\ & =
    q_7\left(\universe\setminus\SET{1,2,3,4}\right) + q_7\left(\universe\setminus\SET{1,2,3,5}\right) + q_7\left(\universe\setminus\SET{1,2,3,6}\right)
    \\ &\quad 
    - 2q_7\left(\universe\setminus\SET{1}\right) - 2q_7\left(\universe\setminus\SET{2}\right) - 2q_7\left(\universe\setminus\SET{3}\right) 
    + 4q_7\left(\universe\setminus\emptyset\right).
\end{aligned}
\end{equation}

Observe that the coefficients grow exponentially in the number of {times a set occurred in the expansion} steps, as observable in the coefficient {$q_7(\universe \setminus \emptyset)$} with the largest absolute value, i.e., $4$. This exponential growth pattern persists for a generalization of this example to instances with $(3k-1)$ items and $3k$ types, where the biggest coefficient has an absolute value of $2^{k-1}$.
\end{example}}


Having exponentially large coefficients is not appealing, because they exponentially amplify any estimation errors due to rare rankings. Counteracting such estimation errors {with improved estimation accuracy} would then require exponentially many samples. 
Our main technical contribution is a more careful way to select the queries to use, based on approximate solutions to Set Cover instances that we will introduce shortly. This approach guarantees that the resulting linear system has a good condition number. While the number of distinct queries required is exponential in the size of the Set Cover solution, whenever this solution has only logarithmic size, the resulting number of queries is polynomial. In particular, this is the case under the random model.

\begin{wrapfigure}{R}{0.43\textwidth}
    \centering
    \includegraphics[width=0.4\textwidth]{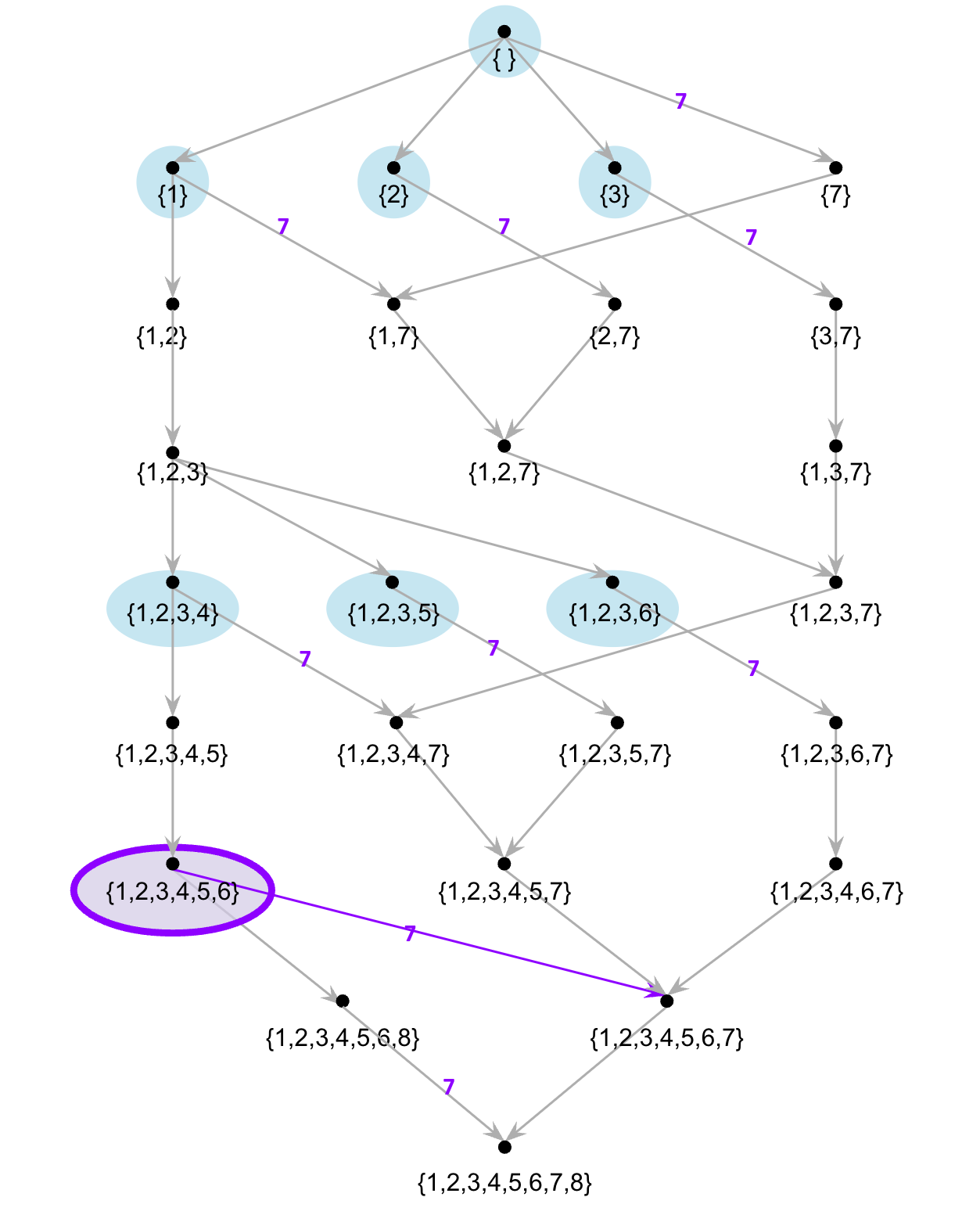} 
    \caption{\footnotesize{The DAG corresponding to Example~\ref{example:naive_2}. The rankings corresponding to the blue nodes interfere when trying to estimate the probability of the ranking corresponding to the purple node and edge. That is, the blue nodes represent the prefixes $A' \subsetneq A = \{1, 2, 3, 4, 5, 6\}$ in Lemma~\ref{lem:inter_2}.}}
    \label{fig:ex_DAG_inter}
\end{wrapfigure}

{
\subsection{{The Active Learning Algorithm}}

In order to avoid the possibly exponential error accumulation, our adaptive algorithm \AlgDAG --- given as Algorithm~\ref{alg:all} --- modifies Algorithm~\ref{alg:all_exact} in three ways:
\begin{enumerate}[leftmargin=*]
    \item To compute the edge probabilities $\edgeprob{A}{z}$, rather than using Lemma~\ref{lem:inter_2}, \AlgDAG uses a more complex new algorithm \AlgIE (Algorithm~\ref{alg:ie}) based on the Inclusion-Exclusion Principle.
    This algorithm internally uses random samples from the population for estimating choice frequencies, and thus must be given error parameters $(\epsilon, \delta)$. It will adjust the number of samples to ensure that with probability at least $1-\delta$, the estimate is accurate to within error $\epsilon$.
    \item Rather than including all edges whose estimated probability is positive, \AlgDAG only includes edges whose estimated probability is at least $\kappa/2$; i.e., it focuses on edges with sufficiently high probability of corresponding types.
    The reason for excluding edges with small probability estimates is that the their estimates are likely very inaccurate unless a large number of samples is used. 
    As a result, \AlgDAG produces a DAG only for the frequent types.
    \item \AlgDAG only builds the first $\numitems = \alpha \cdot n$ levels of the DAG, for a user-specified parameter $\alpha$.
    The reason is that again, estimates for the lower levels tend to become less accurate.
\end{enumerate}

Apart from these modifications, \AlgDAG is identical to Algorithm~\ref{alg:all_exact}.
{It} obtains as input the parameters $\alpha, \epsilon, \delta$, in addition to the parameters $\kappa, \raresum$ of the choice model (or a valid lower bound thereof).
}

\begin{algorithm}[htb]\footnotesize{
\caption{\footnotesize{Adaptive DAG Ranking Algorithm (\AlgDAG)} \label{alg:all}}

\KwIn{$\alpha$ 
$\epsilon$
, and $\delta$. 
}

\KwOut{
A DAG representation $(\hatG_{\numitems},\hatpv,\hatev)$ estimating the top $\numitems = \alpha n$ items in the set of rankings $\Pi^F$ with their associated probabilities}

  Initialize $\hatG_{0} = (\SET{\emptyset},\emptyset)$ as a graph with only one node for the empty set, and no edges.  Set $\hatp(\emptyset) = 1$. 
  

\For{$j=0,\ldots,\numitems-1$}{
  Initialize $\hatV_{j+1}$ and $\hatE_{j+1}$ as empty sets.
  
  \For{each set $A$ on level $j$ of $\hatG_j$}{
    \For{each item $z \notin A$}{
      Set $\hatedgeprob{A}{z}$ to $\text{\AlgIE}({\hatG}_j,A,z;\alpha,\epsilon',\delta')$ with  $\epsilon'=\min(\epsilon/\numitems, \kappa/4)$ and $\delta'={\frac{\delta}{\alpha \numel^2 K}}$.
      
      \If{$\hatedgeprob{A}{z} \geq \kappa/2$}{
        \If{node $A \cup \SET{z}$ is not in $\hatV_{j+1}$}{
          Add node $A \cup \SET{z}$ to $\hatV_{j+1}$. Set $\hatp(A \cup \SET{z}) = 0$.
        }
        
        Set $\hatp(A \cup \SET{z}) = \hatp(A \cup \SET{z}) + \hatedgeprob{A}{z}$. 
        
        Add a directed edge from $A$ to $A \cup \SET{z}$ in $\hatE_{j+1}$, labeled with $z$, and with probability ${\hatedgeprob{A}{z}}$.
      }
    }
  }
}}
\end{algorithm}

{
\subsection{The Inclusion-Exclusion Approach}
\label{sec:AlgIE_new}
As we observed in Example~\ref{example:naive_2} with a coefficient of 4 --- and asserted to hold more generally --- applying Lemma~\ref{lem:inter_2} repeatedly in the straightforward way can ultimately express the edge probabilities as a linear combination of choice frequencies in which the coefficients may grow exponentially. This exponential growth amplifies small estimation errors, and would thus require extremely accurate frequency estimates.
It is desirable to instead express the edge probabilities as a linear combination of choice frequencies in a way that keeps the coefficients bounded. We will use the Inclusion-Exclusion formula to produce a linear combination in which all coefficients are $\pm 1$. 

To motivate the approach, we revisit Example~\ref{example:naive_2}, depicted in Figure~\ref{fig:ex_DAG_inter}.
There, the goal was to estimate the probability of the edge labeled 7 emanating from the purple node $\SET{1,2,3,4,5,6}$.
To do so, we considered all nodes with outgoing edges labeled 7 which are ancestors of the node $\SET{1,2,3,4,5,6}$.
These are the blue nodes $\emptyset, \SET{1}, \SET{2}, \SET{3}, \SET{1,2,3,4}, \SET{1,2,3,5}, \SET{1,2,3,6}$. 
We observe that each type corresponding to one of these prefixes prefers item 7 over at least one of $\SET{5,6}$ (in addition to item 8, which 7 is preferred over by all ancestors of $\SET{1,2,3,4,5,6}$, by virtue of not being in the set $\SET{1,2,3,4,5,6}$).
As an example, all types corresponding to $\SET{1,2,3,5}$ prefer both 7 and 8 over 6; see Figure \ref{fig:ex_DAG_inter}.
This observation ``almost'' allows us to write 
$\sum_{A' \subsetneq \SET{1,2,3,4,5,6}} \edgeprob{A'}{z}
= q_7(\SET{6, 7, 8}) + q_7(\SET{5, 7, 8})$.
The ``almost'' is because simply adding these two probabilities  double-counts types who prefer 7 over \emph{both} 5 and 6 (in addition to 8). To correct for the double-counting, we subtract the probability of double-counted types, which is $q_7(\SET{5, 6, 7, 8})$. 
We thus obtain that 
$
\sum_{A' \subsetneq \SET{1,2,3,4,5,6}} \edgeprob{A'}{z}
= q_7(\SET{6, 7, 8}) + q_7(\SET{5, 7, 8}) - q_7(\SET{5,6,7,8}).
$

Notice that in this formula, all coefficients are $\pm 1$.
Inspecting the sets $\SET{6,7,8}, \SET{5,7,8}$ more closely, we notice that their complements ($\SET{1,2,3,4,5}, \SET{1,2,3,4,6}$) are direct predecessors of $\SET{1,2,3,4,5,6}$, and each of the ancestors of interest (blue nodes) is also an ancestor of at least one of $\SET{1,2,3,4,5}, \SET{1,2,3,4,6}$. In this sense, these two sets \emph{cover} all blues nodes, and this notion lies at the heart of our improved estimation approach. Formally, we define the following:
}

{
\begin{definition}[Set Cover]
\label{def:set-cover}
   Fix a set $A$ and element $z \notin A$.
   Let 
   \[
   \familyminus(A) = \Set{A' \subsetneq A}{|A'| = |A| - 1}
   \]
   denote the collection of subsets of $A$ with size $|A| - 1$.
   Let $\familysetnodepre(A, z) 
   = \Set{A' \subsetneq A}{\edgeprob{A'}{z} > 0}$ 
   be the set of all prefixes $A' \subsetneq A$ which have an outgoing edge to the node $A' \cup \SET{z}$, and let $\familysetnodepre \subseteq \familysetnodepre(A,z)$.

   We say that $\mathcal{C}$ is a \emph{set cover of $\familysetnodepre$} if (1) $\mathcal{C} \subseteq \familyminus(A)$, and (2) for each set $A' \in \familysetnodepre$, there exists a set $C \in \mathcal{C}$ with $A' \subseteq C$, i.e., each set in $\familysetnodepre$ is contained in some set in $\mathcal{C}$. 
\end{definition}

   Definition~\ref{def:set-cover} is closely related to the standard notion of set covers in algorithm design, a fact we elaborate on more in Appendix~\ref{sec:greedy}, and which we exploit in our algorithm design.
}

{
When $\mathcal{C}$ is a set cover of $\familysetnodepre(A, z)$, the Inclusion-Exclusion formula we previously saw by example generalizes as follows:
\begin{proposition}[Inclusion-Exclusion with Exact Probabilities]
\label{prop:incl-excl-exact}
  Fix a set $A$ and item $z \notin A$, and let $\mathcal{C}$ be a set cover of $\familysetnodepre(A, z)$ using $\familyminus(A)$. 
  Then, 
   \[
   \edgeprob{A}{z}
   = q_z(\universe\setminus A) - \sum_{\mathcal{B} \subseteq \mathcal{C}, \mathcal{B} \neq \emptyset} (-1)^{|\mathcal{B}| + 1} \cdot q_z(\universe \setminus \bigcap_{C \in \mathcal{B}} C).
   \]
\end{proposition}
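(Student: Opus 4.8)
The plan is to reduce the statement to Lemma~\ref{lem:inter_2} and then prove the remaining piece as a clean inclusion-exclusion identity. Lemma~\ref{lem:inter_2} already gives $\edgeprob{A}{z} = q_z(\universe\setminus A) - \sum_{A' \subsetneq A} \edgeprob{A'}{z}$, and its leading term $q_z(\universe\setminus A)$ matches the leading term of the proposition verbatim. So it suffices to show that the inclusion-exclusion sum re-expresses the sum of ancestor edge probabilities, i.e.,
\[
\sum_{A' \subsetneq A} \edgeprob{A'}{z} \;=\; \sum_{\emptyset \neq \mathcal{B} \subseteq \mathcal{C}} (-1)^{|\mathcal{B}|+1}\, q_z\Bigl(\universe \setminus \bigcap_{C \in \mathcal{B}} C\Bigr).
\]

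I would argue this probabilistically. Let $\pi$ be a random type drawn according to $p$, and let $B$ be the (random) set of items that $\pi$ ranks before $z$. By the definition of $\edgetype{A'}{z}$, we have $\edgeprob{A'}{z} = \Prob{B = A'}$, so the left-hand side equals $\Prob{B \subsetneq A}$. For the right-hand side, I would first record the identity $q_z(\universe \setminus D) = \Prob{B \subseteq D}$, valid for any $D$ with $z \notin D$: this is exactly Proposition~\ref{prop:exact_choice_prob} applied to $S = \universe \setminus D$, since $A' \cap (\universe \setminus D) = \emptyset$ iff $A' \subseteq D$. Taking $D = \bigcap_{C\in\mathcal{B}} C$ (legitimately, since $D \subseteq A$ and $z \notin A$ give $z \notin D$), each summand becomes $\Prob{B \subseteq \bigcap_{C\in\mathcal{B}} C} = \Prob{\bigcap_{C \in \mathcal{B}}\{B \subseteq C\}}$. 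Hence, setting $F_C = \{B \subseteq C\}$ for $C \in \mathcal{C}$, the right-hand side is precisely the inclusion-exclusion expansion of $\Prob{\bigcup_{C \in \mathcal{C}} F_C}$.

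It then remains to show $\Prob{\bigcup_{C \in \mathcal{C}} F_C} = \Prob{B \subsetneq A}$, and this is the only step where the set-cover hypothesis is used; I expect it to be the main (though mild) subtlety. One inclusion holds at the level of events: every $C \in \mathcal{C} \subseteq \familyminus(A)$ has $C \subsetneq A$, so $B \subseteq C$ implies $B \subsetneq A$, giving $\bigcup_C F_C \subseteq \{B \subsetneq A\}$. For the reverse, I would decompose $\{B \subsetneq A\} = \bigsqcup_{A' \subsetneq A}\{B = A'\}$ and discard the outcomes with $\edgeprob{A'}{z} = \Prob{B = A'} = 0$; every remaining value $A'$ lies in $\familysetnodepre(A,z)$ and so, by the covering condition of Definition~\ref{def:set-cover}, satisfies $A' \subseteq C$ for some $C \in \mathcal{C}$, placing $\{B = A'\}$ inside $F_C$. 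Thus the two events differ only by a probability-zero set, which yields $\Prob{\bigcup_C F_C} = \Prob{B \subsetneq A}$ and completes the proof. As an alternative to this probabilistic bookkeeping, one can expand every $q_z$ via Proposition~\ref{prop:exact_choice_prob}, interchange the order of summation, and verify that the coefficient of each $\edgeprob{A'}{z}$ with $A' \in \familysetnodepre(A,z)$ equals $\sum_{\emptyset \neq \mathcal{B} \subseteq \mathcal{C}_{A'}} (-1)^{|\mathcal{B}|+1} = 1$, where $\mathcal{C}_{A'} = \{C \in \mathcal{C} : A' \subseteq C\}$ is nonempty by the cover property and the value $1$ follows from $(1-1)^{|\mathcal{C}_{A'}|} = 0$.
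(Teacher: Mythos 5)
Your proof is correct and follows essentially the same route as the paper's: both reduce the claim to Lemma~\ref{lem:inter_2} and then prove the remaining identity by recognizing each $q_z(\universe \setminus \bigcap_{C\in\mathcal{B}} C)$ as the probability of $\bigcap_{C\in\mathcal{B}}\{B \subseteq C\}$ (your events $F_C$ are precisely the paper's sets $\choicetype{z}{\universe\setminus C}$) and applying standard inclusion-exclusion to their union, with the cover property ensuring that this union coincides with $\{B \subsetneq A\}$. The only difference is organizational: the paper derives the proposition as a corollary of a more general statement (Lemma~\ref{lem:ie-lemma}, stated for maximal subfamilies $\familysetnodepre \subseteq \familysetnodepre(A,z)$ because it is reused in the analysis of \AlgIE with rare types), where the maximality hypothesis plays the role that discarding probability-zero prefixes plays in your direct argument for the special case $\familysetnodepre = \familysetnodepre(A,z)$.
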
}

{Proposition~\ref{prop:incl-excl-exact}, which is proven  in Appendix \ref{sec:Proposition:proof}, serves as the central motivation for the Inclusion-Exclusion formula used in our algorithm \AlgIE (Algorithm~\ref{alg:ie}).}

\begin{algorithm}[htb]
\footnotesize{
\caption{\footnotesize{The Set Cover based Inclusion-Exclusion Algorithm ($\text{\AlgIE}(\hatG_j,A,z;\epsilon,\delta)$)}}
\label{alg:ie}

\KwIn{Graph $\hatG_j=(\hatV_j, \hatE_j)$, node/set $A$, item $z$, $\epsilon$, and $\delta$}

\KwOut{An estimate of the edge probability $\hatedgeprob{A}{z}$}

\tcc{{Find} a minimum set {cover}}

Let $\familysetnodeprealg = \Set{A' \in \hatV_j}{A'\subsetneq A, A' \cup \SET{z} \in \hatV_j}$ and $\familyminus(A) = \Set{A' \subsetneq A}{|A'|= |A|-1}$.

Find an approximately optimal set cover $\coveralg = 
\greedymincover\left({\familysetnodeprealg},\familyminus{(A)}\right)$. (See Appendix~\ref{sec:greedy}.)

\tcc{Estimate the probabilities for all subsets in Inclusion-Exclusion}

Let $\numsamples = \left\lceil \frac{2^{2|\coveralg|-1} \cdot (\ln(1/\delta) + (|\coveralg|+1) \cdot \ln(2))}{\epsilon^2} \right\rceil$.

Offer the assortment $\universe \setminus A$ to $\numsamples$ consumers. Let $\hatq_z(\universe \setminus A)$ be the fraction of consumers who choose $z$ under assortment $\universe \setminus A$.

\For{every $\mathcal{B} \subseteq \coveralg,
\mathcal{B} \neq \emptyset$}{
 Offer the assortment $\universe\setminus \bigcap_{B\in\mathcal{B}} B$ to $\numsamples$ consumers.

 Let $\hatq_z(\universe\setminus \bigcap_{B\in\mathcal{B}} B)$ be the fraction of consumers who choose $z$ under assortment $\universe \setminus \bigcap_{B\in\mathcal{B}} B$.
} 
Let $\hatedgeprob{A}{z} = \hatq_z(\universe \setminus A) - \sum_{\mathcal{B}\subseteq \coveralg, \mathcal{B} \neq \emptyset}
(-1)^{|\mathcal{B}|+1} \cdot \hatq_z(\universe\setminus \bigcap_{B\in\mathcal{B}} B)$.

\Return{the edge probability $\hatedgeprob{A}{z}$.}}
\end{algorithm}

{Because the right-hand side of the formula in Proposition~\ref{prop:incl-excl-exact}, which is used to calculate the estimated edge probabilities, contains a number of terms exponential\footnote{Note, however, that some (or many) of the intersections may be empty, so the actual contributing number may be smaller.} in $|\mathcal{C}|$, it is desirable to make this set cover $\mathcal{C}$ as small as possible. 
Unfortunately, as we show in Appendix~\ref{sec:greedy}, our Set Cover problem subsumes the standard \textsc{Set Cover} problem, which is well known to be NP-hard.
Thus, minimizing $|\mathcal{C}|$ exactly is unlikely to be possible in polynomial time. 
Instead, we use the greedy algorithm, which is known (see, e.g., \citep{kleinberg2006algorithm}) to offer an approximation guarantee logarithmic in the number of items.}

{The greedy algorithm for \textsc{Set Cover}, denoted by $\greedymincover\left(\familysetnodepre,\familyminus(A)\right)$, repeatedly selects a set from $\familyminus(A)$ that contains as subsets the largest number of prefixes $A' \in \familysetnodepre$ not contained in any selected set so far. The algorithm terminates when all prefixes in $\familysetnodepre$ have been covered.
See Appendix~\ref{sec:greedy} for details. 

Combining these ideas, we obtain the improved Set Cover based Inclusion-Exclusion Algorithm, \AlgIE (Algorithm~\ref{alg:ie}). Recall that this algorithm is used in \AlgDAG as a way to obtain improved edge probability estimates. Moreover, the number of samples required by \AlgIE depends on the size of the set covers found in the algorithm; hence, if all set covers encountered when \AlgIE is called by \AlgDAG are small enough, the number of samples needed is small.}

\subsection{Guarantees for \AlgDAG}
The key guarantees for \AlgDAG (Algorithm~\ref{alg:all}) are summarized by the following theorem, which we prove in {Appendix}~\ref{sec:proof}. 
In the statement of the theorem, recall that $G^F_{\numitems}$ denotes the subgraph of the true DAG induced by the first $\numitems$ layers and only the prefixes appearing for frequent types.

\begin{theorem}[Performance Guarantee of \AlgDAG]
\label{thm:main} Let $\numitems=\alpha\numel$ for some constant $\alpha\in(0,1)$, and assume that the total probability of the rare rankings is at most $\raresum < \kappa/4$. Under the \genmodel{\kappa}{\raresum}, Algorithm~\ref{alg:all} (run with parameters $\epsilon > 0$ and $\delta\in (0,1)$) returns an estimated choice model $(\hatG_{\numitems}, \hatpv, \hatev)$,
such that with probability at least $1-\delta$, the following all hold:
\begin{align*}
\hatV_{\numitems} & = V^F_{\numitems}
& \hatE_{\numitems} & = E^F_{\numitems}
& 
\max_{A \in V_{\numitems}^{F}} \left|\hatp(A) - p(A)\right| & \leq \epsilon + \numitems \raresum
& 
\max_{A \in V_{\numitems}^F, A \cup \SET{z} \in V_{\numitems}^F} \left| \hatedgeprob{A}{z} - \edgeprob{A}{z} \right| \leq \epsilon/\numitems + \raresum.
\end{align*}

Furthermore, under the \randmodel{\kappa}{\raresum}, writing $\cons = \log_{\alpha}(1/2) > 0$, with probability at least $1-2\delta$, the total number of queries and the total computation time are bounded by 
\[ O \left( \frac{K^{1+4\cons} \cdot \numel^{4+2\cons} \cdot 
\log (\numel K/\delta)}{\delta^{2\cons} \cdot \min(\epsilon,\kappa)^2}\right).
\]
\end{theorem}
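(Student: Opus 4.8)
The plan is to prove the two halves of the statement separately: the exact‑recovery and accuracy guarantees, which hold under the \genmodel{\kappa}{\raresum}, and the query/time bound, which holds under the \randmodel{\kappa}{\raresum}. For the first half I would argue by induction on the level $j$, maintaining the invariant that $\hatG_j = G^F_j$ (correct nodes, edges, and incident probabilities up to the stated accuracy). For the inductive step, assume $\hatG_j = G^F_j$, fix a frequent prefix $A$ on level $j$ and an item $z \notin A$, and analyze the call $\AlgIE(\hatG_j, A, z; \epsilon', \delta')$ with $\epsilon' = \min(\epsilon/\numitems, \kappa/4)$. I would decompose the error of $\hatedgeprob{A}{z}$ into a sampling part and a bias part. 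For the sampling part, each $\hatq_z(\cdot)$ is an average of $\numsamples$ i.i.d.\ Bernoulli draws, and the Inclusion‑Exclusion estimator is a $\pm 1$ combination of at most $2^{|\coveralg|}$ such terms; a Hoeffding bound together with a union bound over the at most $2^{|\coveralg|}$ offered assortments shows that the chosen $\numsamples$ guarantees $|\hatedgeprob{A}{z} - \tilde e| \le \epsilon'$ with probability $\ge 1 - \delta'$, where $\tilde e$ is the value the estimator would take on exact probabilities (indeed $\numsamples$ is reverse‑engineered from this computation with per‑query slack $\epsilon'/2^{|\coveralg|}$). For the bias part, expanding each $q_z(\universe \setminus \bigcap_{C\in\mathcal{B}} C)$ via Proposition~\ref{prop:exact_choice_prob} yields $\tilde e = \edgeprob{A}{z} + \sum_{A'} \edgeprob{A'}{z}$, where the sum runs over positive‑probability prefixes $A' \subsetneq A$ \emph{not} covered by $\coveralg$. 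The structural key is that every prefix–edge used by a frequent type has probability at least $\kappa$, hence lies in $\familysetnodeprealg$ and is covered; so every uncovered $A'$ carries only rare types, and by the disjointness in Proposition~\ref{prop:basic-properties} the correction sum is at most $\raresum$. Thus $|\hatedgeprob{A}{z} - \edgeprob{A}{z}| \le \epsilon' + \raresum$.

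Given this per‑edge guarantee, correctness of the node and edge \emph{sets} follows from the $\kappa/2$ threshold. A frequent edge has $\edgeprob{A}{z} \ge \kappa$, so its estimate is at least $\kappa - \epsilon' \ge 3\kappa/4 \ge \kappa/2$ and is kept; a non‑frequent edge out of the frequent prefix $A$ carries only rare types, and since both $\edgeprob{A}{z}$ and the uncovered‑prefix correction consist of disjoint rare types, we get $\tilde e \le \raresum$, so the estimate is at most $\raresum + \epsilon' < \kappa/2$ and is dropped. This is exactly where $\raresum < \kappa/4$ and $\epsilon' \le \kappa/4$ are used. The node probability $\hatp(A)$ is then a sum of at most $|A| \le \numitems$ kept incoming‑edge estimates, each within $\epsilon/\numitems + \raresum$ of the truth, giving $|\hatp(A) - p(A)| \le \epsilon + \numitems\raresum$. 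A union bound over the at most $\numitems \cdot K \cdot \numel$ calls, each failing with probability at most $\delta' = \delta/(\alpha \numel^2 K)$, makes the total failure probability at most $\delta$, establishing the first part.

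For the query/time bound, the number of \AlgIE calls is at most $\numitems \cdot K \cdot \numel$, and a single call offers at most $2^{|\coveralg|}$ distinct assortments to $\numsamples = \Theta(2^{2|\coveralg|}\log(1/\delta')/\epsilon'^2)$ consumers each, costing $2^{O(|\coveralg|)} \cdot \text{poly}$ queries and comparable computation (the Inclusion‑Exclusion sum has $2^{|\coveralg|}$ terms). Using $1/\epsilon'^2 = O(\numel^2/\min(\epsilon,\kappa)^2)$ and $\log(1/\delta') = O(\log(\numel K/\delta))$, the whole bound is polynomial \emph{provided} every set cover satisfies $|\coveralg| = O(\cons \log(\numel K/\delta))$. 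The crux is therefore a cover‑size lemma under the random model. For fixed $A$ with $|A| = j \le \numitems = \alpha\numel$ and fixed $z \notin A$, a uniformly random ranking has the set of items it ranks before $z$ equal to a size‑$\ell$ subset of $A$ with probability at most $\alpha^\ell/(\numel - \ell)$, which decays geometrically; summing the tail and taking a union bound over all $\le \numitems K \numel$ pairs $(A, z)$ shows that with probability $\ge 1 - \delta$ every relevant prefix has length at most $L = O(\cons \log(\numel K/\delta))$, where $\cons = \log_\alpha(1/2)$ is characterized by $\alpha^{\cons} = 1/2$ (equivalently $\cons = 1/\log_2(1/\alpha)$). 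Since $\coveralg \subseteq \familyminus(A)$ has at most $|A|$ sets, and a greedy step shrinks the still‑uncovered subfamily by a factor of about $L/|A|$ (a random removed element lies in a prefix of size $\le L$ with probability $\le L/|A|$), I would conclude $|\coveralg| = O(L)$; substituting $2^{O(L)} = (\numel K/\delta)^{O(\cons)}$ into the per‑call cost produces the exponents $1 + 4\cons$, $4 + 2\cons$, and $2\cons$. Adding this event's failure probability $\delta$ to the correctness failure probability $\delta$ gives the overall $1 - 2\delta$.

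The main obstacle is the cover‑size lemma. Two points require care. First, $A$ is itself a random prefix of a frequent type, so the prefixes of the \emph{other} frequent types must be analyzed conditionally on $A$ while preserving independence, and the $\alpha^\ell$ tail together with the union bound over all $(A,z)$ must be made uniform and rigorous. Second, and most delicately, one must track constants so that the greedy output $|\coveralg|$ translates into exactly the claimed exponents rather than a quasi‑polynomial blow‑up: the regime where $|A|$ is only a constant factor above $L$ (so the geometric shrinkage is slow) must be handled by the trivial bound $|\coveralg| \le |A|$ and by the high‑probability bound on the \emph{number} of relevant prefixes $|\familysetnodeprealg|$ (whose expectation is only $O(K/\numel)$), rather than by the generic $O(\log K)$ greedy approximation guarantee, which would be too lossy.
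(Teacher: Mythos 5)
Your first half --- the induction on levels with invariant $\hatG_j = G^F_j$, the decomposition of \AlgIE's error into a Hoeffding sampling term (per-assortment slack $\epsilon'/2^{|\coveralg|}$, union bound over the $2^{|\coveralg|}$ offered assortments) plus a bias of at most $\raresum$ from uncovered prefixes (which can only carry rare types, since every frequent prefix lies in $\familysetnodeprealg$ and is therefore covered), and the $\kappa/2$ thresholding argument using $\raresum < \kappa/4$ and $\epsilon' \le \kappa/4$ --- is exactly the paper's Lemma~\ref{lemma:ie-proof} combined with the inductive step in Appendix~\ref{sec:proof}. One bookkeeping point: your node-probability bound only sums errors over kept incoming edges; you must also charge the true mass of incoming edges from prefixes \emph{not} in $V^F_j$ (and of dropped rare edges), which the paper bounds by an extra $\raresum$ using disjointness of the underlying rare types. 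This is minor and does not change the stated bound.

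The genuine gap is in the cover-size bound under the random model, precisely the step you flag as the crux and leave unresolved. Your tail bound --- a uniformly random type ranks a size-$\ell$ subset of $A$ directly before $z$ with probability at most $\alpha^{\ell}/(\numel-\ell)$ --- is exactly the paper's Lemma~\ref{lemma:upperboundcover} (the event $\Event{c}$). But your conversion of ``all relevant frequent prefixes have length at most $L$'' into $|\coveralg| = O(L)$ via a per-step greedy shrinkage factor $L/|A|$ is both unnecessary and too lossy: in the regime $|A| = \Theta(L)$ you retreat to $|\coveralg| \le |A| \le 2L$, which doubles the exponent, yielding roughly $(\numel K/\delta)^{4\cons}$ in place of $2^{2c} \approx (K^2\numel/\delta)^{2\cons}$, so the claimed exponents $1+4\cons$, $4+2\cons$, $2\cons$ do not come out of your argument. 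The missing idea --- the paper's actual argument, which is deterministic and needs no approximation ratio, no shrinkage analysis, and no case split on $|A|$ versus $L$ --- is that the covering structure itself forces long prefixes: every greedy pick has the form $A \setminus \SET{x_i}$, and a prefix $A'$ left uncovered after $t$ picks must satisfy $x_i \in A'$ for all $i \le t$, hence $|A'| \ge t$. Consequently, if no frequent prefix in $\familysetnodepreF(A,z)$ has size at least $c$ (the negation of $\Event{c}$, which your own tail bound establishes with probability $1-\delta$ for $c = \lceil \log_{1/\alpha}(K^2\numel/\delta)\rceil$), greedy terminates after at most $c$ picks, giving the exact bound $|\coveralg| \le c$ that plugs into Lemma~\ref{lemma:ie-proof} and produces the stated polynomial with the stated exponents.
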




\AlgDAG works for both the General and Random Models. However, the size of all set covers is \emph{guaranteed} to be logarithmically small only for the Random Model (with high probability) --- this results in a polynomial total number of samples in that case. Nevertheless, even for the General Model, if the size of all set covers is small, the required number of samples is also small. Furthermore, notice that in \AlgIE, after finding the approximate minimum set cover, the algorithm knows the size of the set cover. Therefore, \AlgIE (and consequently also \AlgDAG) can easily diagnose when a larger number of samples is needed to obtain good accuracy; alternatively, the algorithm can use a theoretically insufficient number of samples and return the result with some confidence interval around the probability of each node and edge.

{
\section{{Implications of Approximations in DAG Representation}}
\label{sec:approximation-implications}

The DAG representation serves several important purposes. 
Firstly, it can act as a potentially valuable visualization of consumer preferences. 
Secondly, it has a practical application: using the DAG representation, one can readily identify all top-$j$ sets of frequent types for a given $j$, represented as nodes at level $j$. 
These sets can then be used to  create an assortment containing at least one top-$j$ item for each frequent type{, or} similar assortment design tasks.

 Our findings in Section~\ref{sec:dag-choice} reveal an equivalence between the DAG representation and choice probabilities. Thus, an exact DAG {facilitates} the precise calculation of choice probabilities, and vice versa. 
This in turn enables the application of the DAG framework in optimization problems like assortment planning under various constraints including \emph{user and item fairness}. 
Such constraints ensure that in expectation, the offered set maintains a sufficiently high market share (probability of purchase) for each frequent user type and for each item. See \cite{chen2022fair, chen2023interpolating, lu2023simple} for  recent works that study such constraints.

While an exact DAG representation enables the computation of exact choice probabilities, \AlgDAG computes an approximate DAG, raising the question of how this approximation impacts the computation of choice probabilities.
Observe that there are three sources of approximation as characterized in Theorem~\ref{thm:main}:

\begin{enumerate}
    \item Only prefixes corresponding to frequent types will occur in the representation output by \AlgDAG.
    \item The vertex and edge probabilities are estimated only up to some errors.
    \item The DAG is truncated at level $\numitems = \alpha \numel$.
\end{enumerate}

{The first error source} is most benign.  Rare types  contribute a very small probability (at most $\raresum$ total), so not including {their prefixes} in calculations can change the choice probability estimates by at most $\raresum$.

The impact of the second source can also be bounded. In Proposition~\ref{prop:exact_choice_prob}, each frequent type $\pi$ can contribute at most one term to the sum, because $\pi$ uniquely defines the set $A$ such that $\pi \in \edgetype{A}{z}$, as the set of all items preceding $z$ in the ranking $\pi$.
As a result, the sum can have at most $K$ non-zero terms, and each is estimated, by Theorem~\ref{thm:main}, to within an additive error at most $\epsilon/\numitems + \raresum$. Thus, the total estimation error due to errors in estimates of $\edgeprob{A}{z}$ can be at most $K \cdot (\epsilon/\numitems + \raresum)$. The first term can be made small by decreasing $\epsilon$; for the second term, one needs an assumption that $K \raresum$ is not too large, i.e., the rare types make up only a small fraction of the population in total.

Most problematic  is the truncation at level $\numitems = \alpha \numel$. Indeed, if there is a (very) frequent type $\pi$, and the assortment offered comprises the two least favorite items of $\pi$, the available information will not allow for any accurate estimation of the probability with which this type --- and by extension the population --- chooses one item vs.~the other.
More generally, any assortment $S$ such that all items in $S$ are among the bottom $1-\alpha$ fraction of types whose total probability mass is large enough will not allow an estimation of the probability with which any particular item $z \in S$ is chosen.
However, the converse is also true: if $S$ \emph{does} contain, for each frequent type $\pi$,  at least one item ranked in the top $\alpha$ fraction of items, then the truncation is not problematic from the perspective of computing $q_z(S)$. 
To see this, recall that by Proposition~\ref{prop:exact_choice_prob}, we have $q_z(S)  = \sum_{A: A \cap S = \emptyset} p(\edgetype{A}{z})$. 
Now consider any set $A$ of size more than $\numitems$ with $A \cap S = \emptyset$, whose probability thus was not estimated. Any type $\pi \in \edgetype{A}{z}$ must have all of $A$ preceding all of $\universe \setminus A$, and in particular all of $S$. Because $|A| > \numitems$, this means that no items of $S$ appears in the first $\numitems$ items of $\pi$, so by the assumption, $\pi$ cannot be a frequent type. 
Therefore, $\edgetype{A}{z}$ cannot contain any frequent types for any $A$ that was not estimated, and the estimate of $q_z(S)$ will be accurate up to rare types.

From a practical perspective, assortments under which a frequent type dislikes all items (in the sense of ranking all of them in the bottom $1-\alpha$ fraction of the rankings) are perhaps of less interest to evaluate. 
In particular, this applies if consumers would prefer not to purchase any items in such cases. 
In fact, if we assume that a no-purchase option is always included as an ``item'', and consumers dislike their bottom $1-\alpha$ fraction of items enough to always prefer the no-purchase option, then the no-purchase option will be an element of every set $S$ which is ranked in the top $\numitems$ items of each type $\pi$. 
As a result, the discussion from the previous paragraph exactly applies.

Under the random model, choice sets are also unlikely to face this issue.
For a choice set of size $S$, the probability that a random type ranks all of $S$ in the bottom $1-\alpha$ fraction of items is at most  $(1-\alpha)^{|S|}$. By a union bound over all frequent types, the probability that at least one frequent type is affected is at most $K (1-\alpha)^{|S|}$.
Whenever the choice set $S$ is sufficiently large, this quantity will be small enough that most choice probabilities will be estimated accurately.

Thus, under various natural modeling assumptions, the impact of all types of errors is limited. Nonetheless, we believe that eliminating the third source of error --- i.e., trying to not truncate the DAG at all --- is perhaps the most interesting technical direction for future work.

}

\section{Experiments}
\label{sec:experiment}
{
In this section, we describe two experiments.
The first experiment complements the theoretical discussion in Section \ref{sec:approximation-implications} and focuses the impact of approximation in the DAG representation on the estimated choice probabilities.
The second experiment focuses on the accuracy of \AlgDAG in estimating the true DAG. 
Our goal in the second experiment is to investigate the benefit of actively learning the choice model, compared to learning the best choice model given the available transaction data, as previously done in \citet{farias2013nonparametric}, \citet{van2015market}, and \citet{van2017expectation}.

\subsection{Impact of Approximation  on Estimated Choice Probabilities} \label{sec:Experiment:choice Prob}

\textbf{Setup.} We generate choice models as follows. We consider $5$ frequent types and $n\in \{8, 16\}$ items. The total probability $\raresum$ of noise is  chosen from $\SET{0.001, 0.01, 0.05}$. 
For any  $\raresum $, there are $20$ noisy types.\footnote{Separate experiments have shown that changing the number of rare types does not impact our results given a fixed $\rho$.} 
For each of the frequent and noisy types, a ranking over the $n$ items is chosen i.i.d.~uniformly at random. 
The probability of the frequent types is drawn from a symmetric Dirichlet distribution whose coefficient of variation is $0.1$, 
multiplied by $(1-\raresum)$. 
The probability of the noisy types is distributed as a symmetric Dirichlet distribution whose coefficient of variation is $0.1$, multiplied by $\raresum$. 
For each of $\raresum\in\SET{0.001, 0.01, 0.05}$, $100$ instances of the described choice models are generated.

\textbf{Metric.} For each choice model, we uniformly randomly generate 100 sets of size $k=4$. (The results are similar when k is set to 3 or 5.) We then calculate the average L1 error in the estimated choice probabilities {for} the randomly chosen set. 
More specifically, let $S$ be the (randomly chosen) set and $(\hat{q}_z(S))_{z\in S}$ be the estimated choice probabilities {for the elements in} this set. The L1 error  is $\sum_{z\in S}|\hat{q}_z(S) - q_z(S)|$. For each choice model, we compute the average L1 errors, taken over the {100 draws of} random sets.

\subsubsection{Results} 

\begin{table}[htbp]
    \centering
    \begin{minipage}[b]{0.42\textwidth}
    \centering
    \caption{L1 error of choice probabilities for $\numel=8$ and random sets of size $4$ with $\kappa=0.01$,	and $\delta=0.05$.} \label{table:n8}
 \footnotesize
    \begin{tabular}{cccccccc}
        \toprule 
     \multirow{2}{*}{\textbf{$n_0$}} & \multirow{2}{*}{$\rho$} & \multirow{2}{*}{$\epsilon$} & \multicolumn{5}{c}{\textbf{Statistics}} \\
        \cmidrule{4-8}
        &&& \textbf{Mean} & \textbf{Std} & \textbf{25\%} & \textbf{50\%} & \textbf{75\%} 
        \\
        \midrule
     \multirow{6}{*}{3} &  \multirow{2}{*}{ 0.001}& 0.001 & 0.071 & 0.011 & 0.064 & 0.072 &0.079 
     \\      
        &  & 0.01 & 0.072 & 0.010 & 0.065 & 0.072 & 0.080 
        \\
        & \multirow{2}{*}{   0.01} & 0.001 & 0.077 & 0.012 & 0.070 & 0.078 & 0.084 
        \\
        &  & 0.01 & 0.076 & 0.010 & 0.070 & 0.077 & 0.082 
        \\
        & \multirow{2}{*}{   0.05} & 0.001 & 0.087 & 0.012 & 0.080 & 0.086 & 0.095 
        \\
        &  & 0.01 & 0.085 & 0.012 & 0.077 & 0.085 & 0.092 
        \\
        \cmidrule(lr){1-8}
     \multirow{6}{*}{4} & \multirow{2}{*}{0.001} & 0.001 & 0.014 & 0.005 & 0.010 & 0.014 &  0.017 
     \\
    &  & 0.01 & 0.014 & 0.005 & 0.010 & 0.014 & 0.018 
    \\ 
 &   \multirow{2}{*}{0.01} & 0.001 & 0.020 & 0.006 & 0.015 & 0.020 & 0.023 
 \\ 
  &    & 0.01 & 0.020 & 0.005 & 0.016 & 0.019 & 0.022 
  \\ 
  &   \multirow{2}{*}{0.05} & 0.001 & 0.031 & 0.006 & 0.027 & 0.031 & 0.035 
 \\ 
  &    & 0.01 & 0.032 & 0.005 & 0.028 & 0.031 & 0.036 
  \\ 
     \cmidrule(lr){1-8}
         \multirow{6}{*}{5} & \multirow{2}{*}{0.001} & 0.001 & 0.001 & 0.000 & 0.001 & 0.001 &  0.001 
     \\
    &  & 0.01 & 0.001 & 0.000 & 0.001 & 0.001 & 0.001 
    \\ 
 &   \multirow{2}{*}{0.01} & 0.001 & 0.006 & 0.001 & 0.005 & 0.005 & 0.006 
 \\ 
  &    & 0.01 & 0.006 & 0.001 & 0.005 & 0.006 & 0.006 
  \\ 
  &   \multirow{2}{*}{0.05} & 0.001 & 0.018 & 0.003 & 0.016 & 0.017 & 0.019 
 \\ 
  &    & 0.01 & 0.017 & 0.002 & 0.016 & 0.017 & 0.019 
  \\ 
        \bottomrule
    \end{tabular}
    \end{minipage}
    \hfill
    \begin{minipage}[b]{0.42\textwidth} 
        \centering \footnotesize
           \caption{L1 error of choice model for $\numel=16$ and random sets of size $4$ with $\kappa=0.01$,
	and $\delta=0.05$.} \label{table:n16}
  \begin{tabular}{ccccccccc}    
        \toprule
        \multirow{2}{*}{\textbf{$n_0$}} & \multirow{2}{*}{$\rho$} & \multirow{2}{*}{$\epsilon$} & \multicolumn{5}{c}{\textbf{Statistics}} \\
        \cmidrule(lr){4-8}
        &&& \textbf{Mean} & \textbf{Std} & \textbf{25\%} & \textbf{50\%} & \textbf{75\%} \\
        \midrule
        \multirow{6}{*}{6} & \multirow{2}{*}{0.001} & 0.001 & 0.118 & 0.014 & 0.108 & 0.117 & 0.126 \\
        &  & 0.01 & 0.118 & 0.015 & 0.106 & 0.118 & 0.127 \\
        & \multirow{2}{*}{0.01} & 0.001 & 0.121 & 0.014 & 0.110 & 0.121 & 0.130 \\
        &  & 0.01 & 0.122 & 0.015 & 0.111 & 0.124 & 0.134 \\
        & \multirow{2}{*}{0.05} & 0.001 & 0.142 & 0.012 & 0.133 & 0.141 & 0.150 \\
        &  & 0.01 & 0.141 & 0.013 & 0.132 & 0.141 & 0.150 \\
        \cmidrule(lr){1-8}
           \multirow{6}{*}{8} & \multirow{2}{*}{0.001} & 0.001 & 0.040 & 0.009 & 0.034 & 0.039 & 0.045 \\
        &  & 0.01 & 0.039 & 0.008 & 0.033 & 0.039 & 0.045 \\
        & \multirow{2}{*}{0.01} & 0.001 & 0.046 & 0.008 & 0.040 & 0.045 & 0.050 \\
        &  & 0.01 & 0.048 & 0.008 & 0.042 & 0.048 & 0.051 \\
        & \multirow{2}{*}{0.05} & 0.001 & 0.066 & 0.010 & 0.059 & 0.065 & 0.072 \\
        & & 0.01 & 0.066 & 0.008 & 0.060 & 0.067 & 0.072 \\
        \cmidrule(lr){1-8}
           \multirow{6}{*}{10} &\multirow{2}{*}{0.001} & 0.001 & 0.009 & 0.004 & 0.007 & 0.009 & 0.011 \\
        &  & 0.01 & 0.009 & 0.004 & 0.007 & 0.009 & 0.012 \\
        & \multirow{2}{*}{0.01} & 0.001 & 0.016 & 0.004 & 0.014 & 0.016 & 0.018 \\
        &  & 0.01 & 0.016 & 0.005 & 0.013 & 0.016 & 0.019 \\
        & \multirow{2}{*}{0.05} & 0.001 & 0.038 & 0.005 & 0.035 & 0.038 & 0.041 \\
        &  & 0.01 & 0.037 & 0.005 & 0.034 & 0.037 & 0.041 \\
        \bottomrule
    \end{tabular}
    \end{minipage}
\end{table}

Tables~\ref{table:n8} and \ref{table:n16} present the average (mean) of the L1 errors, along with their standard deviation (std), and 25th, 50th, and 75th percentiles for $\numel=8$ and $\numel=16$, respectively. This analysis is conducted for values of $\raresum \in \{0.001, 0.01, 0.05\}$, $\epsilon\in \{0.001,  0.01\}$, $\kappa=0.01$, and $\delta =0.05$. 
In Table~\ref{table:n8}, for certain values of $\numitems$, the results for $\raresum=0.001$ are omitted, because all numbers are zero after rounding them to three decimal places. 
In Table~\ref{table:n8}, corresponding to $\numel=8$, we consider $\numitems$ values from the set $\{3, 4, 5\}$. 
These values are then doubled in Table~\ref{table:n16} when the number of items is $\numel=16$.
We notice relatively small L1 errors of choice probabilities. For example, even when $\numel=8$ and $\numitems=3$ with a noise level of 5\%,  the mean L1 error for a random set of size $4$ is only $0.031$, with 75\% of the samples having an L1 error of less than $0.035$.
Further observations include:

\begin{itemize}[leftmargin=*]
\item \textbf{Impact of $\rho$ and $\epsilon$:} As expected, the mean and standard deviation increase in $\raresum$ and $\epsilon$, as there is more noise in the model or fewer samples are drawn. However, this increase is only small.

\item \textbf{Impact of truncation:} As the truncation level $\numitems$ increases (so the DAG representation has more levels), there is a general trend of decreasing mean L1 error, standard deviation, and percentiles. Consistent with the discussion in Section~\ref{sec:approximation-implications}, the impact of $\numitems$ is significantly larger than that of $\rho$ and $\epsilon$.

\item \textbf{Impact of $\numel$:} Comparing Tables \ref{table:n8} and \ref{table:n16}, doubling the number of items from $\numel=8$ to $\numel=16$ leads to higher mean errors and increased variability in the errors. {In the limit of large $\numel$, the ratio $\numitems/\numel$ is the primary indicator of errors, because it captures the probability that a random element of the set is in the DAG.
However, for small $\numel$, because the set $S$ contains distinct elements, the probability that none of them is in the DAG is significantly smaller than the estimate $(1-\numitems/\numel)^k$, implying a higher market share (see next paragraph) and thus lower error.}
\end{itemize}

Tables~\ref{table:n8} and \ref{table:n16} show that truncation is the primary source of L1 errors.
To shed light on the impact of truncation, in Figure~\ref{fig:MS}, we present the L1 error of choice probabilities versus the \emph{market share} of a set of size $4$ when $\numel=16$, $\numitems=8$, and $\raresum \in \{0.01, 0.05\}$. 
Here, the market share of a set $S$ given the truncation level $\numitems$ is equal to the probability $\sum_{\pi: \{\pi(1), \ldots, \pi(\numitems)\} \cap S \neq \emptyset} p(\pi)$ that at least one of the items in $S$ is in the first $\numitems$ layers of the DAG. 
The market share of $S$ in this sense can be considered as a measure of the extent to which the assortment provides at least one attractive item (within the top $\numitems$ levels) to customers.

Figure~\ref{fig:MS} shows that the larger the market share, the smaller the L1 error. In fact, the relationship between the L1 error and the market share of a set is linear.
The same observation is also confirmed for every combination of $\numel$ and $\numitems$ presented
in Tables~\ref{table:n8} and \ref{table:n16}. 
The impact of $\rho$, which can be viewed as the observed deviation from the linear model, is quite negligible; this is consistent with our findings in Tables~\ref{table:n8} and \ref{table:n16}. 
Similar observation hold for the dependence on $\epsilon$ (not shown here).

\begin{figure}[htb]
     \centering         
     \includegraphics[width=0.35\textwidth]{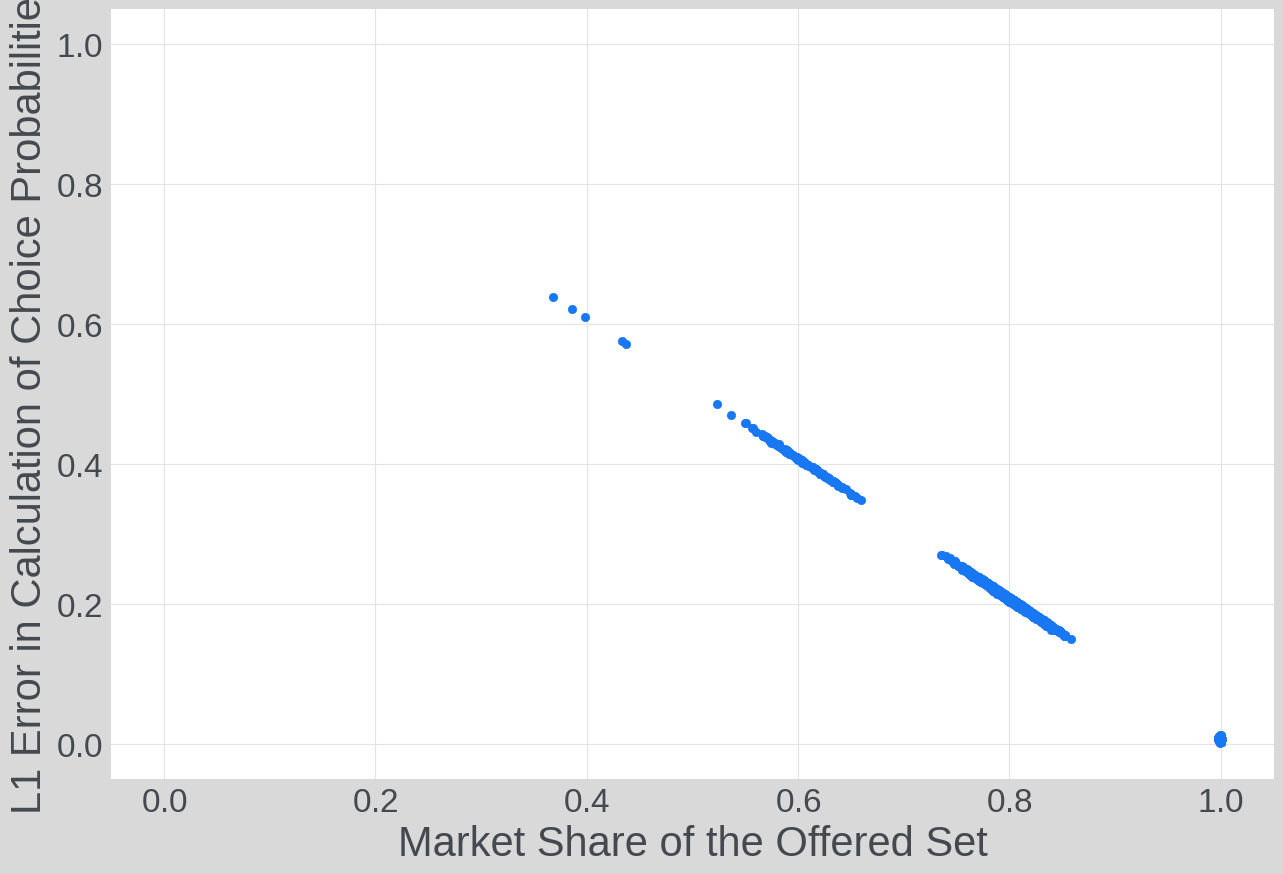}
    \includegraphics[width=0.35\textwidth]{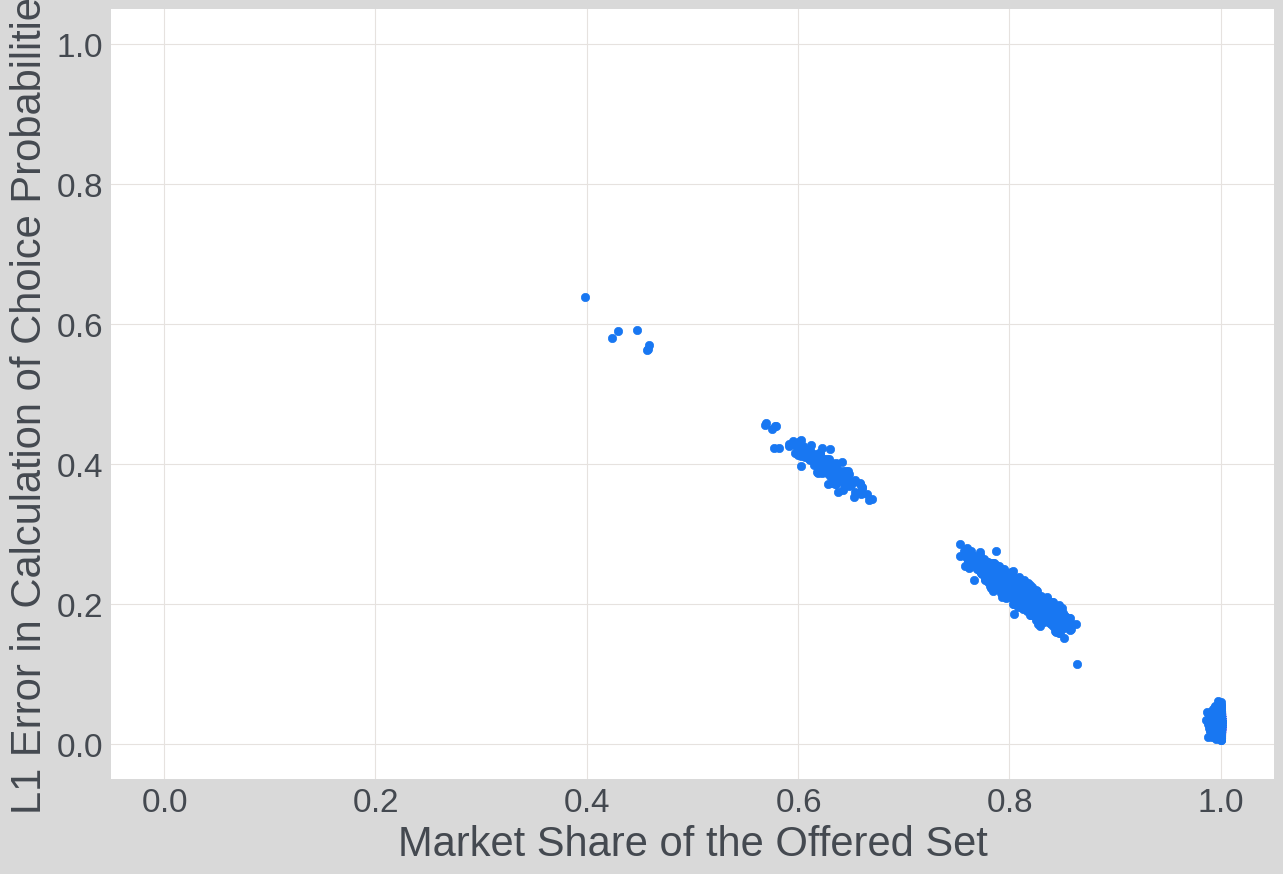}
    \caption{L1 error in choice probabilities vs.~market share of the set. In both figures, $\numel=16$, $\numitems=8$, $\epsilon=0.01$, and $\kappa=0.01$. 
    The only difference is that in the left figure, $\raresum=0.01$, while in the right figure, $\raresum=0.05$.} \label{fig:MS}
\end{figure}
}

\subsection{The Value of Active Learning}

In our second set of experiments, we evaluate the performance of \AlgDAG empirically on both synthetic and real-world datasets, shedding light on the value of active learning. 
Our experiments show that, in addition to being easy to implement, \AlgDAG can recover the true DAG accurately (with low estimation errors) with high probability using a reasonable number of samples. They further show that actively learning the choice model gives better performance empirically than learning the model directly from predetermined/given transaction data. 

\textbf{Synthetic setup.} For the synthetic analyses, we use a setup similar to the first experiment, with a few minor differences. Due to computational challenges posed by the benchmark algorithm (defined shortly), we only conduct our analysis for $\numel=8$. 
(In Appendix~\ref{sec:large-exp}, we further evaluate \AlgDAG for a larger number of types and items, to observe how \AlgDAG performs when the instance size scales up.) 
Similar to the first experiment, the total probability $\raresum$ of noise is chosen from $\{0, 0.001, 0.01, 0.05\}${; note that we add the noiseless condition $\raresum=0$ here}. 

\textbf{Real-world setup.} 
We further evaluate the performance of \AlgDAG on the sushi data set (\cite{kamishima2005supervised}). This publicly available data set describes preferences over $10$ popular types of sushi based on a consumer survey that involved $5000$ people specifying their exact ranking. In the sushi data set, unlike the data one would acquire in an active learning manner, a user declares his/her entire ranking, whereas from transaction data, we only get to see the most popular sushi among those offered. 

In the sushi data set, there are $4926$ distinct rankings. Taking $\kappa = 0.0001$, there are $90, 561, 1800$, and $3273$ frequent prefixes for the top $\numitems= 2,3,4$ and $5$ items, respectively, all with zero noise probability, i.e. $\raresum = 0$.\footnote{Increasing the threshold $\kappa$ for frequent types causes the total probability of noise to be much larger than $\kappa$; hence we choose $\kappa = 0.0001$ to be our threshold. For instance, when $\kappa = 0.0002$, we have $\raresum = 0.4754$, which is more than 2000 times the threshold for frequent types.} 
Our choice of $n_0$ is mainly due to our benchmark algorithm (we elaborate on this more in the next section), which is computationally inefficient when estimating more than the top $5$ items in our experiments. 
Observe that in this setup, there are a lot more distinct types and prefixes compared to the synthetic setting. This causes a lot more interference, which makes it harder for \AlgDAG to estimate each node properly. Furthermore, the set of rankings  is obviously not generated i.i.d.~uniformly at random. We  use the empirical frequency of each prefix in the data set as our ground truth, so for each $\numitems\in \SET{2, 3, {4}, 5}$ that we consider, we get the set of types of size $\numitems$ with its corresponding probability distribution.

\textbf{Metrics.} We compare the true DAG representation of the choice model with the DAG produced by \AlgDAG for various values of $\numitems$ (the number of top positions to estimate). We consider two metrics to measure the performance of \AlgDAG. The first one is the number of nodes in the DAG that are added wrongly (false positive) or not inferred (false negative). The second one is the maximum probability discrepancy/difference between the true DAG and the inferred DAG.

\subsubsection{Benchmark}
As our benchmark, we implemented the primal linear program (LP) from \cite{farias2013nonparametric} to find the set of types. \cite{farias2013nonparametric} summarize the given transaction data in an $m$-dimensional vector $\bm{y} = A\bm{\lambda}$, where $A\in\SET{0,1}^{m\times n!}$ represents the relationship between the observed data ($\bm{y}$) and the choice model.\footnote{Specifically, the \Kth{i} element of $\bm{y}$, which is associated with a set $S$ and an item $z$, represents the fraction of consumers who choose an item $z$ given assortment $S$. Then, the element in the \Kth{i} row and \Kth{j} column of matrix $A$, where the \Kth{i} row is associated with a set $S$ and an item $z$, is $1$ if consumers with type $j$ prefer $z$ among all items in $S$, and $0$ otherwise.} Here, the choice model is represented by the vector $\bm{\lambda}$, which is the variable representing the probability distribution over all possible types of preferences. Unlike \AlgDAG, which builds the DAG level by level using active learning, this method determines the distribution over the whole ranking (containing $n$ items) using an offline data set. That is, in our synthetic setting, for a given offline data set, it returns a distribution over all possible $8! = 40320$ candidate rankings. In the sushi setting, since we only run the benchmark algorithm to estimate the top $5$ items, we limit our type space to  the set of all possible rankings of five items; in this case, the benchmark returns a distribution over all $5!\cdot\binom{11}{5} = 55440$ possible top $5$ items. Thus, in the sushi data set, we use a smaller search space for the types instead of searching over all $11!=39,916,800$ rankings since \AlgDAG also only estimates up to the top $5$ items. This is beneficial for the calculation complexity for the benchmark since it reduces the number of decision variables in the LP.\footnote{In the synthetic setting, the problem is much easier because the number of types is at most $35$, while in the sushi setting, there are at least $3273$ distinct types when considering the top $5$ items. Thus, in the synthetic data set, unlike the sushi data set, we consider all $n!$ rankings in the benchmark LP.} 
{See more details about the benchmark implementation in Section~\ref{sec:benchmark}.} 


{
To evaluate the benchmark, we generated two types of offline transaction datasets for the synthetic setting. The first dataset aims to replicate common stockout scenarios observed in the real world, considering the impact of stockouts on demand and revenue \citep{caine1976optimal, fitzsimons2000consumer, kim2011consumer}. Here, we assume that $3$ out of $\numel$ items are unavailable, leaving consumers with $(\numel-3)$ items to choose from. This results in $\binom{\numel}{3}$ distinct assortments, with each assortment shown to $\numsamples$ consumers. 
With $8$ items, this yields $56$ distinct assortments. 
The second dataset comprises pairwise comparison data representing the fraction of consumers who prefer one item over another, as seen in previous research \citep{farias2013nonparametric}. This dataset contains $\binom{\numel}{2}$ distinct assortments, resulting in $28$ distinct assortments when there are $8$ items.\footnote{
{When applying \AlgDAG to the synthetic dataset, we find that it uses a maximum of 20 distinct assortments, even with $\numitems=5$. This suggests that \AlgDAG employs fewer distinct queries compared to offline datasets. However, through active learning, as we will show later, \AlgDAG significantly outperforms the benchmark.}}
}

{
In the sushi setting, we generated two types of offline transaction data: assortments with $3$ stockouts (resulting in $8$ items) and pairwise assortments. However, when using pairwise assortments, the linear programs often remained unsolved due to an undetermined linear system, especially with the large number of distinct rankings in the sushi dataset ($4926$). Therefore, we focus solely on the results from the offline assortments with $3$ stockouts for the sushi dataset to provide additional advantage to the offline benchmark. Nevertheless, as we will demonstrate later, even in the synthetic dataset, the benchmark performed worse on the pairwise dataset compared to the $3$-stockout dataset.}

\subsubsection{\AlgDAG Implementation}
In the synthetic setting, we want to estimate the top $\numitems = 3,5$ positions when the number of items is $8$, while in the sushi setting, we want to estimate the top $\numitems = 2,3,4,5$ positions. In the synthetic data set, we consider as frequent the types whose probability is at least $\kappa = 0.01$; this is a consequence of our choice of Dirichlet parameters for the probability distribution of the frequent types in the synthetic setting.\footnote{Recall that in the synthetic setting, the probability of the frequent types is drawn from a symmetric Dirichlet distribution whose coefficient of variation is $0.1$, multiplied by $(1-\raresum)$. The implication is that most of the probability of a frequent type generated is bigger than $0.01$, so we can set $\kappa = 0.01$.} Meanwhile, in the sushi setting, we choose $\kappa = 0.0001$ by looking at the smallest type probability in the ranking. Note that in our experiment, we do not always have $\raresum < \kappa/4$ for our chosen $\raresum \in \SET{0,0.001,0.01,0.05}$. While this assumption is required for our theoretical guarantee in Theorem~\ref{thm:main}, by violating it, we can observe how robust \AlgDAG is. 

In simulating active learning, given a set of (true) synthetic parameters, one consumer is drawn in each round, and the item she chooses based on her preference is observed. The algorithm gets to decide which assortment is offered for each consumer. 
To run \AlgIE, we set the number of samples for each \AlgIE run to be at most $10k$.
Such a choice would likely correspond to how a practitioner would use \AlgIE, and lets us observe the impact of a number of samples that is significantly smaller than that required for theoretical guarantees.
The experiment is run $100$ times for each value of $\numitems$.
\begin{table}[htb]
\footnotesize
\centering
\begin{tabular}{r|c|c|c|c|c|c}
\multirow{2}{*}{Noise} & \multicolumn{2}{c|}{\AlgDAG, $\numsamples \leq 10k$} & \multicolumn{2}{c|}{\cite{farias2013nonparametric}, $3$ stockout} & \multicolumn{2}{c}{\cite{farias2013nonparametric}, pairwise data} \\
\cline{2-7}
& $\numitems = 3$ & $\numitems = 5$ & $\numitems = 3$ & $\numitems = 5$ & $\numitems = 3$ & $\numitems = 5$ \\
\hline
 0 &$0.0083\pm0.00040$ &$0.0092\pm0.00041$ &$0.2866\pm 0.01100$ &$0.2988\pm 0.01214$ &$0.3785\pm 0.00916$ &$0.3884\pm 0.01032$\\
0.001 &$0.0099\pm0.00033$ &$0.0111\pm0.00032$ &$0.3054\pm0.01110$ &$0.3359\pm0.01183$ &$0.3953\pm0.01132$ &$0.4026\pm0.01201$\\
 0.01 &$0.0103\pm0.00036$ &$0.0113\pm0.00038$ &$0.3100\pm0.01134$ &$0.3585\pm0.01423$ &$0.4280\pm0.01141$ &$0.4186\pm0.01160$\\
0.05 &$0.0105\pm0.00039$ &$0.0114\pm0.00039$ &$0.3320\pm0.01321$ &$0.3878\pm0.01477$ &$0.4058\pm0.01078$ &$0.4414\pm0.01178$\\
 \end{tabular}
 \vspace{0.25cm}
 \caption{Maximum probability discrepancy, over all nodes, between the true DAG and the DAG returned by \AlgDAG or the benchmark, when there are $5$ frequent types and $8$ items, averaged over $100$ instances. Here, $\numsamples = 10k$.}
 \label{tab:avgmaxdisc_our}
\end{table}

\subsection{Results}

\subsubsection{Synthetic Data Set.} We present our results in Tables~\ref{tab:avgmaxdisc_our} and \ref{tab:avgperdiff_our}.  In Table~\ref{tab:avgmaxdisc_our}, we look at the maximum probability discrepancy out of all nodes between the DAG returned by the algorithm and the true DAG.  \AlgDAG outperforms the benchmark for both offline data sources: the 3 stockout and pairwise data. In general, the results for both algorithms (\AlgDAG and the benchmark) get worse when the noise probability increases. Further, {in} Table~\ref{tab:avgmaxdisc_our}, we see that the maximum probability discrepancy for the DAG returned by \AlgDAG, which is attained when the noise probability is the largest, i.e., $0.05$, is at most $0.00892$ on average (when $\numitems=5$). Moreover, the maximum noise discrepancy is larger for $\numitems=5$ than for $\numitems=3$. This shows that as the number of top items being inferred increases, the problem becomes harder. This is similar to the behavior predicted by Theorem~\ref{thm:main}, where more samples are needed to obtain the same level of accuracy. Comparing the benchmarks utilizing two distinct data sources --- specifically, the 3 stockout data set and the pairwise data set --- we observe that the benchmarks exhibit superior performance when employing the 3 stockout offline data set as opposed to the pairwise data set. The former has more variation in the transaction data with $56$ distinct assortments, therefore containing more information.

\begin{table}[htb]
\footnotesize
\centering
\begin{tabular}{r|c|c|c|c|c|c}
\multirow{2}{*}{Noise} & \multicolumn{2}{c|}{\AlgDAG, $\numsamples \leq 10k$} & \multicolumn{2}{c|}{\cite{farias2013nonparametric}, $3$ stockout} & \multicolumn{2}{c}{\cite{farias2013nonparametric}, pairwise data} \\
 \cline{2-7}
& $\numitems = 3$ & $\numitems = 5$ & $\numitems = 3$ & $\numitems = 5$ & $\numitems = 3$ & $\numitems = 5$ \\
\hline
0 &$1.58\%\pm0.149\%$ &$1.73\%\pm0.209\%$ &$30.73\%\pm0.315\%$ &$35.30\%\pm0.633\%$ &$36.71\%\pm0.964\%$ &$41.90\%\pm0.796\%$\\
0.001 &$4.67\%\pm0.154\%$ &$5.04\%\pm0.228\%$ &$31.51\%\pm0.333\%$ &$41.01\%\pm0.744\%$ &$37.19\%\pm0.991\%$ &$45.25\%\pm0.972\%$\\
0.01 &$5.40\%\pm0.167\%$ &$5.68\%\pm0.234\%$ &$37.67\%\pm0.366\%$ &$41.27\%\pm0.656\%$ &$44.61\%\pm1.278\%$ &$46.48\%\pm0.994\%$\\
0.05 &$5.73\%\pm0.207\%$ &$6.04\%\pm0.252\%$ &$38.12\%\pm0.363\%$ &$49.33\%\pm0.770\%$ &$46.99\%\pm1.417\%$ &$52.51\%\pm1.103\%$
\end{tabular}\vspace{0.25cm}
\caption{Percentage of vertices that are different between the true DAG and the DAG returned by \AlgDAG or the benchmark, averaged over $100$ instances. Here, $\numsamples = 10k$.}
 \label{tab:avgperdiff_our}
\end{table}

In Table~\ref{tab:avgperdiff_our}, we look at the percentage of vertices that are added wrongly or not inferred, i.e., the number of vertices that appear only once (either in the true DAG or in the DAG returned by \AlgDAG, but not in both) on level $\numitems$, as a percentage of the number of vertices on level $\numitems$ in the true DAG formed by the frequent types. Compared with the benchmark, \AlgDAG produces fewer vertices that are being added wrongly or not inferred compared to the benchmark, with both data sources.\footnote{Additional experiments --- not discussed here --- show that the number of different vertices goes down to $0$ when we increase $\numsamples$ to $50k$.} For example, when $\numitems=5$ and the noise level is $0.001$, while the percentage of different vertices is $5.04\%$ in \AlgDAG, it is $41.01\%$ for the benchmark with 3 stockout data and $45.25\%$ for the benchmark with pairwise data. Further, as before, the benchmark with 3 stockout data outperforms the one with pairwise data due to the existence of more variation in the data. As expected, for both \AlgDAG and the benchmark, the percentage of different vertices gets bigger as the total noise probability increases.

\subsubsection{Sushi Data Set.}
\label{sec:sushires}
\begin{table}[htb]
 \centering
\footnotesize
\begin{tabular}{l|c|c|c|c|c|c|c|c}
\multirow{2}{*}{} & \multicolumn{4}{c|}{\AlgDAG, $\numsamples \leq 10k$} & \multicolumn{4}{c}{\cite{farias2013nonparametric}, $3$ stockout} \\\cline{2-9}
& $\numitems = 2$ & $\numitems = 3$ & $\numitems = 4$ & $\numitems = 5$ & $\numitems = 2$ & $\numitems = 3$ & $\numitems = 4$ & $\numitems = 5$\\
max.~prob. &{$0.0110\pm$} &{$0.0258\pm$} &{$0.0411\pm$} &{$0.0765\pm$} &{$0.0203\pm$} &{$0.0320\pm$} &{$0.0472\pm$} &{$0.0950\pm$}\\
differences&$0.00023$&$0.00054$&$0.00073$&$0.00134$&$0.00055$&$0.00090$&$0.00105$&$0.00154$\\ \hline
\% of different &{$1.67\%\pm$} &{$8.88\%\pm$} &{$9.45\%\pm$} &{$9.58\%\pm$} &{$23.36\%\pm$} &{$56.77\%\pm$} &{$74.46\%\pm$} &{$86.16\%\pm$}\\
vertices&$0.174\%$&$0.209\%$&$0.124\%$&$0.169\%$&$0.614\%$&$0.535\%$&$0.435\%$&$0.476\%$\\
 \end{tabular}
\vspace{0.5cm}
\caption{Maximum probability discrepancy over all nodes {and percentage of incorrect vertices}, between the true DAG and the DAG returned by \AlgDAG or the benchmark, averaged over $100$ instances run on the sushi data.}
\label{tab:sushi_res_tab}
\end{table}

{Table~\ref{tab:sushi_res_tab} summarizes our results on the sushi data set, showing} the maximum probability differences and the percentages of different vertices between the DAG returned by \AlgDAG and the true DAG, averaged over $100$ runs. \AlgDAG far outperforms the benchmark in both metrics{: it} returns node probabilities within $0.0765$ of the actual probability on average when we look at the fifth level (i.e., $\numitems = 5$), compared to $0.0950$ for the benchmark. Similarly, when we look at the number of vertices that are either added wrongly or not inferred, on average, the percentage for \AlgDAG is nine times smaller than for the benchmark, for $\numitems = 5$. These results are surprising since we are only using $\numsamples = 10k$; they indicate that even with such a small number of samples (far smaller than what is needed in the theoretical guarantee), \AlgDAG can estimate the true distribution of types with good accuracy in the sushi data. Note that this holds even though the types are not drawn i.i.d.~uniformly at random for the sushi data.

\subsubsection{Concluding Remarks for the Second Set of Experiments.}
Overall, we observe that the performance of both algorithms, \AlgDAG and the offline benchmark, gets worse as the total probability of noise (i.e., $\raresum$) gets larger or the number of items being estimated (i.e., $\numitems$) gets bigger. Furthermore, the experiments show that \AlgDAG can get a small probability discrepancy and few ``wrong’’ vertices even with a small number of samples ($\numsamples=10k$). 
When the number of types increases, the performance gets worse but is still comparable, which can be seen in the results for the sushi data set  and the bigger synthetic data set with $15$ frequent types and $20$ items (Appendix~\ref{sec:large-exp}). \AlgDAG also performs well when the number of items is larger than the number of types, as is the case, for example, in the sushi data; this is true as long as there are not too many types with similar preferences causing a lot of interference when \AlgDAG is run. 
In all scenarios, \AlgDAG outperforms the benchmark despite using a small number of distinct queries.

\section{Conclusion, Discussions, and Future Directions}
\label{sec:conclusion}
Non-parametric choice models, {while able}  to capture general choice models, are difficult to estimate using only offline transaction data. This is primarily due to the lack of variation in the data. In many settings, especially on online platforms, one can influence the process of acquiring data based on past observations; this is also known as an active learning process. Motivated by this, we studied the problem of estimating a non-parametric choice model using active learning. In addition to product ranking, non-parametric choice models have also been used in other applications, such as a non-parametric joint assortment and price choice model with consideration set (\cite{jagabathula2017nonparametric}) and non-parametric demand for dynamic pricing (\cite{perakis2019dynamic}). 
In future work, it would be interesting to see how an active learning approach applies to those problems.

We present a negative result showing that such non-parametric choice models might be indistinguishable regardless of the sequence of assortments offered in the active learning process. To overcome this obstacle, we present a directed acyclic graph (DAG) representation of the choice model which captures as much as can be uniquely identified about the distribution when the types of consumers are unobservable and each consumer only makes one choice. {We show that the (exact) DAG representation can be used to compute the (exact)  choice probabilities and vice versa, implying that this representation  encapsulates all the information regarding the choice model that can be deduced from the {available} data.}

We then design an efficient algorithm to  accurately estimate the DAG representation of non-parametric choice models using a polynomial-sized data set. The algorithm learns the distribution over the most popular items and their corresponding probabilities by actively choosing assortments to offer the next arriving consumer based on past observations. The algorithm employs an inclusion-exclusion method to better estimate each ranking probability. We also show that {\AlgDAG} outperforms a non-active learning algorithm on synthetic and real-world (sushi) data sets.

Our work raises several immediate questions for technical follow-up work. First, the factor $\numitems$ in the error bound of Theorem~\ref{thm:main} seems like it could perhaps be avoided. Notice, however, that rare types could indeed impact the estimate of {$\edgeprob{A}{z}$} for multiple $A$, for instance those rare types that rank $z$ first (or early). 
Similarly, while for many practical applications, it may suffice to learn the first $\numitems = \alpha \numel$ items of each ranking (see the discussion in Section~\ref{sec:approximation-implications}), both from a theory and practice perspective, it would be interesting to understand whether learning the entire rankings is possible. To learn about the last few items in rankings, it is necessary to offer small sets to consumers, but if types are sufficiently frequent, it may be possible to obtain data on those items. However, the approach based on Set Cover is unlikely to succeed: when $\numitems$ is large, it is indeed likely that the required set covers are large, and hence, the Inclusion-Exclusion formula requires exponentially many terms, and thus exponentially small error in each term to compensate.

Our polynomially small bounds on the required samples were only guaranteed to apply under the \randmodel{\kappa}{\raresum}. It would be interesting to investigate whether small sample bounds can be obtained under a fully adversarial model. Some preliminary work suggests that this may be difficult. Yet, although there is no polynomial sample bound for the general model, AlgDAG still requires a small number of samples as long as the approximate solution to the Set Cover instance is small.

Experimentally, it would be desirable to evaluate the performance of our algorithms, as well as alternatives, on larger and richer data sets. Ultimately, the goal should be to deploy them in a real-world setting and have them interact with actual consumers. In such a setting, there may be other considerations, such as the cost of exploration (i.e., offering consumers sets of items that are ranked low by many types, to disambiguate the rankings). Many other practical concerns such as inventory constraints are likely to arise.

Fundamentally, we view our work as a first step towards a much more comprehensive evaluation of active learning in the context of choice models and related settings.
Our work opens up new avenues for future work, and we hope that it will lead to studying the framework in other Operations Research problems, such as product ranking and assortment planning. 

\ACKNOWLEDGMENT{N.G. was supported in part by the Young Investigator Program (YIP) Award from the Office of Naval Research (ONR) N00014-21-1-2776 and the MIT Research Support Award.  D.K. was  supported in part by ARO MURI grant W911NF1810208. 
We thank Antoine Desir and Ningyuan Chen for their help and  insightful comments.}

\bibliographystyle{plainnat}
\bibliography{refs}
\newpage
\begin{APPENDIX}{}
\label{sec:appendix}
\section{{Shorter Proofs Omitted in Main Body}}

\subsection{Proof of Theorem \ref{thm:impossible}}\label{sec:proof:thm:impossible}

{\impossibility*}

\begin{proof}
Assume that $\pi$ and $\pi'$ are $i$-indistinguishable for $2 \leq i \leq \numel-2$. We write $q = p(\pi)$ and $q' = p(\pi')$.  Assume that $q\leq q'$. We write $\toppi = \{ \pi(1), \ldots, \pi(i) \}$ for the set of the top-$i$ items common to $\pi$ and $\pi'$.
Let $\overline{\pi}$ be the ranking that is the same as $\pi$ in the first $i$ positions and the same as $\pi'$ in the rest. Similarly, let $\overline{\pi}'$ be the ranking that is the same as $\pi'$ in the first $i$ positions and the same as $\pi$ in the rest, i.e. $\overline{\pi} = (\pi(1),\ldots,\pi(i),\pi'(i+1),\ldots,\pi'(\numel))$ and $\overline{\pi}' = (\pi'(1),\ldots,\pi'(i),\pi(i+1),\ldots,\pi(\numel))$. Notice that because of the definition of indistinguishability, $\overline{\pi}\neq\overline{\pi}'$ and $\{\overline{\pi},\overline{\pi}'\}\cap\{\pi,\pi'\} = \emptyset$.

{We define the following alternate choice model 
{probabilities $\overline{p}(\cdot)$:  $\overline{p}(\overline{\pi}) = \overline{p}(\overline{\pi}') = q, \overline{p}(\pi') = q'-q$, and $\overline{p}(\pi) = 0$. For all other rankings $\pi''\in \Pi$, we set $\overline{p}(\pi'') = p(\pi'')$. As a result,  depending on whether $q'-q < \kappa$ or $q'-q \geq \kappa$, we obtain that $\overline{\Pi}^F = (\Pi^F\setminus \SET{\pi, \pi'}) \cup \SET{\overline{\pi},\overline{\pi}'}$, or $\overline{\Pi}^F = (\Pi^F\setminus \SET{\pi}) \cup \SET{\overline{\pi},\overline{\pi}'}$.} }

We will prove that no algorithm --- regardless of computational power or number of queries --- can distinguish between $\Pi^F$ and $\overline{\Pi}^F$. We do so by a simple coupling argument: we couple the draws of types from $\Pi^F$ and $\overline{\Pi}^F$ in such a way that the algorithm will see the exact same responses for each query. Importantly, because the algorithm chooses the query \emph{without knowledge of the consumer's type}, the coupling can be based on the query.

The coupling is straightforward. For any type $\pi'' \in \Pi^F \setminus \{ \pi, \pi' \}$, the type $\pi''$ is drawn under $\Pi^F$ exactly when it is drawn under $\overline{\Pi}^F$. In particular, exactly the same item is chosen by the drawn consumer under both models, because the consumer has the same type.

For the remaining types $\pi, \pi', \overline{\pi}, \overline{\pi}'$, we use the following couplings, depending on the query set $\offerset$:
\begin{itemize}[leftmargin=*]
    \item If $\offerset \cap \toppi \neq \emptyset$, then the coupling produces $(\pi,\overline{\pi})$ with probability $q$, $(\pi', \overline{\pi}')$ with probability $q$, and $(\pi',\pi')$ with probability $q'-q$. That is, for example, when $\pi$ is generated under $\Pi^F$ with probability $q$, $\overline{\pi}$ is also generated under $\overline{\Pi}^F$ with the same probability.
    
    Notice that the respective marginal probabilities are all as prescribed by the choice models, so this is a valid coupling. Furthermore, because only an item from $\offerset \cap\toppi$ can be chosen, the same item will be chosen for each possible pair of types $(\pi,\overline{\pi}), (\pi',\overline{\pi}'),(\pi',\pi')$, because for each pair, the two types comprising it agree on the ranking of the top $i$ items.
    
    \item If $\offerset \cap \toppi = \emptyset$, then the coupling produces $(\pi, \overline{\pi}')$ with probability $q$, $(\pi', \overline{\pi})$ with probability $q$, and $(\pi',\pi')$ with probability $q'-q$. Again, notice that all marginal probabilities match those prescribed by the choice models, so the coupling is valid.
    
    Because the chosen item will always be from $\offerset \setminus \toppi$, all possible pairs of types will choose the same item; this is because each such pair has the same rankings for $\offerset \setminus \toppi$.
\end{itemize}

We have thus produced a valid coupling under which exactly the same item is chosen under $\Pi^F$ and $\overline{\Pi}^F$. In particular, this means that the marginal probabilities of choosing each item are the same under both choice models, so an algorithm will observe the exact same distribution under both models, and cannot distinguish them.
\end{proof}

\subsection{Proof of Proposition~\ref{prop:exact_choice_prob}}
\label{sec:proof:prop:exact_choice_prob}

{\constructingchoice*}

{\begin{proof} 
To prove the first identity (that $\choicetype{z}{S} = \bigcup_{A: A \cap S = \emptyset} \edgetype{A}{z}$), we prove both inclusions. First, any type $\pi$ choosing $z$ from $S$ ranks some set $A \subseteq \universe \setminus S$ ahead of $z$, i.e., belongs to some $\edgetype{A}{z}$ with $A \cap S = \emptyset$. Conversely, every type $\pi \in \edgetype{A}{z}$ ranks $z$ ahead of all other items in $S$, and so chooses $z$ from $S$.
   
To prove the probability identity, we apply the definition of $q_z(S)$, then the set identity we just proved, then the disjointness of the sets $\edgetype{A}{z}$ due to Proposition~\ref{prop:basic-properties}, and finally the definition of $\edgeprob{A}{z}$, to obtain that
  \[ 
   q_z(S) = p(\choicetype{z}{S}) 
   = p \left(\bigcup_{A: A \cap S = \emptyset} \edgetype{A}{z} \right)
   = \sum_{A: A \cap S = \emptyset} p(\edgetype{A}{z})
   = \sum_{A: A \cap S = \emptyset} \edgeprob{A}{z}.
  \]
\end{proof}}

\subsection{Proof of Theorem~\ref{thm:construct_exact_DAG}}\label{sec:proof:thm:construct_exact_DAG}

{\constructingexact*}

{\begin{proof} 
Correctness follows directly by induction on the levels, showing that $\hatp(A) = p(A)$ for all $A$, and $\hatedgeprob{A}{z} = \edgeprob{A}{z}$ for all $A, z$. The induction step uses Lemma~\ref{lem:inter_2} and Equation~\eqref{eqn:flow-equation}.

For the running time (and number of oracle queries), note that both the number of nodes at any level $j$ as well as the number of edges from level $j$ to $j+1$ are upper-bounded by $T$.\footnote{This upper bound could be quite loose. It is easy to construct cases in which the number of nodes is linear in $n$, while the number of types is exponential. In fact, this type of example illustrates the value of the DAG representation in compressing the model's description, and thereby making it more intuitively understandable.}
To find all edges emanating from level $j$, the algorithm checks at most $nT$ candidates (each possible item to add). Thus, the total computation at each level is at most $O(n T)$, and because there are $n$ levels, the total computation takes $O(n^2 T)$.
\end{proof}}

\section{Set Cover and the Greedy Algorithm} \label{sec:greedy}
The minimum set covering problem we define in Section~\ref{sec:AlgIE_new} is equivalent to the classic \textsc{Set Cover} problem. In the classic \textsc{Set Cover} problem, a universe $U = \SET{1,2,\ldots,n}$ is given along with a collection $\collectsets$ of $m$ sets whose union covers $U$.
The goal is to find the smallest subcollection of $\collectsets$ whose union still covers $U$.

In our case, the universe is {a subset $\familysetnodepre \subseteq \familysetnodepre(A,z)$} because as can be seen in Lemma~\ref{lem:ie-lemma}, the goal is to find a set cover of {such sets}. Since every element of $\familysetnodepre(A,z)$ is a proper subset of $A$, for each element $A' \in \familysetnodepre$, there always exists an $x$ such that $x\in A\setminus A'$ and $A'\subseteq A\setminus\{x\}$. We remark that the reduction also works in the other direction: every \textsc{Set Cover} instance can be encoded in our setting.
Given an instance $(U, \collectsets)$, we let $j=|\collectsets|$ be the number of sets, and set $A = \SET{1, 2, \ldots, j}$. Then, for each element $u \in U$, let $T_u$ be the indices of sets $S_i \in \collectsets$ such that $u \in S_i$. We define a corresponding prefix $A'_u = A \setminus T_u$. Because $T_u \neq \emptyset$ for all $u$ (otherwise, $u$ could never be covered), all $A'_u$ are strict subsets of $A$. Furthermore, $A'_u \subseteq A \setminus \SET{x}$ for exactly those $x$ such that $u \in S_x$; thus, the instance we generated exactly encodes the original \textsc{Set Cover} instance.

The preceding reduction from \textsc{Set Cover} to our problem shows that both the NP-hardness and approximation hardness to within better than $(1-o(1)) \cdot \ln n$ (\cite{feige1998threshold}) carry over to our problem.
On the positive side, the reduction from our problem \emph{to} \textsc{Set Cover} shows that approximation algorithms for the latter can be applied to our problem with the same guarantees.
In particular, this applies to the well known greedy approximation algorithm (see, e.g., \citep{kleinberg2006algorithm}), which has a multiplicative approximation guarantee of $\ln n$.\footnote{There are other known approximation algorithms for \textsc{Set Cover} with the same guarantee.}  

The greedy algorithm for \textsc{Set Cover}, denoted by $\greedymincover\left(U,\collectsets\right)$, repeatedly adds a set $L \in \collectsets$ which covers the largest number of elements from $U$ which had not been covered by previously added sets. It terminates when all of $U$ is covered. Specialized to our setting, the greedy algorithm $\greedymincover\left(\familysetnodepre,\familyminus(A)\right)$  repeatedly selects a set from $\familyminus(A)$ that contains as subsets the largest number of prefixes $A' \in \familysetnodepre$ not contained in any selected set so far. The algorithm terminates when all prefixes in $\familysetnodepre$ have been covered.

\section{Proof of Proposition~\ref{prop:incl-excl-exact}{, and a More General Inclusion-Exclusion Lemma}} \label{sec:Proposition:proof} 

Proposition~\ref{prop:incl-excl-exact} follows as an immediate corollary of the following lemma.
(The more general lemma {is} also central to our analysis of \AlgDAG.)

\begin{restatable}{lemma}{inclusionexclusion}
\label{lem:ie-lemma}
  Fix a set $A$ and item $z \notin A$.
  Let $\familysetnodepre \subseteq \familysetnodepre(A,z)$, and $\mathcal{C}$ be a set cover of $\familysetnodepre$ using $\familyminus(A)$, with the additional property that $\familysetnodepre$ is maximal, in the sense that $\mathcal{C}$ is \emph{not} a set cover of any $\familysetnodepre' \supsetneq \familysetnodepre, \familysetnodepre' \subseteq \familysetnodepre(A,z)$.
  Then,
   \[
     \sum_{A' \in \familysetnodepre} \edgeprob{A'}{z} 
     = \sum_{\mathcal{B} \subseteq \mathcal{C}, \mathcal{B} \neq \emptyset} (-1)^{|\mathcal{B}| + 1} \cdot q_z(\universe \setminus \bigcap_{C \in \mathcal{B}} C).
   \]
\end{restatable}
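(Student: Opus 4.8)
The plan is to expand the right-hand side using the choice-probability formula of Proposition~\ref{prop:exact_choice_prob}, interchange the order of summation, collapse the resulting alternating sums via the standard inclusion--exclusion identity, and finally use the maximality hypothesis to match the surviving terms with $\familysetnodepre$.

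First I would rewrite each term $q_z(\universe \setminus \bigcap_{C \in \mathcal{B}} C)$ using Proposition~\ref{prop:exact_choice_prob}. Since $A' \cap (\universe \setminus \bigcap_{C\in\mathcal{B}} C) = \emptyset$ exactly when $A' \subseteq \bigcap_{C\in\mathcal{B}} C$, i.e.\ when $A' \subseteq C$ for every $C \in \mathcal{B}$, this gives
\[
q_z\Bigl(\universe \setminus \bigcap_{C \in \mathcal{B}} C\Bigr) = \sum_{A' : A' \subseteq C \,\forall C \in \mathcal{B}} \edgeprob{A'}{z}.
\]
Only $A' \in \familysetnodepre(A,z)$, i.e.\ proper subsets of $A$ with $\edgeprob{A'}{z}>0$, contribute nonzero terms, since each $C \in \mathcal{C} \subseteq \familyminus(A)$ is a proper subset of $A$.

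Next I would interchange the two summations so that $A'$ becomes the outer variable. For fixed $A'$, set $\mathcal{C}(A') = \Set{C \in \mathcal{C}}{A' \subseteq C}$; the subsets $\mathcal{B}$ for which $A'$ appears in the inner sum are exactly the nonempty subsets of $\mathcal{C}(A')$. Hence the coefficient of $\edgeprob{A'}{z}$ on the right-hand side is $\sum_{\emptyset \neq \mathcal{B} \subseteq \mathcal{C}(A')} (-1)^{|\mathcal{B}|+1}$, which by the binomial theorem equals $1$ when $\mathcal{C}(A') \neq \emptyset$ and $0$ otherwise. Therefore the right-hand side collapses to $\sum_{A'} \edgeprob{A'}{z}$, where the sum runs over precisely those $A' \in \familysetnodepre(A,z)$ that are covered by $\mathcal{C}$ (that is, $A' \subseteq C$ for some $C \in \mathcal{C}$).

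Finally I would invoke the maximality hypothesis to identify this index set with $\familysetnodepre$. On one hand, since $\mathcal{C}$ is a set cover of $\familysetnodepre$, every element of $\familysetnodepre$ is covered, so $\familysetnodepre$ is contained in the index set. On the other hand, if some $A'' \in \familysetnodepre(A,z) \setminus \familysetnodepre$ were covered by $\mathcal{C}$, then $\mathcal{C}$ would be a set cover of $\familysetnodepre \cup \SET{A''} \supsetneq \familysetnodepre$, contradicting maximality. The two inclusions yield that the covered prefixes are exactly $\familysetnodepre$, so the right-hand side equals $\sum_{A' \in \familysetnodepre} \edgeprob{A'}{z}$, as claimed. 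The one genuinely delicate point is this last step: without the maximality assumption the surviving terms would be \emph{all} prefixes in $\familysetnodepre(A,z)$ covered by $\mathcal{C}$, which may strictly contain $\familysetnodepre$, and the identity would fail. Everything else is bookkeeping around Proposition~\ref{prop:exact_choice_prob} and the alternating-sum identity.
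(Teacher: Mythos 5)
Your proof is correct, but it travels a genuinely different route from the paper's. The paper starts from the left-hand side: it first proves the set identity $\bigcup_{A' \in \familysetnodepre} \edgetype{A'}{z} = \bigcup_{C \in \mathcal{C}} \choicetype{z}{\universe \setminus C}$ (the forward inclusion from the cover property, the converse from maximality), converts the sum of edge probabilities into the probability of this union via disjointness, and then applies the standard \emph{probabilistic} inclusion--exclusion formula to the events $\choicetype{z}{\universe \setminus C}$, using the key observation that $\choicetype{z}{\universe \setminus \bigcap_{C \in \mathcal{B}} C} = \bigcap_{C \in \mathcal{B}} \choicetype{z}{\universe \setminus C}$. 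You instead start from the right-hand side, expand each term $q_z(\universe \setminus \bigcap_{C \in \mathcal{B}} C)$ into edge probabilities via Proposition~\ref{prop:exact_choice_prob}, swap the order of summation, and collapse the coefficient of each $\edgeprob{A'}{z}$ with the binomial alternating-sum identity, so that the surviving terms are exactly the prefixes covered by $\mathcal{C}$; maximality then identifies that index set with $\familysetnodepre$. The two arguments are dual faces of the same combinatorics: your version is more elementary and purely algebraic (no reasoning about intersections of type-events is needed, since Proposition~\ref{prop:exact_choice_prob} already encodes the disjointness), while the paper's version is more conceptual, making visible \emph{why} an inclusion--exclusion structure over the cover sets arises. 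Both proofs invoke maximality at the same logical juncture and for the same reason, and you correctly flag that the identity fails without it.
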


\begin{proof}
We begin by observing that because $\mathcal{C}$ is a set cover of $\familysetnodepre$, every $A' \in \familysetnodepre$ is contained in some $C \in \mathcal{C}$.
As a result, $\edgetype{A'}{z} \subseteq \choicetype{z}{C}$; this is because any ranking in $\edgetype{A'}{z}$ prefers $z$ over all of $\universe \setminus A' \supseteq \universe \setminus C$, and will therefore choose $z$ from $\universe \setminus C$.
We have thus shown that $\bigcup_{A' \in \familysetnodepre} \edgetype{A'}{z} \subseteq \bigcup_{C \in \mathcal{C}} \choicetype{z}{\universe \setminus C}$.
For the converse inclusion, observe that for each type $\pi \in \choicetype{z}{\universe \setminus C}$, the set of items $A'$ ranked ahead of $z$ in $\pi$ must be a subset of $C$.
Therefore, $\pi \in \edgetype{A'}{z}$ for this $A'$.
Because $A' \in \familysetnodepre(A,z)$, my maximality of $\familysetnodepre$, we must have $A' \in \familysetnodepre$ as well.
Thus, we have shown the converse inclusion $\bigcup_{A' \in \familysetnodepre} \edgetype{A'}{z} \supseteq \bigcup_{C \in \mathcal{C}} \choicetype{z}{\universe \setminus C}$.
Applying that the sets $\edgetype{A'}{z}$ are all disjoint by Proposition~\ref{prop:basic-properties}, we obtain that 
\[
  \sum_{A' \subsetneq A} \edgeprob{A'}{z} 
  = p\left(\bigcup_{A' \subsetneq A} \edgetype{A'}{z}\right)
  = p\left(\bigcup_{C \in \mathcal{C}} \choicetype{z}{\universe \setminus C}\right).
\]
In the remainder of the proof, we will show that $p\left(\bigcup_{C \in \mathcal{C}} \choicetype{z}{\universe \setminus C}\right)$ equals the Inclusion-Exclusion sum.
For any non-empty subset $\mathcal{B} \subseteq \mathcal{C}$, we write 
\[
S_{\mathcal{B}} 
\; = \; \universe \setminus \bigcap_{C \in \mathcal{B}} C
\; = \; \bigcup_{C \in \mathcal{B}} (\universe \setminus C).
\]
This implies that $S_{\mathcal{B} \cup \mathcal{B}'} = S_{\mathcal{B}} \cup S_{\mathcal{B'}}$.
Because $z \notin C$ for all $C \in \mathcal{C}$ (because $\mathcal{C} \subseteq \familyminus(A)$, and $z \notin A$), we obtain that $z \in S_{\mathcal{B}}$. 
A type $\pi$ chooses $z$ from $\bigcup_{C \in \mathcal{B}} (\universe \setminus C)$ if and only if $\pi$ chooses $z$ from $\universe \setminus C$ for each $C \in \mathcal{B}$. 
Therefore, $\Theta_z (S_{\mathcal{B}}) = \bigcap_{C \in \mathcal{B}} \Theta_z (\universe \setminus C)$.
Applying the standard Inclusion-Exclusion formula to the sets $\Theta_z (\universe \setminus C)$, we obtain that
\begin{align*}
 p \left( \bigcup_{C \in \mathcal{C}} \Theta_z (\universe \setminus C) \right)
 & = \sum_{\mathcal{B} \subseteq \mathcal{C}, \mathcal{B} \neq \emptyset} (-1)^{|\mathcal{B}|+1} \cdot p \left( \bigcap_{C \in \mathcal{B}} \Theta_z (\universe \setminus C) \right)
\\ & = \sum_{\mathcal{B} \subseteq \mathcal{C}, \mathcal{B} \neq \emptyset} (-1)^{|\mathcal{B}|+1} \cdot p ( \Theta_z (S_{\mathcal{B}}))
\\ & = \sum_{\mathcal{B} \subseteq \mathcal{C}, \mathcal{B} \neq \emptyset} (-1)^{|\mathcal{B}|+1} \cdot q_z \left( \universe \setminus \bigcap_{C \in \mathcal{B}} C \right),
\end{align*}

where the last step used that $q_z(S) = p(\Theta_z (S))$ for all sets $S \ni z$.
\end{proof}

\begin{extraproof}{Proposition~\ref{prop:incl-excl-exact}}
  Applying Lemma~\ref{lem:ie-lemma} with $\familysetnodepre = \familysetnodepre(A,z)$ gives us that 
  $\sum_{A' \subsetneq A} \edgeprob{A'}{z} = \sum_{\mathcal{B} \subseteq \mathcal{C}, \mathcal{B} \neq \emptyset} (-1)^{|\mathcal{B}| + 1} \cdot q_z(\universe \setminus \bigcap_{C \in \mathcal{B}} C)$.
  Substituting this identity into Lemma~\ref{lem:inter_2} and rearranging now completes the proof.
\end{extraproof}

\section{Proof of Theorem~\ref{thm:main}}
\label{sec:proof}
In this section, we give a proof of Theorem~\ref{thm:main}, with the proofs of two technical lemmas deferred to {separate appendices below}.

The main part of the analysis is captured by Lemma~\ref{lemma:ie-proof}, which will be stated later in this section.  This lemma shows that under the general model, the estimates $\hatedgeprob{A}{z}$ which \AlgIE obtains for the edge probabilities $\edgeprob{A}{z}$ are accurate enough with sufficiently high probability. The lemma further  bounds  the number of queries required under the \randmodel{\kappa}{\raresum}. 
The number of queries required increases exponentially in the size of the set covers used, so to bound the former, we would like to bound the latter. We observe that {only} certain configurations of types could possibly cause a need for large set covers. Therefore, we first show that under the \randmodel{\kappa}{\raresum}, these configurations are sufficiently unlikely to occur. 

More formally, for any positive integer $c\in [n-1]$, we define $\Event{c}$ to be the event that there are two distinct {frequent} types $\pi_1 \neq \pi_2 \in \Pi^F$, a position $j \leq \numitems$, and an item $z$ in position $j+1$ of ranking $\pi_1$, such that 
\begin{itemize}
    \item $z$ is in position $c+1$ or later in $\pi_2$, and
    \item the set of items preceding $z$ in $\pi_2$ is a subset of the first $j$ items in $\pi_1$.
\end{itemize}
Lemma~\ref{lemma:upperboundcover}, proved in Appendix~\ref{sec:upperboundcoverproof}, shows that $\Event{c}$ is unlikely to occur.

\begin{restatable}{lemma}{maxcovering}
\label{lemma:upperboundcover}
Under the \randmodel{\kappa}{\raresum}, when $c \geq \log_{1/\alpha} \left(\frac{K^2 \numel}{\delta}\right)$, the probability of $\Event{c}$ is at most $\Prob{\Event{c}} \leq \delta$.
\end{restatable}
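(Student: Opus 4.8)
The plan is to bound $\Prob{\Event{c}}$ by a union bound over the finitely many ``configurations'' that could witness the event, and to show that each individual configuration is exponentially unlikely in $c$. A witness of $\Event{c}$ consists of an ordered pair of distinct frequent types $\pi_1 \neq \pi_2$ together with a position $j \leq \numitems$; the key observation for controlling the union-bound count is that once $\pi_1$ and $j$ are fixed, the relevant item $z = \pi_1(j+1)$ and the set $B = \SET{\pi_1(1), \ldots, \pi_1(j)}$ of the first $j$ items of $\pi_1$ are both \emph{determined}, so $z$ is not a free parameter. Since $\Pi^F \subseteq \freqpi_0$ and $\freqpi_0$ consists of at most $K$ i.i.d.\ uniformly random rankings, I would union over the at most $K^2$ ordered pairs of candidate types and the at most $\numitems \leq \numel$ choices of $j$, reducing the task to bounding, for one fixed configuration, the probability that $\pi_2$ satisfies the two defining conditions of $\Event{c}$.

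The crux is a simple reduction. Fix $\pi_1$ (hence $B$ and $z$, with $z \notin B$ and $|B| = j$) and consider the randomness of $\pi_2$. The two defining conditions say that $z$ sits in position $c+1$ or later in $\pi_2$, and that every item preceding $z$ in $\pi_2$ lies in $B$. Together these imply the cleaner statement that the first $c$ positions of $\pi_2$ are all occupied by items of $B$: positions $1, \ldots, c$ all precede $z$ (since $z$ is at position $\geq c+1$), and all items before $z$ belong to $B$. Because $\pi_2$ is uniform and independent of $\pi_1$, the probability of this prefix event factors and is bounded by $\prod_{i=0}^{c-1} \frac{|B|-i}{\numel - i} \leq (|B|/\numel)^c = (j/\numel)^c \leq \alpha^c$, where the last inequality uses $j \leq \numitems = \alpha \numel$. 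This yields the desired geometric decay in $c$ for each configuration, uniformly over the choice of $\pi_1$.

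Combining the two steps by the union bound gives $\Prob{\Event{c}} \leq K^2 \cdot \numitems \cdot \alpha^c \leq K^2 \numel \, \alpha^c$, and substituting $c \geq \log_{1/\alpha}(K^2 \numel / \delta)$ (so that $\alpha^c \leq \delta/(K^2 \numel)$) yields $\Prob{\Event{c}} \leq \delta$, as claimed. I expect the main obstacle to be conceptual rather than computational: one must correctly identify the data comprising a witness of $\Event{c}$ --- in particular that $z$ is pinned down by $\pi_1$ and $j$, which keeps the count at $K^2 \numel$ --- and must execute the reduction to the clean prefix event so that the per-configuration probability genuinely decays as $\alpha^c$. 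A secondary point worth stating carefully is that it suffices to bound the event over all pairs in $\freqpi_0$ rather than only over $\Pi^F$, since $\Pi^F \subseteq \freqpi_0$ and the randomness lies entirely in the draw of $\freqpi_0$; the adversary's subsequent selection of $\Pi^F$ from $\freqpi_0$ cannot increase the probability.
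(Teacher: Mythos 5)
Your proof is correct, and its outer structure --- a union bound over the at most $K^2$ ordered pairs of candidate types and the at most $\numitems \leq \numel$ positions $j$, combined with a per-configuration bound of $\alpha^c$ --- matches the paper's. Where you differ is the inner step. The paper decomposes the bad event further according to the exact position $k \geq c+1$ of $z$ in $\pi_2$, computes each sub-event's probability exactly as $\binom{j}{k-1}(k-1)!\,(\numel-k)!/\numel!$, bounds it by its value at $k=c+1$, and then spends a third union bound over the at most $\numel-c$ choices of $k$; the resulting factor $\numel-c$ cancels against a $1/(\numel-c)$ in the per-$k$ bound, recovering $K^2 \numel \alpha^c$. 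You instead observe that the two defining conditions of $\Event{c}$ jointly imply the weaker, cleaner event that the first $c$ entries of $\pi_2$ all lie in $B = \SET{\pi_1(1),\ldots,\pi_1(j)}$, whose probability is directly $\prod_{i=0}^{c-1}\frac{j-i}{\numel-i} \leq (j/\numel)^c \leq \alpha^c$ (and is zero when $j < c$, so the bound holds in all cases). This monotone relaxation eliminates both the exact counting and the extra union bound over $k$, yielding a shorter and more transparent argument at no cost in the final bound. Your explicit remark that it suffices to take the union bound over pairs drawn from $\freqpi_0$ rather than $\freqpi$ --- since $\freqpi \subseteq \freqpi_0$, the randomness lies entirely in the draw of $\freqpi_0$, and the adversary's subsequent selection cannot increase the probability --- addresses a subtlety the paper leaves implicit, and is worth keeping.
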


Because $\Event{c}$ can never happen for $c > \numel$, Lemma~\ref{lemma:upperboundcover} is of interest only in the regime when 
$\log_{1/\alpha} \left(\frac{K^2 \numel}{\delta}\right) \leq \numel$, or $K\leq \sqrt{\frac{(1/\alpha)^\numel \delta}{\numel}}$. Otherwise, the lemma is trivial since the probability of the event $\Event{c}$  is $0$.

Next, we show that having computed the graph correctly up to the previous level is enough to ensure that \AlgIE returns accurate estimates with high probability --- this forms the key of the inductive correctness proof of \AlgDAG. Furthermore, when \Event{\probboundc} happens, all the set cover solutions in \AlgIE are small enough that the number of samples stays polynomial.

\begin{restatable}{lemma}{ietheorem}
\label{lemma:ie-proof}
Let $j \in \SET{0,1,\ldots,\numitems-1}$ be a level, and assume that {$\hatV_j = V^F_j$ and $\hatE_j = E^F_j$}. 
Let $\hatedgeprob{A}{z}$ be the estimated edge probability returned by \AlgIE (Algorithm~\ref{alg:ie}), called with parameters $\epsilon$ and $\delta$. Then, $\hatedgeprob{A}{z}$ satisfies
\begin{equation*}
 \left| \hatedgeprob{A}{z} - \edgeprob{A}{z} \right| \leq \epsilon + \raresum
\end{equation*}
with probability at least $1-\delta$.

Furthermore, if \Event{c} did not happen, then the number of consumer queries and the total computation time are upper-bounded by $O\left(\frac{2^{2c} (c + \log (1/\delta))}{\epsilon^2}\right)$.
\end{restatable}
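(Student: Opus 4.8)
The plan is to split the error $|\hatedgeprob{A}{z} - \edgeprob{A}{z}|$ into a deterministic \emph{bias} term, arising because \AlgIE\ only covers the frequent prefixes $\familysetnodeprealg$ rather than the full family $\familysetnodepre(A,z)$, and a \emph{sampling} term, arising because it uses empirical frequencies $\hatq_z$ in place of the exact $q_z$. The starting point is to combine Lemma~\ref{lem:inter_2}, namely $\edgeprob{A}{z} = q_z(\universe\setminus A) - \sum_{A'\subsetneq A}\edgeprob{A'}{z}$, with Lemma~\ref{lem:ie-lemma}. Let $\familysetnodepre$ denote the maximal subfamily of $\familysetnodepre(A,z)$ that the greedy cover $\coveralg$ covers; since $\coveralg$ covers $\familysetnodeprealg$ by construction, we have $\familysetnodeprealg\subseteq\familysetnodepre$, and Lemma~\ref{lem:ie-lemma} shows that the exact Inclusion-Exclusion sum equals $\sum_{A'\in\familysetnodepre}\edgeprob{A'}{z}$. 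Hence
\[
\edgeprob{A}{z} = q_z(\universe\setminus A) - \sum_{\mathcal{B}\subseteq\coveralg,\,\mathcal{B}\neq\emptyset}(-1)^{|\mathcal{B}|+1}\, q_z\Bigl(\universe\setminus\bigcap_{C\in\mathcal{B}}C\Bigr) - \sum_{A'\in\familysetnodepre(A,z)\setminus\familysetnodepre}\edgeprob{A'}{z}.
\]
Subtracting this from the definition of $\hatedgeprob{A}{z}$ in Algorithm~\ref{alg:ie} writes $\hatedgeprob{A}{z}-\edgeprob{A}{z}$ as a sum of per-assortment sampling errors plus the final bias sum, and the triangle inequality reduces the task to bounding these by $\epsilon$ and $\raresum$, respectively.

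The bias sum is where the inductive hypothesis $\hatV_j = V^F_j$, $\hatE_j = E^F_j$ enters, and I would argue that every $A'\in\familysetnodepre(A,z)\setminus\familysetnodepre$ carries only rare mass. By maximality of $\familysetnodepre$, such an $A'$ is not covered by $\coveralg$, so $A'\notin\familysetnodeprealg$. If some frequent type lay in $\edgetype{A'}{z}$, then both $A'$ and $A'\cup\SET{z}$ would be frequent prefixes of size at most $j$, placing $A'$ in $V^F_j = \hatV_j$ and therefore in $\familysetnodeprealg$ --- a contradiction. Thus $\edgetype{A'}{z}\subseteq\rarepi$ for every such $A'$, and since the sets $\edgetype{A'}{z}$ are disjoint by Proposition~\ref{prop:basic-properties}, the bias sum equals $p\bigl(\bigcup_{A'}\edgetype{A'}{z}\bigr)\leq p(\rarepi)\leq\raresum$.

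For the sampling term, each $\hatq_z$ is an average of $\numsamples$ i.i.d.\ Bernoulli observations, so Hoeffding gives $\Prob{|\hatq_z - q_z| > t}\leq 2e^{-2\numsamples t^2}$. There are exactly $2^{|\coveralg|}$ distinct assortments ($\universe\setminus A$ together with one per non-empty $\mathcal{B}\subseteq\coveralg$), so choosing $t = \epsilon/2^{|\coveralg|}$ and taking a union bound makes the probability that any estimate exceeds its target at most $2^{|\coveralg|+1}e^{-2\numsamples\epsilon^2/2^{2|\coveralg|}}$. Setting this equal to $\delta$ and solving for $\numsamples$ reproduces exactly the sample size $\numsamples$ prescribed in Algorithm~\ref{alg:ie}; on the complementary event, the $2^{|\coveralg|}$ sampling errors each being at most $\epsilon/2^{|\coveralg|}$ sum to at most $\epsilon$. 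Combined with the bias bound, this yields $|\hatedgeprob{A}{z}-\edgeprob{A}{z}|\leq\epsilon+\raresum$ with probability at least $1-\delta$.

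Finally, for the query and running-time bound under $\neg\Event{c}$, the key step is $|\coveralg|\leq c$. When $\Event{c}$ fails, every $A'\in\familysetnodeprealg$ must place $z$ within the first $c$ positions of its realizing frequent type (otherwise $A'$, together with the frequent type realizing $A$, would witness $\Event{c}$), so $|A'|\leq c-1$. Since \greedymincover\ removes a fresh element of $A$ in each round and, once $c$ elements have been removed, no set of size at most $c-1$ can remain uncovered, greedy halts within $c$ rounds, i.e.\ $|\coveralg|\leq c$. The total number of consumer queries is then $2^{|\coveralg|}\cdot\numsamples$, which upon substituting $|\coveralg|\le c$ and the value of $\numsamples$ gives the stated $2^{O(c)}\,(c+\log(1/\delta))/\epsilon^2$ bound, and the computation (forming the Inclusion-Exclusion terms and running greedy) is of the same order. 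I expect the main obstacle to be the bias analysis: correctly invoking the maximality clause of Lemma~\ref{lem:ie-lemma} together with $\hatV_j=V^F_j$ to certify that the uncovered prefixes carry \emph{only} rare mass is delicate, as is ensuring the reference frequent type needed to connect $\neg\Event{c}$ to the cover size exists even for edges $(A, A\cup\SET{z})$ that are not themselves frequent.
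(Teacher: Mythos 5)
Your proposal is correct and follows essentially the same route as the paper's own proof: the same decomposition into a bias term (bounded by $\raresum$ via Lemma~\ref{lem:ie-lemma} applied to the maximal family $\familysetnodepre$ covered by $\coveralg$, whose uncovered prefixes carry only rare mass) plus a sampling term (Hoeffding at accuracy $\epsilon/2^{|\coveralg|}$ with a union bound over the $2^{|\coveralg|}$ assortments, reproducing the algorithm's choice of $\numsamples$), and the same fresh-element greedy argument giving $|\coveralg|\leq c$ under $\neg\Event{c}$. The obstacle you flag at the end --- exhibiting a frequent reference type that places $z$ early enough when the edge $(A, A\cup\SET{z})$ is not itself frequent --- is treated no more rigorously in the paper, whose proof simply asserts the existence of a frequent type having all of $A$ followed directly by $z$, so your proposal matches the paper's argument in both its substance and its weak point.
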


\begin{extraproof}{Theorem~\ref{thm:main}}
We are now ready to prove Theorem~\ref{thm:main}. 
We will prove by induction on the level $j$ that with probability 
at least $(1-\frac{\delta}{\numitems})^j$, 
the following four all hold:
\begin{align*}
\hatV_j & = V^F_{\numitems} 
& \hatE_j & = E^F_{\numitems} 
& 
\max_{A \in V_j} \left|\hatp(A) - p(A)\right| & \leq \epsilon + \numitems \raresum
& 
\max_{A \in V_j, A \cup \SET{z} \in V_j} \left| \hatedgeprob{A}{z} - \edgeprob{A}{z} \right| \leq \epsilon/\numitems + \raresum.
\end{align*}

The first part of the theorem then follows by setting $j = \numitems = \alpha \numel$, and noting that 
$(1-\frac{\delta}{\numitems})^{\numitems} \geq 1-{\delta}$.

We now complete the induction proof.
The base case $j=0$ holds because $\hatV_0 = \SET{\emptyset} = V_0^F$ and $\hatE_0 = \emptyset = E_0^F$ hold deterministically.
For the induction step, consider a $j < \numitems$.
By induction hypothesis, $\hatV_j = V_j^F$ and $\hatE_j = E_j^F$ with probability at least $(1-\frac{\delta}{\numitems})^j$ (and the estimates of $\bm{p}$ and $\bm{e}$ up to level $j$ are accurate to within an additive $\epsilon + \numitems \raresum$ and $\epsilon + \raresum$, respectively). We condition on this case.

To obtain $\hatG_{j+1}$ from $\hatG_j$, Algorithm~\ref{alg:all} goes through each node $A$ on level $j$ of $\hatG_j$ and all candidate items $z \notin A$, and uses \AlgIE to compute an estimate $\hatedgeprob{A}{z}$ of the true value ${\edgeprob{A}{z}}$. Because $\hatV_j = V^F_j$ and $\hatE_j = E^F_j$, Lemma~\ref{lemma:ie-proof} implies that for each of these calls (characterized by the choice of $z$), \AlgIE (which is called by \AlgDAG with parameters $\epsilon' = \min(\epsilon/\numitems, \kappa/4)$ and $\delta' = \frac{\delta}{\alpha \numel^2 K}$) returns an estimate $\hatedgeprob{A}{z}$ that is at most 
$\epsilon' + {\raresum} 
= \min(\epsilon/\numitems, \kappa/4) + \raresum$ far from the true value $\edgeprob{A}{z}$, with probability at least $1-\delta' = 1-\frac{\delta}{\alpha \numel^2 K}$. 

Because there are at most $K$ frequent types, and the nodes on level $j$ of $\hatG_j$ correspond to disjoint sets of types, there are at most $K$ nodes on level $j$. 
For each such node, the algorithm checks at most $\numel$ items $z$ to add. By a union bound over all of the invocations of \AlgIE for these different pairs $(A, z)$, all estimates $\hatedgeprob{A}{z}$ of $\edgeprob{A}{z}$ are simultaneously accurate to within an additive $\epsilon'+\raresum$ with probability at least 
\begin{align*}
\left(1-\frac{\delta}{\alpha \numel^2 K}\right)^{nK} 
\geq 1- (\numel K) \cdot \frac{\delta}{\alpha \numel^2 K}
& = 1-\frac{\delta}{\alpha \numel}
\; = \; 1-\frac{\delta}{\numitems}.
\end{align*}

Under the assumptions of the theorem, $\epsilon' + \raresum \leq \kappa/4 + \raresum < \kappa/2$. In particular, when the estimates are accurate to within $\epsilon' + \raresum$, rare rankings alone can never make the algorithm add a node $A \cup \SET{z}$ to layer $j+1$. 
Conversely, every frequent type has probability at least $\kappa$, so when the estimates are accurate to within $\epsilon' + \raresum$, every frequent type whose first $j+1$ items are ${A} \cup \SET{z}$ has estimated frequency at least $\kappa - (\epsilon' + \raresum) \geq \kappa - (\kappa/4+\raresum) \geq \kappa/2$, hence causes the creation of the node $A \cup \SET{z}$. 
Thus, we have shown that $\hatV_{j+1} = V^F_{j+1}$ and $\hatE_{j+1} = E^F_{j+1}$ whenever all estimates are accurate to within $\epsilon' + \raresum$.

To bound the error in the estimated probabilities, first observe that the bound on $\left| \hatedgeprob{A}{z} - \edgeprob{A}{z} \right|$ follows directly because $\epsilon' + \raresum \leq \epsilon/\numitems + \raresum$. Now, consider a node $A \in V_{j+1}^F$. 
The true combined probability of types who rank all of $A$ ahead of all items not in $A$ is $p(A) = \sum_{z \in A} \edgeprob{A \setminus \SET{z}}{z}$.
\AlgDAG, on the other hand, uses estimated probabilities $\hatedgeprob{A \setminus \SET{z}}{z}$ in place of the true probabilities, and furthermore only adds these estimates for elements $z$ such that the set $A \setminus \SET{z} \in V^F_j$ is a frequent type, i.e., it uses
$\hatp(A) = \sum_{z \in A: A \setminus \SET{z} \in V^F_j} \hatedgeprob{A \setminus \SET{z}}{z}$.
Using the triangle inequality along with the upper bound of $\epsilon' + \raresum$ on individual estimation errors, and the fact that $|A| \leq \numitems -1$, we get that the error is at most 
\begin{align*}
|p(A)-\hatp(A)| 
&= \Big|
\sum_{z \in A} \edgeprob{A \setminus \SET{z}}{z}
- \sum_{z \in A: A \setminus \SET{z} \in V^F_j} \hatedgeprob{A \setminus \SET{z}}{z} \Big|
\\ & \leq 
\sum_{z \in A: A \setminus \SET{z} \in V^F_j} 
\left| \edgeprob{A \setminus \SET{z}}{z}
- \hatedgeprob{A \setminus \SET{z}}{z} \right|
+ \sum_{z \in A: A \setminus \SET{z} \notin V^F_j} 
\edgeprob{A \setminus \SET{z}}{z}
\\ & \leq 
\sum_{z \in A: A \setminus \SET{z} \in V^F_j} 
\left( \min(\epsilon/\numitems, \kappa/4) + \raresum \right)
+ \raresum
\\ & \leq \numitems \cdot \frac{\epsilon}{\numitems} + 
{(\numitems-1) \cdot \raresum + \raresum}
\\ & = \epsilon + \numitems \cdot \raresum.
\end{align*}

Here, the inequality on the third line used the fact that the last sum adds probabilities that can only arise from disjoint sets of rare types.

Finally, by induction hypothesis, the event we conditioned on (that $\hatV_j = V_j^F$ and $\hatE_j = E_j^F$ and the estimates up to level $j$ are accurate to within an additive $\epsilon + \numitems \raresum$) had probability at least $(1-\frac{\delta}{\numitems})^j$, and conditioned on this event, level $j+1$ is correct (and accurate to within the claimed additive errors) with probability at least $(1-\frac{\delta}{\numitems})$. Hence, the entire inferred DAG representation up to level $j+1$ is accurate with probability at least $(1-\frac{\delta}{\numitems})^{j+1}$, completing the induction step, and thus the proof of the first part of the theorem.

\medskip

We now prove the second part of the theorem, bounding the number of queries and the computation time under the random model. 
Define $c = \left \lceil \log_{1/\alpha} \left( \frac{K^2 \numel}{\delta} \right) \right \rceil$. 
By Lemma~\ref{lemma:upperboundcover}, with probability at least $1-\delta$, the event $\Event{c}$ did not happen. By a union bound with the first part of the proof, we obtain that with probability at least $1-2\delta$, $G^F_{\numitems}$ is correctly reconstructed, all probability estimates are accurate to within the claimed additive errors, and the event \Event{c} did not happen.

By the second part of Lemma~\ref{lemma:ie-proof}, each invocation of \AlgIE results in $O\left(\frac{K^{4\nu}\cdot n^{2\nu}\cdot\log(\numel K/\delta)}{\delta^{2\nu}\cdot\min(\epsilon,\kappa)^2}\right)$
consumer queries and computation. Each frequent type contributes at most one distinct node $A$ for each level $j$, and for each such node $A$, \AlgDAG considers all items $z \notin A$ for addition, and calls \AlgIE. Thus, the total number of calls to \AlgIE is at most $K \numel \numitems$. Substituting, bounding that $\numitems \leq \numel$, and writing the constant $\cons = \log_{\alpha} (1/2) > 0$, we obtain that the number of consumer queries is at most
\begin{align*}
O\left(K \numel \numitems \cdot 
\frac{2^{2c} (c + \log (2\alpha \numel^2 K/\delta))}{ \min(\epsilon/\numitems,\kappa/4)^2}\right)
& = O\left( \frac{K^{1+4\cons} \cdot \numel^{4+2\cons} \cdot 
\log (\numel K/\delta)}{\delta^{2\cons} \cdot \min(\epsilon,\kappa)^2}\right),
\end{align*}
which is polynomial. Since the computation is dominated by the queries, the same bound applies to the computation time.
\end{extraproof}

\section{Proof of Lemma~\ref{lemma:upperboundcover}}
\label{sec:upperboundcoverproof}
In this section, we prove Lemma~\ref{lemma:upperboundcover}, which is restated below.

\maxcovering*

\begin{proof}
We want to bound the probability that there are two distinct types $\pi_1 \neq \pi_2 \in \Pi^F$ and a position $j \leq \numitems$ such that the item $z$ which is in position $j+1$ in $\pi_1$ is in position $c+1$ or later in $\pi_2$, and the set of items preceding $z$ in $\pi_2$ is a subset of the first $j$ items in $\pi_1$.

For now, focus on two types and a fixed position $j \leq \numitems$, with $\pi_1$ already drawn (and thus defining $z$), and the uniformly random draw defining $\pi_2$. Let \Event{c,k} (for $k \geq c+1$) denote the event that $z$ is in position $k$ in $\pi_2$, and the first $k-1$ items of $\pi_2$ are all among the first $j$ items of $\pi_1$.

There are $\binom{j}{k-1}$ ways to pick the first $k-1$ items of $\pi_2$ from the first $j$ items of $\pi_1$, $(k-1)!$ ways to order them, and $(\numel-k)!$ ways to order the items after position $k$. Since there are $\numel!$ total rankings, the probability of \Event{c,k} is
\begin{align*}
\Prob{\Event{c,k}}
& = \frac{\binom{j}{k-1} \cdot (k-1)! \cdot (\numel-k)!}{\numel!}
\; = \; \frac{j! \cdot (\numel-k)!}{(j+1-k)! \cdot \numel!}
\; \leq \; \frac{j! \cdot (\numel-(c+1))!}{(j-c)! \cdot \numel!},
\end{align*}
where the inequality holds because the probability is monotone decreasing in $k$ and thus maximized when $k$ is as small as possible, i.e., $k=c+1$.
Next, we bound
\[
\frac{j! \cdot (\numel-(c+1))!}{(j-c)! \cdot \numel!}
\; = \;
\frac{1}{\numel-c} \cdot
\prod_{k=0}^{c-1} \frac{j-k}{\numel-k}
\; \leq \; \frac{1}{\numel-c} \cdot 
(j/\numel)^c
\; \leq \;
\frac{1}{\numel-c} \cdot 
(\numitems/\numel)^c,
\]
because $(j-k)/(\numel-k)$ is monotone decreasing in $k$, so it is maximized at $k=0$, and $j \leq \numitems$.
Now, taking a union bound over all choices of $k=c+1, \ldots, j$ (of which there are at most $\numel-c$) as well as all choices of $j \leq \numitems \leq \numel$ and ordered pairs of types (of which there are at most $K(K-1) \leq K^2$), we obtain the bound
\begin{align*}
  \Prob{\Event{\probboundc}}
  & \leq
K^2 \cdot \numel \cdot (\numel-c) \cdot
\frac{1}{\numel-c} \cdot 
(\numitems/\numel)^c
\; = \;  
K^2 \cdot \numel \cdot \alpha^c.
\end{align*}

Finally, substituting the lower bound on $c$, we obtain that
\begin{align*}
  \Prob{\Event{\probboundc}}
  & \leq K^2 \cdot \numel \cdot \alpha^{c}
  \; \leq \; K^2 \cdot \numel \cdot \frac{\delta}{K^2 \cdot \numel}
  \; = \; \delta,
\end{align*}
completing the proof.
\end{proof}


\section{Proof of Lemma~\ref{lemma:ie-proof}}
\label{sec:IElemmaproof}

In this section, we complete the proof of Lemma~\ref{lemma:ie-proof}, which is restated below for convenience.

\ietheorem*

\begin{extraproof}{Lemma~\ref{lemma:ie-proof}}
First, by assumption of the lemma, we have that $\hatV_j = V^F_j$ and $\hatE_j = E^F_j$, which implies that $\familysetnodeprealg = \familysetnodepreF(A,z)$. is the set of all prefixes $A' \subsetneq A$ such that at least one frequent type has $A'$ in the first $|A'|$ positions, followed by $z$.
Recall that \AlgIE computes an approximately smallest set cover $\mathcal{C}$ of $\familysetnodeprealg = \familysetnodepreF(A,z)$ using sets in $\familyminus(A)$; specifically, this means that $\mathcal{C} \subseteq \familyminus(A)$ and that for each set $A' \in \familysetnodepreF(A,z)$, there exists a set $C \in \mathcal{C}$ with $A' \subseteq C$.

By Lemma~\ref{lem:inter_2}, the true probability of having a prefix of $A$ followed by $z$ is
\begin{align*}
\edgeprob{A}{z}
    & = q_z(\universe \setminus A) 
- \sum_{A' \subsetneq A} \edgeprob{A'}{z}.
\end{align*}

Let $\familysetnodepre \supseteq \familysetnodepreF(A,z), \familysetnodepre \subseteq \familysetnodepre(A,z)$ be maximal such that $\mathcal{C}$ is a set cover of $\familysetnodepre$.
Then, by Lemma~\ref{lem:ie-lemma}, applied to $\familysetnodepre$, we have that
\[ 
  \sum_{A' \in \familysetnodepre} \edgeprob{A'}{z}      
  = \sum_{\mathcal{B} \subseteq \mathcal{C}, \mathcal{B} \neq \emptyset} (-1)^{|\mathcal{B}| + 1} \cdot q_z(\universe \setminus \bigcap_{C \in \mathcal{B}} C).
\]
Substituting this identity into the previous identity for $\edgeprob{A}{z}$, we obtain that
\begin{align*}
\Big| \edgeprob{A}{z}
- \Big( q_z(\universe \setminus A) 
- \sum_{\mathcal{B} \subseteq \mathcal{C}, \mathcal{B} \neq \emptyset} (-1)^{|\mathcal{B}| + 1} \cdot q_z(\universe \setminus \bigcap_{C \in \mathcal{B}} C) \Big) \Big|
& =
\sum_{A' \subsetneq A, A' \notin \familysetnodepre} \edgeprob{A'}{z}
\\ & = p \Big(\bigcup_{A' \subsetneq A, A' \notin \familysetnodepre} \edgetype{A'}{z} \Big)
\\ & \leq \raresum.
\end{align*}
Here, the second step used the disjointness of the $A'$ from Proposition~\ref{prop:basic-properties}, and the final step used that every type $\pi \in \edgetype{A'}{z}$ with $A' \subsetneq A, A' \notin \familysetnodepre$ must be a rare type, because $\familysetnodepre \supseteq \familysetnodepre^F(A,z)$.

\AlgIE, on the other hand, uses the estimated $\hatq_z$ values to compute an estimate of the edge probability   
\begin{align*}
    \hatedgeprob{A}{z} 
   & = \hatq_z(\universe \setminus A) 
- \sum_{\mathcal{B} \subseteq \mathcal{C}, \mathcal{B} \neq \emptyset} (-1)^{|\mathcal{B}| + 1} \cdot \hatq_z(\universe \setminus \bigcap_{C \in \mathcal{B}} C).
\end{align*}

Thus, the absolute estimation error is 
\begin{align*}
\left| \edgeprob{A}{z}
    -  \hatedgeprob{A}{z} \right|
& = \Big| 
\edgeprob{A}{z}
- \Big(q_z(\universe \setminus A) 
- \sum_{\mathcal{B} \subseteq \mathcal{C}, \mathcal{B} \neq \emptyset} (-1)^{|\mathcal{B}| + 1} \cdot q_z(\universe \setminus \bigcap_{C \in \mathcal{B}} C) \Big)
\\ & \quad + \Big(q_z(\universe \setminus A) 
- \sum_{\mathcal{B} \subseteq \mathcal{C}, \mathcal{B} \neq \emptyset} (-1)^{|\mathcal{B}| + 1} \cdot q_z(\universe \setminus \bigcap_{C \in \mathcal{B}} C) \Big)
\\ & \quad - \Big(
   \hatq_z(\universe \setminus A) 
    - \sum_{\mathcal{B} \subseteq \mathcal{C}, \mathcal{B} \neq \emptyset} (-1)^{|\mathcal{B}| + 1} \cdot 
     \hatq_z(\universe \setminus \bigcap_{C \in \mathcal{B}} C) \Big) \Big|
\\ & \leq         
  \Big| \edgeprob{A}{z}
     - \Big(q_z(\universe \setminus A) 
     - \sum_{\mathcal{B} \subseteq \mathcal{C}, \mathcal{B} \neq \emptyset} (-1)^{|\mathcal{B}| + 1} \cdot q_z(\universe \setminus \bigcap_{C \in \mathcal{B}} C) \Big) \Big|
\\ & \quad + \left| q_z(\universe \setminus A) 
       - \hatq_z(\universe \setminus A) \right|
   + \sum_{\mathcal{B} \subseteq \mathcal{C}, \mathcal{B} \neq \emptyset} 
       \left| q_z(\universe \setminus \bigcap_{C \in \mathcal{B}} C)
       - \hatq_z(\universe \setminus \bigcap_{C \in \mathcal{B}} C) \right|
\\ & \leq         
  \raresum
  + \left| q_z(\universe \setminus A) 
       - \hatq_z(\universe \setminus A) \right|
+ \sum_{\mathcal{B} \subseteq \mathcal{C}, \mathcal{B} \neq \emptyset} 
       \Big| q_z(\universe \setminus \bigcap_{C \in \mathcal{B}} C)
       - \hatq_z(\universe \setminus \bigcap_{C \in \mathcal{B}} C) \Big|,
\end{align*}
where the first inequality is based on applying the triangle inequality first at the outer level, then pulling it through the sum.

The key step for the analysis is to upper-bound the differences 
$|q_z(\universe \setminus S) - \hatq_z(\universe \setminus S)|$,
where either $S=A$ or $S = \bigcap_{C \in \mathcal{B}} C$ for some $\mathcal{B} \subseteq \mathcal{C}$.
We do this using Hoeffding's Inequality (Lemma~\ref{lemma:Hoeffding}). 
Specifically, $\hatq_z(\universe \setminus S)$ is the average of at least
\[ 
\numsamples = \left\lceil \frac{2^{2|\mathcal{C}|-1} \cdot (\ln(1/\delta) + (|\mathcal{C}|+1) \cdot \ln(2))}{\epsilon^2} \right\rceil 
\]
i.i.d.~Bernoulli random variables, with probability $q_z(\universe \setminus S)$ of taking the value 1. Hence, by Hoeffding's Inequality for $\numsamples$ Bernoulli random variables with mean $q_z(\universe \setminus S)$, and by setting  $\tau = \frac{\epsilon}{2^{|\mathcal{C}|}}$, we have
\begin{align*}
\Prob{|q_z(\universe \setminus S) - \hatq_z(\universe \setminus S)| > \tau}
& \leq 2 \exp(-2\tau^2 \numsamples)
\\ & = 2 \exp\left(-2\left(\frac{\epsilon}{2^{|\mathcal{C}|}}\right)^2 \left\lceil \frac{2^{2|\mathcal{C}|-1} \cdot (\ln(1/\delta) + (|\mathcal{C}|+1) \cdot \ln(2))}{\epsilon^2} \right\rceil  \right)
\\ & \leq
\frac{\delta}{2^{|\mathcal{C}|}}.
\end{align*}

There are $2^{|\mathcal{C}|} - 1$ sets $S$ of the form $S = \bigcap_{C \in \mathcal{B}} C$ (one for each $\mathcal{B} \subseteq \mathcal{C}, \mathcal{B} \neq \emptyset$), plus the set $S = A$. Hence, by a union bound over all such sets $S$, with probability at least 
$1 - 2^{|\mathcal{C}|} \cdot \frac{\delta}{2^{|\mathcal{C}|}} = 1 - \delta$,
all of the estimates satisfy 
$|q_z(\universe \setminus S) - \hatq_z(\universe \setminus S)| \leq \frac{\epsilon}{2^{|\mathcal{C}|}}$. 
In that case, the total estimation error is at most 
\begin{align*}
| \edgeprob{A}{z}
    - \hatedgeprob{A}{z}) |
& \leq \raresum +
    \left| q_z(\universe \setminus A) 
       - \hatq_z(\universe \setminus A) \right|
   + \sum_{\mathcal{B} \subseteq \mathcal{C}, \mathcal{B} \neq \emptyset} 
       \left| q_z(\universe \setminus \bigcap_{C \in \mathcal{B}} C)
       - \hatq_z(\universe \setminus \bigcap_{C \in \mathcal{B}} C) \right|
\\ & \leq \raresum +  2^{|\mathcal{C}|} \cdot 
     \frac{\epsilon}{2^{|\mathcal{C}|}}
\\ & = \raresum + \epsilon.  
\end{align*}

Finally, we prove the second part of the lemma, i.e., the bound on the number of consumer queries (and hence computation, since the inclusion-exclusion computation dominates the computing time as well). We will show that whenever \Event{c} did not happen, $|\mathcal{C}| \leq c$. 
For contradiction, assume that $|\mathcal{C}| > c$, i.e., the greedy min cover algorithm returned a solution of size at least $c+1$. Recall that the greedy algorithm selects subsets of the form $A \setminus \SET{x}$ until all prefixes are covered, and that by the assumption of the lemma that $\hatG_j = G^F_j$, the set of prefixes that the greedy algorithm is trying to cover is exactly $\familysetnodepreF(A,z)$.
Consider the moment when the greedy algorithm has selected $c$ sets $C_i = A \setminus \SET{x_i}$ in its cover. At that point, there is still an uncovered set $A' \in \familysetnodepreF(A,z)$, since the greedy algorithm added at least one more set. Because $A'$ is not contained in any of the $C_i$, it must contain all of the elements $x_i$; in particular, it has size at least $c$. Furthermore, by definition of $\familysetnodepreF(A,z)$, $A'\subseteq A$, and $A'$ is a prefix followed by $z$. Therefore, there exist a frequent type $\pi_1$ which has all of $A$ followed by $z$ and a type $\pi_2$ which has all of $A'$ preceding $z$ directly, where $A'\subseteq A$ and $|A'|\geq c$, implying that $z$ is in position $c+1$ or later in $\pi_2$. This certifies that the event $\Event{c}$ happened, which the assumption of the lemma ruled out.
Substituting the bound $|\mathcal{C}| \leq c$ into the definition of $\numsamples$ gives us the claimed bound.
\end{extraproof}

\begin{lemma}[Hoeffding's Inequality (\cite{10.2307/2282952})]
\label{lemma:Hoeffding}
Let $X_1, X_2, \ldots, X_n$ be independent random variables bounded between $[0,1]$. Then,
\[ \mathbb{P}\left[\left| \sum_i (X_i - \Expect{X_i})\right| > \tau\right] 
\leq 2 \exp(-2\tau^2/n). \]
\end{lemma}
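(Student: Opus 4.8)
The plan is to use the exponential-moment (Chernoff) method, the standard route to tail bounds of this shape. Write $S = \sum_i X_i$ and $\mu = \Expect{S} = \sum_i \Expect{X_i}$, and set $Y_i = X_i - \Expect{X_i}$, so that each $Y_i$ has mean zero and is supported on an interval of length at most $1$ (since $X_i \in [0,1]$). I would first bound the upper tail $\mathbb{P}\left[S - \mu > \tau\right]$ and then recover the two-sided statement by a symmetry argument.

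For the upper tail, fix any $s > 0$, apply Markov's inequality to the nonnegative variable $e^{s(S-\mu)}$, and factor the moment generating function using independence:
\[
\mathbb{P}\left[S - \mu > \tau\right] \;\leq\; e^{-s\tau}\,\Expect{e^{s(S-\mu)}} \;=\; e^{-s\tau}\prod_{i}\Expect{e^{s Y_i}}.
\]
The crux is bounding each factor $\Expect{e^{s Y_i}}$; this is \emph{Hoeffding's lemma}: for a mean-zero variable $Y$ supported on an interval of length $\ell$, one has $\Expect{e^{sY}} \leq e^{s^2 \ell^2 / 8}$. Granting this with $\ell = 1$ gives $\Expect{e^{s(S-\mu)}} \leq e^{s^2 n/8}$, hence $\mathbb{P}\left[S-\mu>\tau\right] \leq e^{-s\tau + s^2 n/8}$. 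Minimizing the exponent over $s>0$ (the optimizer is $s = 4\tau/n$) yields $\mathbb{P}\left[S-\mu>\tau\right] \leq e^{-2\tau^2/n}$. Running the identical argument on the variables $-X_i$ bounds the lower tail $\mathbb{P}\left[S-\mu<-\tau\right]$ by the same quantity, and a union bound over the two tails introduces the factor of $2$ in the statement.

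The only nonroutine ingredient, and thus the main obstacle, is Hoeffding's lemma itself; everything else is the mechanical Chernoff optimization. I would prove it through the cumulant generating function $\psi(s) = \log \Expect{e^{sY}}$. One verifies that $\psi(0) = 0$ and $\psi'(0) = \Expect{Y} = 0$, and that $\psi''(s)$ equals the variance of $Y$ under the exponentially tilted law with density proportional to $e^{sy}$. Because this tilted law is still supported on an interval of length $\ell$, its variance is at most $\ell^2/4$, the largest possible variance of any distribution on such an interval (attained by the two-point law placing equal mass at the endpoints). A second-order Taylor expansion of $\psi$ about $0$, using $\psi(0)=\psi'(0)=0$ together with the uniform bound $\psi'' \leq \ell^2/4$, gives $\psi(s) \leq s^2 \ell^2 / 8$, which is precisely the desired moment bound. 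The two points needing care are the identification of $\psi''$ with the tilted variance and the extremal variance bound on a bounded interval; an alternative that sidesteps the tilting is to bound $e^{sy}$ by the chord (affine interpolant) joining its values at the interval endpoints and take expectations, which reduces the lemma to an elementary optimization in $s$.
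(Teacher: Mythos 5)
Your proof is correct, but there is nothing in the paper to compare it against: the paper states this lemma as an imported classical result, citing Hoeffding (1963) rather than proving it. The argument you give --- Markov's inequality on the exponential moment, factorization by independence, Hoeffding's lemma $\Expect{e^{sY}} \leq e^{s^2\ell^2/8}$ for mean-zero $Y$ on an interval of length $\ell$, optimization at $s = 4\tau/n$, and a union bound over the two tails --- is precisely the standard proof from that reference, and all the steps (including the tilted-variance or chord proof of Hoeffding's lemma) check out.
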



\section{{Additional Details on Experiments}}

{\subsection{Benchmark Implementation}}\label{sec:benchmark}

For both synthetic and sushi  data sets, we do the following for the benchmark: (1) solve the primal LP fully (without constraint sampling) using the \texttt{Gurobi} solver, instead of solving the dual approximately (as done in \cite{farias2013nonparametric}), and (2) impose sparsity on $\bm{\lambda}$, the probability distribution of types. These give the primal LP algorithm an extra benefit{:} solving the primal fully permits searching over a bigger space, and the sparsity constraint helps the LP deal with the indistinguishability problem. Furthermore, we relax the LP constraint to overcome infeasibility, by incorporating $1/\sqrt{\numsamples}$ standard error for the empirically calculated probability that each item in the assortment is chosen, where $\numsamples$ is the number of samples used per offered assortment in \AlgDAG and the benchmark. So instead of having the constraint $\bm{y} = A\bm{\lambda}$, we have 
$\hat{\bm{y}}-1/\sqrt{\numsamples} \cdot \bm{1} \leq A\bm{\lambda}\leq \hat{\bm{y}}+1/\sqrt{\numsamples} \cdot \bm{1}$. We do so because the primal LP often does not yield any solutions {under} the constraint $\hat{\bm{y}} = A\bm{\lambda}$ where $\hat{\bm{y}}$ is empirically calculated using the same number of samples as \AlgDAG. This is mainly because the system of equations can be overspecified, so if we use an estimated version of $\ybf$ that might have some error, two or more systems of equations can result in different values, leading to a contradiction.

{\subsection{Additional Experimental Results}}
\label{sec:large-exp}

As a robustness check on the performance of \AlgDAG, for the synthetic setting, we vary the number of types and the number of items. Specifically, we consider a setting with $15$ frequent types and $20$ items. The total probability $\raresum$ of noise is still chosen from the set $\{0, 0.001, 0.01, 0.05\}$, and $20$ noisy types are generated when $\raresum$ is greater than $0$. The probabilities of frequent and noisy types are generated as those for the first synthetic setting (see Section~\ref{sec:experiment}), i.e., according to a symmetric Dirichlet distribution with coefficient of variation 0.1, multiplied by $(1-\raresum)$ and $\raresum$, respectively. The goal is to estimate the top $n_0 \in \SET{5, 10}$ positions, and we choose $\kappa$ and $\epsilon$ to be $0.01$ and $0.001$, respectively. Results are then averaged over $100$ generated instances.

\begin{table}[htb]
    \centering
    \begin{tabular}{l|c|c|c|c|c|c|c|c}
    \multirow{2}{*}{} & \multicolumn{2}{c|}{Noise $= 0$} & \multicolumn{2}{c|}{Noise $= 0.001$} & \multicolumn{2}{c|}{Noise $= 0.01$} & \multicolumn{2}{c}{Noise $= 0.05$} \\
    \cline{2-9}
    & $\numitems = 5$ & $\numitems = 10$ & $\numitems = 5$ & $\numitems = 10$ & $\numitems = 5$ & $\numitems = 10$ & $\numitems = 5$ & $\numitems = 10$\\
    \hline
    max.~prob. &{$0.0099\pm$} &{$0.0102\pm$} &{$0.0114\pm$} &{$0.0159\pm$} &{$0.0122\pm$} &{$0.0157\pm$} &{$0.0132\pm$} &{$0.0160\pm$}\\
    differences&$0.00045$&$0.00049$&$0.00048$&$0.00050$&$0.00036$&$0.00046$&$0.00042$&$0.00059$\\
    &&&&&&&&\\
    \% of different &{$0.89\%\pm$} &{$1.01\%\pm$} &{$5.10\%\pm$} &{$7.68\%\pm$} &{$6.47\%\pm$} &{$8.97\%\pm$} &{$7.94\%\pm$} &{$9.98\%\pm$}\\
    vertices&$0.272\%$&$0.242\%$&$0.376\%$&$0.388\%$&$0.436\%$&$0.370\%$&$0.580\%$&$0.502\%$\\
    \end{tabular}
    \vspace{0.25cm}
    \caption{The maximum probability discrepancy over all nodes, between the DAG returned by \AlgDAG and the true DAG, and the percentage of vertices that are added wrongly or not inferred by \AlgDAG, averaged over $100$ instances run on synthetic data with $15$ frequent types and $20$ items.}
    \label{tab:resbig}
\end{table}

The experimental results for this setting are summarized in Table~\ref{tab:resbig}. The maximum  probability discrepancy  is $0.0160$, which is obtained when the top $10$ positions are being estimated. The value of this maximum probability discrepancy is comparable with that in the $5$ frequent types setting. Similarly, the percentages of different vertices over different frequent cases are at most $9.98\%$, which is comparable with the result for the cases with $5$ frequent types. The benchmark is not included in this setting because with  a large number of items, the benchmark is {too} computationally inefficient.
\end{APPENDIX}

\end{document}